\RequirePackage{silence}
\WarningFilter{remreset}{The remreset package}

\documentclass{elsarticle}

\pdfminorversion=7

\usepackage{lineno,hyperref}

\usepackage{graphicx}
\usepackage{enumitem}
\usepackage{pgf,tikz}
\usepackage[ruled,vlined]{algorithm2e} 
\SetAlFnt{\small}
\SetAlCapFnt{\small}
\SetAlCapNameFnt{\small}
\SetAlCapHSkip{0pt}
\IncMargin{-\parindent}

\usepackage{amsmath}
\usepackage{amsthm}
\usepackage{amsfonts}
\usepackage{multirow}
\usepackage{lscape}
\usepackage{afterpage}
\usepackage{subcaption}
\usepackage{soul}
\usepackage{todonotes}
\usepackage[normalem]{ulem}
\usepackage{comment}
\usepackage{lmodern}
\usepackage{thmtools}

\renewcommand{\sout}[1]{} 

\allowdisplaybreaks

\newtheorem{theorem}{Theorem}

\newtheorem{corollary}{Corollary}

\newtheorem{definition}{Definition}
\newtheorem{example}{Example}

\newtheorem{obs}{Observation}

\definecolor{dartmouthgreen}{rgb}{0.05, 0.5, 0.06}

\DeclareMathOperator{\argmax}{argmax}

\newcommand{\bigtimes}{\text{\LARGE $\times$}}
\newcommand{\NPHard}{$\mathsf{NP}$-hard}
\newcommand{\NP}{$\mathsf{NP}$}
\newcommand{\Poly}{$\mathsf{P}$}
\newcommand{\APX}{$\mathsf{APX}$}

\newcommand{\MoV}{\textsf{MoV}}
\newcommand{\Expec}{\mathbb{E}}

\journal{Journal of \LaTeX\ Templates}

\bibliographystyle{elsarticle-num}

\begin{document}

\begin{frontmatter}

\title{Election Manipulation on Social Networks:\\ Seeding, Edge Removal, Edge Addition}
\tnotetext[mytitlenote]{A preliminary version of this work appeared in~\cite{ElectionManipulationAAAI2020}, which includes some results on edge addition/removal.}

\author{Matteo Castiglioni, Nicola Gatti, Giulia Landriani}
\address{Politecnico di Milano, Italy}

\author{Diodato Ferraioli}
\address{Universit\`a degli Studi di Salerno, Italy }

\begin{abstract}
We focus on the \emph{election manipulation problem} through \emph{social influence}, where a \emph{manipulator} exploits a \emph{social network} to make her most preferred candidate win an election.
Influence is due to information \emph{in favor of} and/or \emph{against} one or multiple candidates, sent by \emph{seeds} and spreading through the network according to the independent cascade model.
We provide a comprehensive study of the election control problem, investigating two forms of manipulations: \emph{seeding} to buy influencers given a social network, and \emph{removing} or \emph{adding edges} in the social network given the seeds and the information sent.
In particular, we study a wide range of cases distinguishing for the number of candidates or the kind of information spread over the network. 
Our main result is positive for democracy, and it shows that the election manipulation problem is \emph{not} affordable in the worst-case except for trivial classes of instances, even when one accepts to approximate the margin of victory.
In the case of seeding, we also show that the manipulation is hard even if the graph is a line and that a large class of algorithms, including most of the approaches recently adopted for social-influence problems, fail to compute a bounded approximation even on elementary networks, as undirected graphs with every node having a degree at most two or directed trees.
%
%
%
In the case of edge removal or addition, our hardness results also apply to the basic case of social influence maximization/minimization. 
In contrast, the hardness of election manipulation holds even when the manipulator has an \emph{unlimited} budget, being allowed to remove or add an arbitrary number of edges.
%
Interestingly, our hardness results for seeding and edge removal/addition still hold in a \emph{reoptimization} variant, where the manipulator already knows an optimal solution to the problem and  computes a new solution once a local modification occurs, \emph{e.g.}, the removal/addition of  a single edge.
\end{abstract}

\end{frontmatter}


\section{Introduction}
Nowadays, social network media are the most used, if not the unique, sources of information. 
This indisputable fact turned out to influence most of our daily actions, and also to have severe effects  on the political life of our countries. 
Indeed, in many of the recent political elections around the world, there has been evidence that false or incomplete news spread through these media influenced the electoral outcome. 
For example, in the recent US presidential election, several studies show that, on average, 92\% of Americans remembered pro-Trump false news, while 23\% of them remembered the pro-Clinton fake news~\cite{allcott2017social,guess2018selective}.  
As another example, automated accounts in Twitter spread a considerable amount of political news to alter the outcome of the 2017 French elections~\cite{ferrara2017disinformation}. 
It also emerged that the fake news, spread over the major social media during the campaign for the 2018 Italian political election, is linked with the content of populist parties that won that election~\cite{alaphilippe2018disinformation,giglietto2018mapping}.

The increasing use of social networks to convey inaccurate and unverified information can lead to severe and undesired consequences, as widespread panic, libelous campaigns, and conspiracies, and it represents a menace for democracy. 
In this scenario, some natural questions are to understand to which extent the spread of (mis)information on social network media may alter the result of a political election and how to mitigate or block it. 
The former problem is known in the  literature as \emph{election control through social influence}, and it has recently been the object of interest of many works in the artificial intelligence community. 
For instance, Sina~\emph{et al.}~study a plurality voting scenario in which the voters can vote iteratively, and they show how to modify the relationship among voters to make the desired candidate to win an election~\cite{sina2015adapting}.
 Auletta~\emph{et al.} study a majority dynamics scenario and show that, in the case of two only candidates, a manipulator controlling the order in which information is disclosed to voters can lead the minority to become a majority~\cite{auletta2015minority,auletta2017information,auletta2017robustness}.
Auletta~\emph{et al.} study a similar adversary in~\cite{auletta2018reasoning}, showing that such a manipulator can lead a bare majority to consensus, but these results do not extend to the case with more than two candidates, as showed in~\cite{AulettaFFG19}.
Bredereck and Elkind study a majority dynamics scenario, showing how selecting seeds from which to diffuse information to manipulate a two-candidate election~\cite{bredereck2017manipulating}.

\subsection{Main Related Works}
Recently, Wilder and Vorobeychik studied a seeding problem in which all the seeds send the same information, either in favor of (\emph{positive}) or against (\emph{negative}) a single candidate, to make that candidate either to win or to lose, respectively, the election~\cite{wilder2018controlling}.
In particular, voters are not strategic, and ranks describe their preferences, and, given any pair of candidates $c,c'$ such that $c$ directly precedes $c'$ in the rank, positive information on $c'$ or negative information on $c$ make $c$ and $c'$ switch.
The diffusion of the information on the social network is described by the independent cascade model~\cite{kempe2003maximizing}.
The authors provide approximation algorithms for plurality voting when the objective function is the maximization of the margin of victory. 
These approximation results also hold when other voting rules and/or other diffusion models are adopted, as showed by Cor\`o \emph{et al.} in~\cite{DBLP:conf/atal/CoroCDP19, CoroCDP19}.
While the works mentioned above assume that the manipulator has complete knowledge about the problem, some recent work also deals with uncertainty on the network~\cite{abs-1905-04694}.

The works by Wilder and Vorobeychik~\cite{wilder2018controlling} and Cor\`o \emph{et al.}~\cite{DBLP:conf/atal/CoroCDP19, CoroCDP19} present some limitations when dealing with elections with more than two candidates. 
A major limitation is assuming that all the seeds send the same information, and this information is on a single candidate.~\footnote{To the best of our knowledge, the spreading of multiple information with the independent cascade model is only studied in scenarios different from election control, \emph{e.g.}, \cite{DBLP:journals/corr/abs-1906-00074}.}
Indeed, spreading simultaneously positive and/or negative information on multiple candidates can be, in some settings, necessary to make the manipulator's candidate win the election, as showed in the following example.
%
\begin{example}
\begin{figure}
	\includegraphics[width=\linewidth]{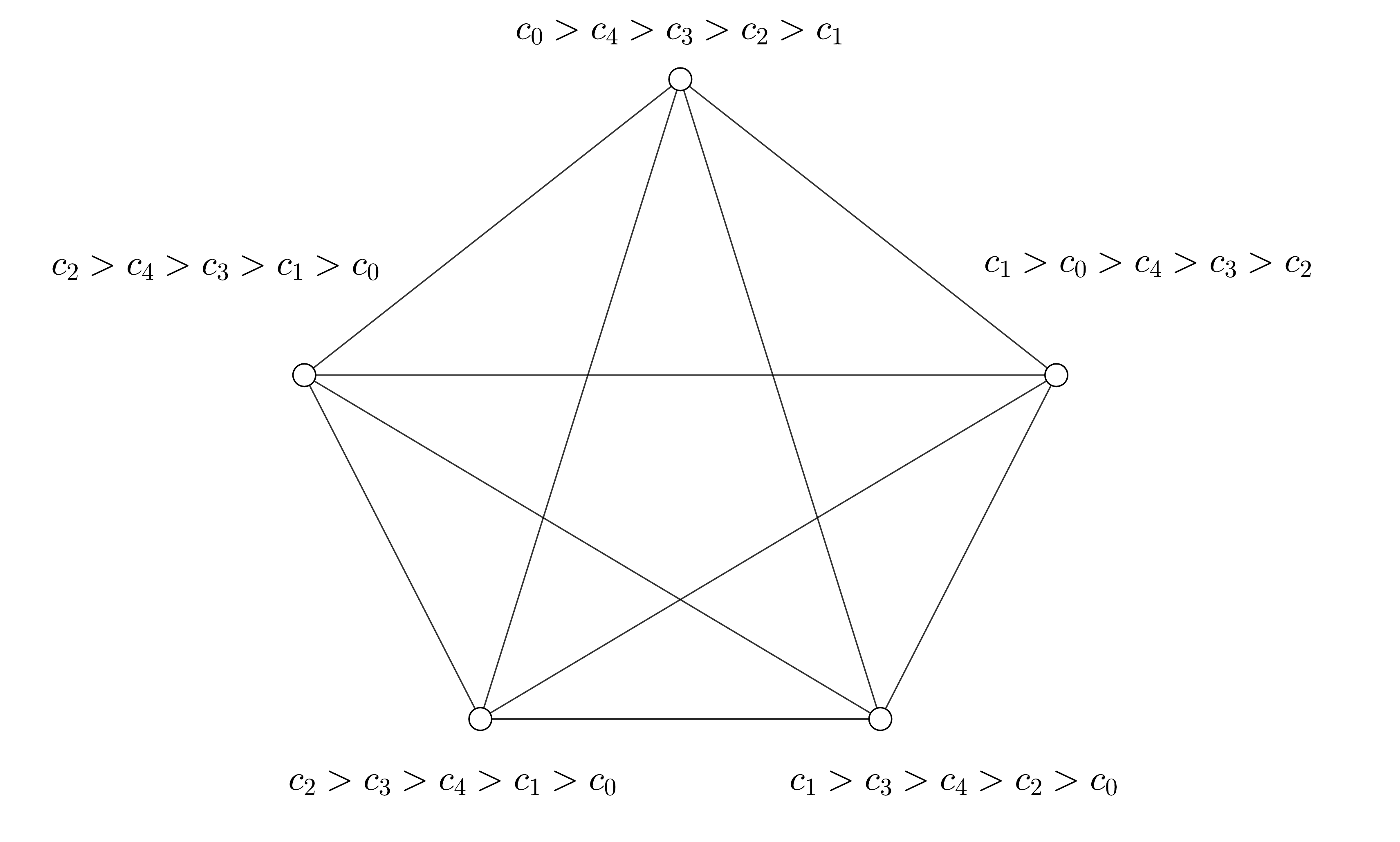}
	\caption{Clique with five voters and five candidates.}
	\label{fig:clique}
\end{figure}
Consider the setting in Figure~\ref{fig:clique}: there are five voters (corresponding to the nodes of the graph), five candidates $c_0, c_1, \ldots, c_4$, in which $c_0$ is the manipulator's candidate, and each voter receives the information spread by her neighbors with probability one. 
%
A candidate gains one position in the rank thanks to positive information on her, looses one position due to negative information on her, and the manipulator has a budget sufficient for seeding two nodes, each sending information on a single candidate.
When only information on a single candidate is sent, then the desired candidate $c_0$ cannot be made to win the election (at most, the election ends with a tie between $c_0$ and another candidate, both taking two votes).
Instead, injecting the network with positive information on $c_0$ and negative information on $c_2$ results in $c_0$ being the only node with two votes, and thus the winner. 
\end{example}
%
Another major limitation is the assumption that seeding is the only action the manipulator can tackle to manipulate the election. 
However, this is not the case when the manipulator is (or collaborate with) the network media manager. In this case, the manipulator can also alter the structure of the network.
In particular, she may indefinitely conceal information exchanged among two  voters that are connected in the social network, or she may reveal information spread by unknown sources (\emph{e.g.}, as sponsored content or through mechanisms as friend suggestions). 
That is, such a manipulator can remove or add edges in the network to obstruct or push the diffusion of information.~\footnote{To the best of our knowledge, the removal or the addition of edges in the network are forms of manipulations studied only for simpler diffusion models, \emph{e.g.}, with two candidates and simple information diffusion dynamics~\cite{bredereck2017manipulating}, and when no information is spread, but voters update their votes in an iterative voting process by effect of selfish voting~\cite{sina2015adapting,AIIA}.}

\subsection{Original Contributions}
In this work, we focus on the election control problem, proposing a more general model than those available in the literature and providing a comprehensive study of the complexity of manipulating the election.

\subsubsection{Model and Motivation}

We extend the model provided in~\cite{wilder2018controlling}, along with two different directions.
First, we assume that the seeds can send different information and that the information sent by every single seed can be simultaneously positive and negative on multiple candidates.
%
%
%
A simple interpretation is that the seeds can share different news articles and that each news article is related to a single candidate.
We name the collection of information sent by every single seed \emph{message}.
As in~\cite{wilder2018controlling}, we model the diffusion of information according to a variant of the independent cascade model, capable of capturing the simultaneous spread of multiple different messages.

The second direction along which our model differs from the previous one is that voters' preferences are modeled by a scoring function that assigns, for every voter, a finite score to each candidate.
Every positive (negative, respectively) information on a candidate received by a voter from each seed increases (decreases, respectively) her score.
The actual score increase or decrease depends on the amount of information sent by the seed and can differ for every specific candidate. 
A simple interpretation is that a single news article can increase or decrease the score of a candidate by one and that every single seed can send multiple different news articles, each related to a specific candidate.
As a result, differently from~\cite{wilder2018controlling}, given two candidates $c,c'$ where $c$ directly precedes $c'$, a single news article in favor of $c'$ or against $c$ does not necessarily make them switch as the difference of their scores can be arbitrary.
On the other side, our model also allows a candidate to gain (lose, respectively) more than one position in the rank of a voter due to a large amount of received positive (negative, respectively) news articles on that candidate, sent by a single seed or by multiple seeds.
Thus, a voter that is \emph{uncertain} on $c_0$ can be modeled with a scoring function in which the difference between the score of the most preferred candidate and that one of $c_0$ is sufficiently small that the manipulator can change the voter's preferences to make $c_0$ be the most preferred.
Conversely, a voter that is \emph{certain} on $c_0$ can be modeled with a scoring function such that the manipulator cannot (\emph{e.g.}, due to a limitation on the news articles the manipulator can inject in the network) either make $c_0$  be the most preferred candidate if, initially, $c_0$ is not the most-preferred, or make $c_0$ not be the most-preferred if instead, initially, $c_0$ is the most preferred.

In the paper, we often refer to a special basic setting, called with \emph{single-news-article messages}, in which all the seeds send the same information, this information is only on a single candidate and induces a score increase/decrease of one. 
We also use the term \emph{unitary score distances} to refer to the case in which the difference in the score of two candidates $c,c'$ where $c$ directly precedes $c'$ is exactly one.
When our model is with single-news-article messages and unitary score distances and the number of candidates is two, it is directly comparable to that one studied in~\cite{wilder2018controlling}. 
Instead, with three or more candidates the models are not comparable, as, differently from our model, in~\cite{wilder2018controlling}, a candidate cannot increase/decrease more than one position in the rank of a voter, even if this receives multiple positive/negative messages as sent by multiple seeds.

\subsubsection{Complexity Results}
We  focus on the maximization of the increase in the margin of victory of the manipulator's candidate $c_0$, as done by Wilder and Vorobeychik in~\cite{wilder2018controlling}, and we provide a comprehensive study of the election manipulation problem when two forms of manipulations are possible:  seeding to buy influencers given a social network, and removing or adding edges in the social network given the seeds and their messages.
The manipulator is subject to budget constraints, expressing the maximum amount of cumulative news articles the seeds can spread over the network or the maximum amount of edges she can remove or add in the network.
In Table~\ref{table:summaryofresults}, we summarize our main original results.

\begin{table}[t]
\renewcommand{\arraystretch}{1.4}
	\resizebox{\columnwidth}{!}{%
		\begin{tabular}{c|c|c|c|c}
			\hline \cline{1-5}\hline\hline\cline{1-5}
			\multicolumn{5}{c}{\bf Seeding} \\ 
			\cline{1-5}
			\textsc{budget} & \multicolumn{2}{c}{\textsc{single-news-article messages}} \vline &  \multicolumn{2}{c}{\textsc{general setting}} \\ 
						& \multicolumn{2}{c}{\textsc{2 candidates}} \vline &  \textsc{2 or more candidates} & \textsc{3 or more candidates} \\	
			 &  \textsc{unitary score distances} & \textsc{arbitrary score distances} & $\delta \leq B$, $\delta$ \textsc{fixed}& $B < \delta$   \\ \hline 
			 limited  &   \APX ~ \cite{wilder2018controlling}& $\notin$ \APX ~ (Thm~\ref{thm:DkS}) & \APX ~ (Thm~\ref{thm:approx})& $\notin$ Exp-\APX ~ (Thm~\ref{thm:inapprox}) \\
			 \hline \cline{1-5}\hline\hline\cline{1-5}
			\multicolumn{5}{c}{\bf Edge Removal} \\ \hline 
			 \textsc{budget}			& \multicolumn{2}{c}{\textsc{single-news-article messages, unitary score distances}} \vline &  \multicolumn{2}{c}{\textsc{arbitrary messages and unitary score distances}} \\  
			 &2 \textsc{candidates}&  3 \textsc{or more candidates} &   \multicolumn{2}{c}{2 \textsc{or more candidates}} \\ \hline 
			 limited  &   $\notin$ \APX ~ (Cor~\ref{cor:EAFixed}) & $\notin$ Exp-\APX ~ (Thm~\ref{thm:EA_single_message}) & \multicolumn{2}{c}{$\notin$ Exp-\APX ~ (Thm~\ref{thm:EA_multiple_messages})}\\\cline{1-5}
			 unlimited  &\Poly ~ (Obs~\ref{obs:polyECEA})~($\dagger$)  & $\notin$ Exp-\APX ~ (Thm~\ref{thm:EA_single_message}) & \multicolumn{2}{c}{$\notin$ Exp-\APX ~ (Thm~\ref{thm:EA_multiple_messages})} \\ 			\hline \cline{1-5}\hline\hline\cline{1-5}
			\multicolumn{5}{c}{\bf Edge Addition} \\ 
			\cline{1-5}
			 \textsc{budget} & \multicolumn{2}{c|}{\textsc{single-news-article messages, unitary score distances}}  &  \multicolumn{2}{c}{\textsc{arbitrary messages, unitary score distances}} \\  
			 & 2 \textsc{candidates}&  3 \textsc{or more candidates} &    \multicolumn{2}{c}{2 \textsc{or more candidates}}\\ \hline 
			 limited  &   $\notin$ \APX ~ (Cor~\ref{cor:ERFixed}) & $\notin$ \APX ~ (Thm~\ref{thm:ER_single_message})& \multicolumn{2}{c}{$\notin$ Exp-\APX ~ (Thm~\ref{thm:ER_multiple_messages})} \\\cline{1-5}
			 unlimited  &\Poly ~ (Obs~\ref{obs:polyECER})~($\dagger$) & $\notin$ \APX ~ (Thm~\ref{thm:ER_single_message}) & \multicolumn{2}{c}{$\notin$ Exp-\APX ~ (Thm~\ref{thm:ER_multiple_messages})} \\ \hline \cline{1-5}\hline\hline\cline{1-5}
	\end{tabular}}
	\caption{Complexity results (previously known in the literature or originally provided in this paper) on the election manipulation problem trough social influence. The case of seeding with unlimited budget is trivial, as discussed in the paper, and therefore omitted. Results marked with ($\dagger$) also hold with arbitrary score distances.}
\label{table:summaryofresults}
\end{table}

In the case of seeding, the problem is trivial when the budget available to the manipulator is unlimited, as the manipulator can make all the nodes be seeds spreading an arbitrarily large number of news articles in favor of $c_0$ and against all the other candidates. 
When instead the budget $B$ is finite, our results depend on the scoring function of voters and the budget available to the manipulator. 
Initially, we observe that the setting with single-news-article messages and two candidates becomes inapproximable within a constant factor as soon as the score distances become strictly larger than one.
In the general setting (\emph{i.e.}, when messages and score distances are arbitrary), the complexity depends on the cost $\delta$ needed to make the most reluctant voter  vote $c_0$.
We prove that whenever $\delta \leq B$, then there is a greedy poly-time algorithm guaranteeing an approximation factor $\rho$ depending on $\delta$. 
A surprisingly sharp \emph{transition phase} occurs, instead, when $B< \delta$. In this case, no poly-time approximation algorithm is possible, unless $\mathsf{P} = \mathsf{NP}$, even when the approximation factor is a function in the size of the problem. 
An interpretation of such a result is that an optimal manipulation can be found in polynomial time only if every single voter is sufficiently uncertain so that her final decision on $c_0$ depends on the received messages. 
Instead, if there are voters for which $c_0$ cannot be made to be the most preferred candidate due to the budget constraint, then manipulation is unaffordable in polynomial time. 
This result poses a severe obstacle to the possibility for a manipulator to successfully alter the outcome of an election, as it is likely that there are voters who never change opinion in real-world scenarios (\emph{e.g.}, the candidates' supporters). 
Even more importantly, we show that this hardness result does not hold merely for worst-case (thus, potentially, knife-edge or rare) instances. Indeed, a large class of algorithms (including most of the approaches recently adopted for social-influence problems) fail to compute an empirically bounded approximation even on elementary networks, as undirected graphs with every node having a degree at most two or directed trees. 
Furthermore, the hardness holds even on simple graphs, proving that maximizing the increase of margin of victory is \NPHard \ even on graphs as lines, and we discuss how our results extend to variants of our model.

In the case of edge removal/addition, the characterization is more intricate. We study both the case with only two candidates and the one with multiple candidates, and, in the latter case, we study both the  subcase with single-news-article messages and the more realistic subcase with arbitrary messages. 
We show that, in any of these cases, the problem of deciding whether a set of edges to remove/add in the network exists to make the desired candidate win is hard even when the score distances are unitary.
Surprisingly, these results hold even if the manipulator has an unlimited budget of edges to remove or add, except for the trivial setting in which there are two candidates and messages are single-news-article.
In this latter case, the optimal solution when the budget is unlimited is to remove all edges, if the messages are against the desired candidate, or to add all possible edges, otherwise. For the remaining cases, we formally prove that it is hard to find a set of edges to remove or add  that causes an increment in the margin of victory of the desired candidate that is a constant (and, in some case, even exponential) approximation of the best possible increment that can be achieved. Our results still hold with acyclic networks.

Incidentally, in order to establish these results, we also provide new results for the basic Influence Optimization problem, that consists in maximizing or minimizing the number of nodes that receive the information spread over the network.~\footnote{In \cite{Sheldon2010}, the problem of adding edges to arbitrary nodes of the networks is studied, proving that the function is not submodular. In \cite{Khalil2014}, two types of graph modification are investigated, adding/removing edges in order to minimize the information diffusion showing that this network structure modification problem has a supermodular objective. Heuristics for the edge removal problem have been studied in \cite{kimura2008,kuhlman2013}.  However, no hardness results are known.} We, indeed, originally prove that the minimization (maximization, respectively) variant of the problem cannot be approximated within any constant factor by removing (adding, respectively) a limited number of edges.~\footnote{For the sake of completeness, we mention that the Influence Optimization problem has been widely investigated when the manipulator makes seeding~\cite{kempe2003maximizing}.}

The hardness results presented in this work are a starting point for shaping the landscape of manipulability of election through social networks. This task is fundamental to understand when and how one must design interventions to reduce the severe effects of the spread of misinformation. Although our results are positive, showing that manipulation is not affordable in the worst case, we believe that the border of manipulability can be further sharpened. We here present a seminal study along this direction, looking at manipulators that face a \emph{repotimization problem}~\cite[Chap. 4]{ausiello2012complexity} and thus answering the question ``is manipulation easier if a solution to the problem for a given instance is already available and a local modification occurs?''. Note that this is very common in the real world, in which the social relationships among voters remain essentially stable between an election and the next one. Surprisingly, we show that all our hardness results are \emph{robust} to the knowledge of solutions in similar settings since they still hold in this reoptimization setting.

\subsection{Structure of the Paper}
The paper is structured as follows. Section~\ref{sec:model} formally introduces the model and the computational problems we study. Section~\ref{sec:seeding} provides our main results on the seeding problem.
Section~\ref{sec:edgeremoval} provides results on edge removal, while Section~\ref{sec:edgeaddition} provides results on edge addition. Section~\ref{sec:reoptimization} discusses the robustness of our hardness results in the case of reoptimization. Finally, Section~\ref{sec:conclusions} concludes the papers and describes future research directions. For the sake of presentation, some proofs are provided in~\ref{sec:appendix}.

\section{Model and Problem Statement}
\label{sec:model}
We have a set of candidates $C=\{c_0,c_1,\dots,c_\ell\}$ and a network of voters, represented as a weighted directed graph $G = (V,E, p)$, where $V$ is the set of voters, $E$ is the set of direct edges, and $p \colon V \times V  \rightarrow [0,1]$ denotes the strength of the potential influence among voters. 
In particular, for each edge $(u,v)$ where $u,v \in V$, $p(u,v)$ returns the strength of the influence of $u$ on $v$.

Each voter $v$ assigns a \emph{score}, by function  $\pi_v:C\rightarrow \mathbb{N}$,  to every candidate $c_i$. 
We assume function $\pi_v$ to be injective, thus returning a different score to every candidate, formally, $\pi_v(i) \neq \pi_v(j),  \forall c_i,c_j \in C$.
The score $\pi_v(i)$ models how much voter~$v$ likes candidate $c_i$ and induces, for voter $v$, a strict preference ordering over the candidates.
Thus, $\pi_v(i) > \pi_v(j)$ models that voter $v$ (strictly) prefers $c_i$ to $c_j$. 
Sometimes we will denote with $\langle \pi_v(0),\dots,\pi_v(\ell) \rangle$ the score vector of voter~$v$.
We will say that the scores have \emph{unitary score distances} if $\pi_{v}(i) \in \{0, \ldots, \ell-1\}$ for every voter $v$ and for every candidate $c_i$.

The election is based on \emph{plurality voting}, where every voter casts a single vote for a single candidate, and the candidate that received the largest number of votes wins the election. 
We assume voters to be \emph{myopic}, casting a vote for the candidate with highest score in their preference ordering.
For each candidate $c \in C$, we denote with $V_c$ the set of voters that rank $c$ as first, formally, $V_{c} = \left\{v \in V \mid c = \argmax_{c_i \in C} \pi_v(i) \right\}$.

Let $S \subseteq V$ be a subset of voters said \emph{seeds}. Every seed $s$ can be selected to initiate the diffusion of information about multiple candidates.
%
%
%
%
We denote with $m_s=(q_0,...,q_{\ell})$ the \emph{message} of $s \in S$, where $q_i \in \mathbb{Z}$, with $q_i > 0$ ($q_i < 0$, respectively) representing that $s$ initiates the diffusion of $q_i$ \emph{positive} (\emph{negative}, respectively) news articles on $c_i$, and  $q_i = 0$ representing that $s$ does not send any information about $c_i$.
Positive information on a candidate increases the scores that voters assign to that candidate by $|q_i|$, while negative information does the reverse.
%
We use $m_s(i) \in \mathbb{Z}$ to denote information sent by $s$ on candidate $c_i$ and $M= \cup_{s \in S}~m_s$ to denote the whole information sent by seeds. We denote with $|m_s|=\sum_{c_i \in C} |m_s(i)|$ the number of news articles sent by $s$. Similarly, $|M| = \sum_{s \in S} |m_s|$.
If $m_s = m_{s'}$ for every pair of seeds~$s,s' \in S$, and, for all the candidates $c_i$ except $c_j$, it holds $m_s(i)=0$, while for $c_j$ it holds $m_s(j)\in \{+1,-1\}$, then we say that we are in the setting with \emph{single-news-article messages}.

\subsection{Diffusion Model}
Given a pair of seeds/messages $(S, M)$, messages are supposed to spread over the network according to a \emph{multi-issue independent cascade} (MI-IC) model.
%
Roughly speaking, in this model, each seed $s$ propagates the message $m_s$ to her neighbors. 
Then, a voter~$v$ with $v \not \in S$, receiving a message from $s$, accepts the information that this message carries with probability $p(s,v)$. 
If voter $v$ accepts the message, we say that $v$ is \emph{activated} by~$s$. 
In her turn, each just activated voter $v$ sends the received messages to her neighbors $u$ that can activate with probability $p(v,u)$ if not activated in the past  and, then, voter $v$ becomes \emph{inactive}. 
The process continues as long as there is some active voter, and it is repeated for every message $m_s$ sent by one of the seeds. 

Formally, given graph $G = (V, E, p)$, we define the \emph{live-graph} $H = (V, E')$, where each edge $(u,v) \in E$ is included in $H$ with probability $p(u,v)$.
Moreover, for every $s \in S$, we introduce a set $A^t_{m_s}\subseteq V$ composed of the \emph{active} voters at time $t$ due to message $m_s$.
Every set  $A^t_{m_s}$ is initialized with the seed sending the corresponding message for $t=0$, \emph{i.e.}, $A^0_{m_s} = \{s\}$, and the empty set for $t>0$.
At every time $t \geq 1$, set $A^t_{m_s}$ is defined as follows: for every edge $(u,v) \in E'$, we consider the set $\mathcal{M}_{(u,v)} \subseteq M$ of messages $m_s$ such that $u \in A^{t-1}_{m_s}$---and thus $u$ has just been  activated by $m_s$---and $v \notin \bigcup_{i < t} A^{i}_{m_s}$---and thus $v$ has never been activated by $m_s$; then for each $(u,v)$ such that $\mathcal{M}(u,v)$ is not empty, we add $v$ to $A^t_{m_s}$ for every $m_s \in \mathcal{M}(u,v)$.
The diffusion process of message $m_s$ terminates at time  $T_{m_s}$ when $A^{T_{m_s}}_{m_s} = \emptyset$.
Finally, the cascade terminates when the diffusion of every message $m_s$ terminates. A voter that activates at some $t$ is said \emph{influenced}.
Note that, when the messages are single-news-article, there are two candidates, and the score distances are unitary, this process reduces to the renowned independent cascade model~\cite{kempe2003maximizing}.

\subsection{Preference Revision} 
When a voter $v$ accepts a message received by a neighbor, her preferences can change.
%
Let us now denote with $R \subseteq M$ a set of received messages. A \emph{ranking revision function} $\phi$ associates each pair $(\pi,R)$ with a new ranking $\pi'$ obtained by revising ranking $\pi$ according to the set of received messages $R$. 
We use a \emph{score-based} ranking revision function in which a positive (negative) message $m_s$ on a candidate $c_i$ increases (decreases) her score by $m_s(i)$.
Formally, each voter updates every candidate's score as follows:~\footnote{It is easy to see that every hardness result related to this model keeps to hold even when we allow the same message causes a different score increment (decrement) to different voters, or if received by different neighbors, or if sent by different seeds.} 
\[
\pi_v(i)\leftarrow \pi_v(i)+  \sum_{m_s \in R}  m_s(i).
\]
According to our assumption on $\pi_v$, we require that, at the end of the diffusion process, no pair of candidates $c_i, c_j$ has the same value of $\pi_v$. We can obtain such a property, \emph{e.g.}, by breaking ties according to some rule and slightly tilting scores so that they satisfy the tie-break outcome.
For the sake of simplicity, we brake ties in favor of the candidate ranked last before the diffusion process.
Such a tie-breaking rule can be obtained by slightly perturbing the initial score with a multiplicative factor $(1-\epsilon)$, where $\epsilon$ is a sufficiently small positive constant, \emph{e.g.}, $\epsilon=\frac{1}{1 + \max_{v,i} \pi_v(i)}$, and then apply the update rule as
\[
	\pi_v(i)\leftarrow (1-\epsilon) \,\pi_v(i) +  \sum_{m_s \in R}  m_s(i).
\]

Given a seed set $S$, a set $M$ of messages, a set $E$ of edges, and a live graph $H$, $\pi^*_v(i,S,M,E,H)$ denotes for every voter $v \in V$, the score of candidate $c_i$ at the end of the MI-IC diffusion (\emph{i.e.}, after the preference revision).
Moreover, for each candidate $c \in C$, we denote with $V^*_c$ the set of voters for which $c$ is ranked as first after the preference revision, \emph{i.e.}, $V^*_{c}(S,M,E,H) = \left\{v \mid \argmax_{c_i} \pi_v^*(i,S,M,E,H) = c\right\}$. 
Finally, we define the \emph{margin of victory} $\MoV$ of $(S, M, E,H)$ as
\[
\MoV(S,M,E,H) = \Big|V^*_{c_0}(S,M,E,H)\Big| - \\ \max_{c \neq c_0} \Big|V^*_{c}(S,M,E,H)\Big|.
\]
Given a live-graph $H$, \MoV\ returns the number of votes that $c_0$ needs to win the election, if the first term is smaller than the second, and the advantage of $c_0$ with respect to the second-best ranked candidate, otherwise.

Next we provide an example of the concept defined above.
\begin{example} Consider Figure \ref{fig:example}, depicting the connections among five voters.
\begin{figure}[tb]
	\centering
	\includegraphics[width=0.85\linewidth]{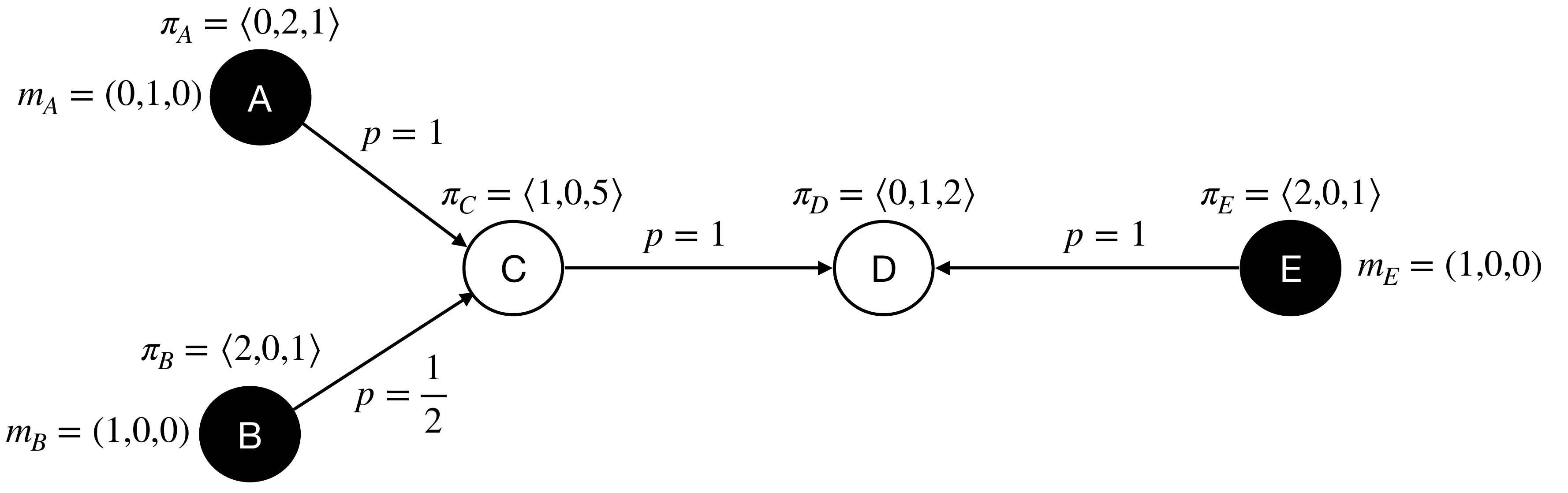}
	\caption{Example of an election with three candidates $c_0,c_1,c_2$. Black nodes represent seeds: node A sends a positive message on $c_1$, while nodes B and E send a positive message on $c_0$. The tuples $\langle \pi_v(0),\pi_v(1),\pi_v(2)\rangle$ above the nodes are the voters' preferences.}
	\label{fig:example}
\end{figure}
%

Two different live-graphs $H_1$ and $H_2$ are possible depending on whether or not B influences C. 
This happens with probability $\frac{1}{2}$.
%
%

In $H_1$, B does not influence C and C receives only a positive news article on $c_1$, thus increasing the score of $c_1$ by $1$. 
However, C has a very high evaluation of candidate $c_2$ and keeps to prefer $c_2$ over $c_0$ and $c_1$.
Instead, D updates her score to $\langle1,2-\epsilon,2-2\epsilon\rangle$ and votes for $c_1$.
Thus, at election time, $c_0$ has $2$ votes (B and E), $c_1$ has $2$ votes (A and D) and $c_2$ has one votes (C), and therefore $\MoV(S,M,E,H_1)=2-\max\{2,1\}=0$.


In $H_2$, B influences C and C receives a positive news article on $c_0$ and a positive news article on $c_1$. However, C keeps to prefer $c_2$ over $c_0$ and $c_1$.
Voter D receives a positive news article on $c_1$ and two positive news articles on  $c_0$, thus updating the scores to $\langle2,2-\epsilon,2-2 \epsilon\rangle$ and then voting for $c_0$.
Hence, $\MoV(S,M,E,H_2)=3-\max\{1,1\}=2$.
\end{example}

\subsection{Election Control Problem}
The election control problem involves a single  agent (\emph{i.e.}, the manipulator) whose objective is to spend a \emph{budget} $B$ to make $c_0$ win the election. We consider two different manipulation strategies: seeding and network modification by edge removal/addition. For the sake of simplicity, we assume that the cost incurred by the manipulator for seeding is one for every single news article sent by each seed and therefore the cumulative cost for seeding is $|M|$, while the cost for network modification is equal to the number of removed/added edges. We study each form of manipulation singularly. Nevertheless, it is easy to see that the results also extend to the case with multiple simultaneous forms of manipulation.

We next formally state the problems we study in the paper.
\begin{definition}[\textsc{Election-Control-by-Seeding (ECS)}]
	Given an election scenario $(C,G,\{\pi_v\})$ and budget $B\in \mathbb{N}$, the goal is finding a set $S$ of seeds and messages $m_s$ for every $s \in S$, with $|M| \le B$,  to maximize $\Expec_H[\Delta_\MoV^S(S,M,H)]$, where $\Delta_\MoV^S(S,M,H) = \MoV(S,M,E,H) - \MoV(\emptyset,\emptyset,E,H)$ is the  increase of \MoV\ due to the messages send by seeds S.
\end{definition}

\begin{definition}[\textsc{Election-Control-by-Edge-Removal (ECER)}]
	Given an election scenario $(C,G,\{\pi_v\},S,M)$ and budget $B\in \mathbb{N}\cup \{\infty\}$, the goal is finding $E' \subseteq E$ with  $|E'|\leq B$ to remove from graph $G$ to maximize $\Expec_H[\Delta_\MoV^-(E',H)]$, where $\Delta_\MoV^-(E',H) = \MoV(S,M,E \setminus E',H) - \MoV(S,M,E,H)$ is the  increase of \MoV\ due to the removal of edges $E'$.
\end{definition}
\begin{definition}[\textsc{Election-Control-by-Edge-Addition (ECEA)}]
	Given an election scenario $(C,G,\{\pi_v\},S,M)$ and budget $B\in \mathbb{N}\cup \{\infty\}$, the goal is finding $E'$ with  $E'\cap E = \emptyset$ and $|E'|\leq B$ to add to $G$ to maximize $\Expec_H[\Delta_\MoV^+(E',H)]$, where $\Delta_\MoV^+(E',H) = \MoV(S,M,E \cup E',H) - \MoV(S,M,E,H)$ is the increase of  \MoV\ due to the addition of edges $E'$.
\end{definition}

An algorithm $A$ is said to always return a $\rho$-approximation for an ECS problem with $\rho \in [0,1]$ potentially depending on the size of the problem, if, for each instance of the problem,
it returns a feasible $(S, M)$ such that $\Expec_H[\Delta_\MoV(S,M,H)] \geq \rho\, \Expec_H[\Delta_\MoV(S^*,M^*,H)]$. A similar definition holds for all the other optimization problems.


\subsection{Influence Optimization}
Incidentally, our analysis of the ECER and ECEA problems allow us to provide results also on the (more general) influence maximization/minimization problems when the manipulator can either \emph{remove} or \emph{add edges}. 
To formally describe these problems, we need to define function $\chi:S\times E \times H \rightarrow \mathbb{R}_+$ returning the number of influenced nodes with seeds $S$, edges $E$ and live graph $H$. When the set of edges $E$ is fixed (\emph{e.g.}, in seeding), we will use $\chi(S,H)=\chi(S,E,H)$, removing the dependence from $E$. Finally, with abuse of notation, we also define $\chi(S,E)= \Expec_H[\chi(S,E,H)]$. 
We have the following two problems.
\begin{definition}[\textsc{Influence-Minimization-by-Edge-Removal (IMER)}]\phantom{a}
	Given a setting $(G,S,M)$ and budget $B\in \mathbb{N}\cup \{\infty\}$, the goal is finding a set $E' \subseteq E$ with  $|E'|\leq B$ to remove from graph $G$ to maximize $\Delta I^-(E') = \chi(S,E)-\chi(S,E \setminus E')$.
\end{definition}

\begin{definition}[\textsc{Influence-Maximizationn-by-Edge-Addition (IMEA)}]\phantom{a}
	Given a setting $(G,S,M)$ and budget $B\in \mathbb{N}\cup \{\infty\}$, the goal is finding a set $E'$ with  $E'\cap E = \emptyset$ and $|E'|\leq B$ to add to graph $G$ to maximize $\Delta I^+(E') = \chi(S,E \cup E') - \chi(S,E)$.
\end{definition}

\section{Seeding Complexity}
\label{sec:general}
\label{sec:seeding}
We characterize the computational complexity of the ECS problem. Unless specified otherwise, the results provided in this section refer to the general setting when both messages and score distances are arbitrary. Our characterization is based on the parameter $\delta=\max_{v \in V, c_i \neq c_0} \left\{\pi_v(i)-\pi_v(0)\right\}$, representing the cost the manipulator needs to spend to convince the most reluctant voter to vote for $c_0$.
We introduce the following definition that is useful to describe the hardest instances of the ECS problem.

\begin{definition}
	An ECS problem instance is said \emph{hard to manipulate} if $B < \delta$.
\end{definition}

\subsection{Inapproximability Results}

We introduce the \textsc{Set-Cover} problem, that is well known to be $\mathsf{NP}$-hard, to prove the hardness of ECS.

\begin{definition}[\textsc{Set-Cover}]
	Given a set $N = \{z_1,\dots,z_n\}$ of $n$ elements, a collection $X= {x_1,\dots,x_g}$ of sets with $x_i \subset N$, and a positive integer $h$, the objective is to select a collection $X^* \subset X$, $|X^*| \le h$ with $\cup_{x_i \in X^*} \,x_i=N$.
\end{definition}

\begin{theorem}
\label{thm:inapprox}
Given the set of ECS instances said hard to manipulate and with at least three candidates, for any $\rho > 0$ even depending on the size of the problem, there is not any poly-time algorithm returning a $\rho$-approximation to the ECS problem, unless $\mathsf{P} = \mathsf{NP}$.
\end{theorem}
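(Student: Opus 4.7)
The plan is to reduce the NP-hard \textsc{Set-Cover} problem to ECS restricted to hard-to-manipulate instances with at least three candidates, establishing a gap in $\mathrm{OPT}$ between $0$ (on no-instances) and at least $1$ (on yes-instances). Because $\Delta_\MoV$ is integer-valued and $\mathrm{OPT} \geq 0$ holds trivially from the empty seeding, such a gap rules out any polynomial-time $\rho$-approximation with $\rho > 0$, even if $\rho$ depends on the instance size, unless $\mathsf{P} = \mathsf{NP}$.

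Given a \textsc{Set-Cover} instance $(N, X, h)$ with $|N| = n$ and $|X| = g$, I would construct an ECS instance over $\{c_0, c_1, c_2\}$ as follows. Introduce a potential-seed voter $u_i$ for each $x_i \in X$; an element-voter $v_j$ for each $z_j \in N$; and a stubborn voter $v^*$ whose scores satisfy $\pi_{v^*}(c_i) - \pi_{v^*}(c_0) = B+1$ for some $c_i \neq c_0$, thereby pinning $\delta = B+1 > B$ and placing the instance in the hard-to-manipulate regime. Deterministic edges $(u_i, v_j)$ with probability $1$ are included precisely when $z_j \in x_i$. The initial scores of the element-voters, the per-seed messages, and the budget $B$ (an appropriate multiple of the message cost per seed) are calibrated so that (i) a pro-$c_0$ message reaching $v_j$ flips it to $c_0$, and (ii) flipping all $v_j$'s requires a cover of $N$ by at most $h$ sets. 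The forward direction of the gap is then immediate: a cover $X^* \subseteq X$ of size $\leq h$ yields the seeding $\{u_i : x_i \in X^*\}$, which flips every $v_j$ and guarantees $\mathrm{OPT} \geq 1$.

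The converse direction is the technical heart: if no cover of size $h$ exists, then $\mathrm{OPT} = 0$. Here the three candidates and the stubborn voter interact in an essential way. I would pair each $v_j$ with a ``mirror'' voter $w_j$ that shares the in-neighbourhood of $v_j$ among the seeds but carries scores engineered so that any message flipping $v_j$ to $c_0$ simultaneously pushes $w_j$ onto the competing candidate $c_1$, whose count is anchored by $v^*$ as the running $\max_{c \neq c_0} |V^*_c|$. Each flip of a $v_j$ to $c_0$ is therefore matched by an equal-sized increase of this maximum, forcing $\Delta_\MoV \leq 0$ on every feasible seeding that leaves some $v_j$ unreached, and hence $\mathrm{OPT} = 0$ on no-instances. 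The main obstacle is ensuring this cancellation against \emph{every} possible message---not just pro-$c_0$ messages but arbitrary mixed messages with positive or negative entries on several candidates at once. The condition $B < \delta$ will be crucial: it prevents the manipulator from ever destabilising the pinned maximum through $v^*$, so that a case analysis on the admissible message shapes, using the integrality of scores and the tie-breaking rule, closes the argument.
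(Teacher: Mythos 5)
Your high-level strategy coincides with the paper's: reduce from \textsc{Set-Cover}, create a $0$-versus-$\geq 1$ gap in the optimum over deterministic live-graphs, and observe that any $\rho$-approximation with $\rho>0$ would then decide \textsc{Set-Cover}. The gap-to-inapproximability step and the use of a single reluctant voter to certify $B<\delta$ are both sound. The problem is in the gadget itself.

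The mirror-voter mechanism you propose for the converse direction is self-defeating. You require that $w_j$ share the in-neighbourhood of $v_j$ and that \emph{any} message flipping $v_j$ to $c_0$ also push $w_j$ onto $c_1$, the candidate realising $\max_{c\neq c_0}|V^*_c|$. If that cancellation really holds against every admissible message, it holds in particular for the seeding induced by a set cover: all $n$ voters $v_j$ flip to $c_0$ \emph{and} all $n$ mirrors $w_j$ flip to $c_1$, so $|V^*_{c_0}|$ and $\max_{c\neq c_0}|V^*_c|$ increase by the same amount and $\Delta_\MoV = 0$ on yes-instances too. Your forward direction ("a cover flips every $v_j$ and guarantees $\mathrm{OPT}\geq 1$") and your converse mechanism cannot both be true; the gap collapses. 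What is missing is an all-or-nothing gadget that makes a strictly positive gain available \emph{only} when coverage is complete. The paper achieves this without letting $c_0$ gain any votes at all: every non-$c_0$ voter has a score gap of exactly $B+1$ to its top candidate, and the action is a forced shuffle between $c_1$ and $c_2$ --- a single seed in a clique $G_2$ moves a large block of $n+h+1$ votes from $c_1$ to $c_2$, and this move is profitable precisely when the remaining $h$ seeds, placed on set-nodes of $G_1$, pull $n+h$ votes back from $c_2$ to $c_1$, which requires covering all $n$ element-nodes. The threshold effect lives in the arithmetic of the clique sizes, not in a per-voter cancellation. To repair your construction you would need an analogous coupling in which the reward for flipping element voters is only cashed in when all of them flip; as written, no such coupling exists.
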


\begin{proof}
	The proof uses a reduction from \textsc{Set-Cover}.
	Given an instance of \textsc{Set-Cover}, we build an instance of the election control problem with $3$ candidates as follows.~\footnote{If $|C|>3$, we set $\pi_v(i)=\max\{\pi_v(0),\pi_v(1),\pi_v(2)\}-B-1, \forall i \in \{3,\dots,C-1\}, v \in V$.}
	The voters' network $G$, showed in Figure~\ref{fig:reduction}, consists of three disconnected components, that we denote as $G_1$, $G_2$, and $G_3$.
	Note that all edges of $G$ have $p(u,v) = 1$.
	
	\begin{figure}
		\begin{center}
		\includegraphics[width=0.9\linewidth]{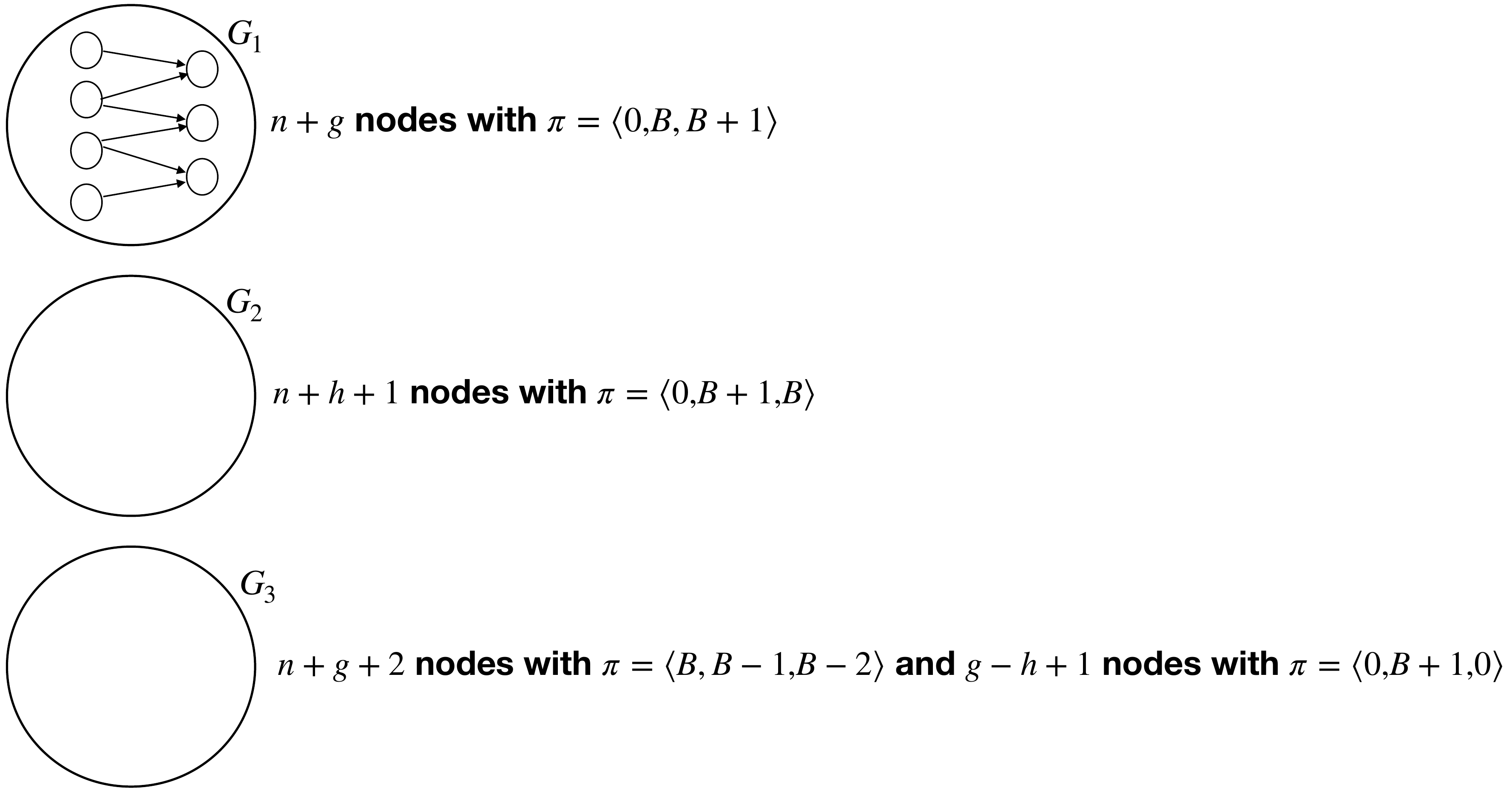}
		\end{center}
		\caption{Structure of the election control problem used in the proof of Theorem~\ref{thm:inapprox}.}
		\label{fig:reduction}
	\end{figure}
	
	We set the budget $B=h+1$. 
	The component $G_1$ has $g+n$ nodes and it is used to model the \textsc{Set-Cover} instance. Indeed, for each $z_i \in N$, we have in $G_1$ a node $v_{z_i}$;
	moreover, for each $x_i \in X$, we have in $G_1$ a node $v_{x_i}$
	with an edge toward $v_z$ for each $z \in x_i$.
	The preferences of all voters $v$ corresponding to nodes in $G_1$ are: $\pi_{v}(0)=0$, $\pi_{v}(1)=B$, $\pi_{v}(2)=B+1$.
	
	
	The component $G_2$ is a clique of $n+h+1$ nodes with preferences: $\pi_{v}=\langle0,B+1,B \rangle$.
	The component $G_3$ is a clique of $n+2g-h+3$ nodes, such that $n+g+2$ nodes have a preference ranking $\pi_{v}=\langle B,B-1,B-2 \rangle$ and $g-h+1$ nodes have a preference ranking $\pi_{v}=\langle 0,B+1,0 \rangle$.
	
	Note that $|V_{c_0}| = g+n+2$, $|V_{c_1}| = g+n+2$, and $|V_{c_2}| = g+n$. Hence, $\MoV(\emptyset,(),E,H) = 0$.
	
	We next prove that this instance allows a feasible solution $(S^*, M^*)$ with $\MoV(S^*,M^*,E,H) > 0$ if and only if there is a solution of the \textsc{Set-Cover} instance of size at most $h$.

	\textbf{(If)} Let $X^* \subseteq X$ be the solution of \textsc{Set-Cover} of size $h$ (\emph{i.e.}, $|X^*| = h$ and $\cup_{x_i \in X^*}=N$).~\footnote{If there is a solution of \textsc{Set-Cover} $X^*$ of size less $h$, then we can achieve a solution of \textsc{Set-Cover} of size exactly $h$, by padding $X^*$ with arbitrary element in $X \setminus X^*$.}
	Then we set $(S^*, M^*)$ as follows: for every $x_i \in X^*$, we include $v_{x_i} \in S^*$ and we set $m^*_{v_{x_i}}$ such that $q_0 = 0$, $q_1=1$, and $q_2 = 0$; moreover, we include in $S^*$ an arbitrary node $v \in G_2$ and we set $m^*_v$ such that $q_0=0$, $q_1=0$, and $q_2 = 1$.
	
	From the above arguments, it directly follows that $(S^*, M^*)$ is feasible.
	We next show that $\MoV(S^*,M^*,E,H) > 0$.
	Indeed, the diffusion of messages leads each voter corresponding to nodes in $G_2$ to prefer $c_2$ to $c_1$. Moreover, the dynamics leads $h + n$ voters in $G_1$ (\emph{i.e.}, the seeds and the ones corresponding to elements $z_i \in N$) to prefer $c_1$ to $c_2$.
	Hence, $|V^*_{c_1}(S^*,M^*,H)| = |V_{c_1}| - |G_2| + h+n = g+n+2- n-h-1  +n+h = g+n+1$,
	and $|V^*_{c_2}(S^*,M^*,H)| = |V_{c_2}| + |G_3| - h-n = g+n+1$.
	Hence, $\MoV(S^*,M^*,E,H) = 1$, as desired.
	
	\textbf{ (Only if)} Suppose that there exists a pair $(S^*, M^*)$ such that it holds $\MoV(S^*,M^*,E,H) > 0$. Note that, since $\delta>B$, $c_0$ cannot gain votes and  $|V^*_{c_0}(S^*,M^*,H)| = |V_{c_0}|$. Hence, in order to have $\MoV(S^*,M^*,E,H) > 0$, it must be the case that the number of voters whose most-preferred candidate is $c_1$ decreases by at least one unit and the number of voters whose most-preferred candidate is $c_2$ increases by at most one unit.
	
	Since, $c_2$ has to take at least one vote and she cannot take votes in $G_3$, $c_2$ must take voters in $G_2$. Since $G_2$ is a clique, it must be that all votes of $c_1$ are taken by candidate $c_2$.
	
	Thus, $c_1$ loses all its voters in $G_2$ in favor of $c_2$. Note that a single message is sufficient (a positive message for $c_2$) to this aim. However, this implies that $c_2$ must lose $n+h$ voters in $G_1$, otherwise $|V^*_{c_2}(S^*,M^*,E,H)| > g+n+n+h+1-(n+h)$ and thus $\MoV(S^*,M^*,E,H) \leq 0$, that contradicts our hypothesis. Observe that these votes must be necessarily lost in favor of $c_1$.
	
	Hence, we are left with $h$ available messages to make $n+h$ voters to change their vote from $c_2$ to $c_1$. Observe that, in order to make a voter to change, it is sufficient a single message (a positive message for $c_1$). However, if less than $h$ seeds sending this message are located among nodes $v_{x_i}$ for $x_i \in X$, then less than $n+h$ voters will change their mind (since nodes $v_{x_i}$ for $x_i \in X$ have no in-going edges).
	
	Finally, we must have that the $h$ seeds in $G_1$ are neighbors of every node $v_{z_i}$ for $z_i \in N$. Hence, the set $X^* = \{x_i \colon v_{x_i} \in S^*\}$ has size $h$ and, by construction of $G_1$, $\bigcup_{x \in X^*} x = N$, \emph{i.e.}, $X^*$ is a solution of \textsc{Set-Cover} of size at most $h$.
	
	Hence, we can conclude that a feasible solution $(S^*,M^*)$ satisfying the property $\Delta_{\MoV}(S^*,M^*,H) > 0$ exists if and only if a solution for the \textsc{Set-Cover} instance exists. Note also that if a solution with $\Delta_{\MoV}(S^*,M^*,H) > 0$ exists, then $\Delta_{\MoV}(S,M,H) > 0$ even for any $\rho$-approximate solution $(S,M)$, regardless of the value of $\rho$. Thus, if a polynomial time $\rho$-approximate algorithm for election control problem exists, then the \textsc{Set-Cover} problem can also be solved in poly-time, implying that $\mathsf{P} = \mathsf{NP}$.
\end{proof}

Theorem~\ref{thm:inapprox} essentially states that there is no chance that a manipulator designs an algorithm allowing her to maximize the increment in the margin of victory of the desired candidate in the set of instances in which there is no way for making $c_0$ become the most preferred of any voter. However, Theorem~\ref{thm:inapprox} does not rule out that the worst-case instances are very rare and/or knife-edge. However, we show that simply algorithms will fail even on very simple instances. Specifically, we show that if the manipulator greedily chooses the messages to send, then her approach fails even for simple graphs, namely graphs with all nodes having a degree two or trees.

In details, given a set $S$ of seeds and corresponding messages $M$,
we denote as $\mathcal{F}(S,M)$ the set of pairs $(s, m_s)$, with $s \notin S$ such that either
 $$\Expec_H \left[\MoV(S \cup \{s\},(M, m_s), E,H)\right] > \Expec_H \left[\MoV(S,M, E,H)\right]$$
 or 
 $$\Expec_H \left[V^*_{c_0}(S \cup \{s\},(M, m_s),E, H)\right] > \Expec_H \left[V^*_{c_0}(S,M,E, H)\right].$$
 That is, $\mathcal{F}(S,M)$ includes all the ways of augmenting a current solution so that either the margin of victory of $c_0$  or the number of her votes increases.
 Then, we say that an algorithm for the election control problem uses the \emph{greedy} approach, if it works as follows:
 it starts with $S=\emptyset$ and $M=()$;
 until the set $\mathcal{F}(S,M)$ is not empty, choose one $(s, m_s) \in \mathcal{F}(S,M)$ and set $S = S \cup \{s\}$ and $M=(M,m_s)$.
%
We show that every algorithm in this class fails even for elementary networks (the proof of the following proposition is provided in~\ref{sec:appendix}).

\begin{restatable}{proposition}{propositionone}
 \label{prop:example1}
  For any $\rho > 0$ even depending on the size of the problem, no algorithm following the greedy approach  returns a $\rho$-approximation to the ECS problem, even in undirected graphs in which each node has degree at most~$2$.
\end{restatable}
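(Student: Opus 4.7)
The idea is to exhibit an instance with two candidates and a graph of maximum degree two, on which some legal execution of the greedy approach provably yields arbitrarily poor approximation.

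I take $C=\{c_0,c_1\}$, budget $B\in\mathbb{N}$, and a graph consisting of (i) an isolated voter $u$ with preferences $\pi_u=(0,1)$ and (ii) a disjoint path $v_1\text{---}v_2\text{---}\cdots\text{---}v_N$ with $\pi_{v_i}=(0,B)$ for every $i$; all edge probabilities are set to $1$. All nodes have degree at most two. I will use the perturbation with $\epsilon=1/(1+B)$ so that the tie-breaking favors the candidate ranked last initially.

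For the optimum, I consider the single seed $v_1$ with message $m=(B,0)$ of cost exactly $B$. Since $p\equiv 1$ the cascade deterministically traverses the whole path; after the perturbed update every $v_i$ has scores $(B,B(1-\epsilon))$ and, by the tie-breaking rule, flips to $c_0$. The isolated $u$ is unaffected, so $\Delta_{\MoV}^{S^*}\ge 2N$.

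Now I analyze the greedy algorithm. The pair $(u,(1,0))$ lies in $\mathcal{F}(\emptyset,\emptyset)$: it flips $u$ (whose updated scores are $(1,1-\epsilon)$) and therefore strictly increases $\Expec[V^*_{c_0}]$ from $0$ to $1$. A greedy algorithm may pick this pair in its first step, after which the residual budget is $B-1$. I then claim that $\mathcal{F}(\{u\},((1,0)))$ is empty: for any new seed $v_i$ and any message $m$ with $|m|\le B-1$, each path voter reached gets $q_0-q_1\le B-1<B(1-\epsilon)$, which falls short of the gap needed to flip; moreover, with only two candidates, no move that fails to flip a voter can alter either $V^*_{c_0}$ or $\MoV$. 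Hence the greedy execution halts with $\Delta_{\MoV}=2$ and an approximation ratio at most $1/N$. Choosing $N$ large enough drives this ratio below any prescribed $\rho$.

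The main obstacle is that the optimum pair $(v_1,(B,0))$ is itself an element of $\mathcal{F}(\emptyset,\emptyset)$, and a greedy algorithm that happens to pick it would attain the optimum on this instance. The proposition, however, quantifies over every algorithm in the greedy class, i.e.\ over every admissible rule for selecting an element of $\mathcal{F}$. Thus it suffices to exhibit one legitimate greedy execution that fails: any tie-breaking rule preferring the pair involving the isolated vertex $u$ constitutes a valid greedy algorithm, and the construction above is a witness of the claimed lack of approximation for that algorithm. A symmetric argument, by duplicating the isolated ``trap'' into several equivalent small components and carefully tuning the preferences, can be used if one wishes to rule out \emph{every} deterministic tie-breaking simultaneously; this strengthening is deferred to the appendix.
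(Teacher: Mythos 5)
Your analysis of the particular greedy execution is correct: on your instance the pair $(u,(1,0))$ does belong to $\mathcal{F}(\emptyset,())$, and after it is chosen the residual budget $B-1$ cannot flip any path voter, so that execution stalls at $\Delta_\MoV=2$ against an optimum of $2N$. The genuine gap is exactly the one you flag and then defer: the proposition asserts that \emph{no} algorithm in the greedy class is a $\rho$-approximation, and your construction only defeats those selection rules that happen to prefer the trap pair. Other members of the class succeed on every instance of your family. Concretely, since your instances satisfy $\delta=B\le B$, the algorithm of Theorem~\ref{thm:approx} --- which is itself a greedy-class algorithm --- selects $\lfloor B/\delta\rfloor=1$ seed maximizing expected influence and equips it with the message $m^*=(B,0)$; on your graph this is $(v_1,(B,0))\in\mathcal{F}(\emptyset,())$ and it attains the optimum. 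So your family of instances cannot witness the statement, and the ``symmetric argument by duplicating the trap'' you defer to an appendix is where all the difficulty lives. Worse, with only two candidates that strengthening appears unavailable in principle: whenever the optimum is positive, some voter is flipped to $c_0$ by the aggregate of at most $B$ news articles, so a single seed sending that aggregate already increases $V^*_{c_0}$ and hence lies in $\mathcal{F}(\emptyset,())$; no single two-candidate instance can therefore stall \emph{every} greedy rule, and defeating each rule with a tailored instance fails against selection rules that are allowed to be arbitrarily clever (the class places no restriction on how an element of $\mathcal{F}$ is chosen).

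The paper closes this quantifier gap by using three candidates and a single universal instance on which $\mathcal{F}(\emptyset,())$ is provably \emph{empty} while the optimum is $1$: the optimal manipulation consists of two seeds that move votes from $c_1$ to $c_2$ in two separate components so that $c_1$ and $c_2$ end up balanced below $c_0$, but neither seed in isolation increases $\MoV$ or $|V^*_{c_0}|$ (each taken alone either leaves $\max_{c\neq c_0}|V^*_c|$ unchanged or pushes the other competitor ahead). Consequently every algorithm in the class terminates immediately with the empty solution of value $0$, which is not a $\rho$-approximation for any $\rho>0$. If you want to salvage your write-up, you should switch to a three-candidate construction of this type rather than trying to enumerate tie-breaking rules.
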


A similar result holds even by considering directed trees (the proof of the following proposition is provided in~\ref{sec:appendix}).

\begin{restatable}{proposition}{propositiontwo}
 \label{prop:example2}
 For any $\rho > \frac{38}{|V|}$,
no algorithm following the greedy approach returns a $\rho$-approximation to the ECS problem, even in directed trees.
 \end{restatable}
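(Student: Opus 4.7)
The plan is to exhibit, for every $n = |V|$, a directed tree on $n$ voters such that any greedy run achieves $\Delta_{\MoV} \le 38$ while some non-greedy feasible solution achieves $\Delta_{\MoV} = \Omega(n)$; the stated ratio $38/|V|$ then follows immediately. Throughout, I would set all edge probabilities to $1$, so that the live graph is deterministic and the expectations over $H$ disappear, as in the proof of Theorem~\ref{thm:inapprox}.

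The construction I have in mind glues two directed subtrees at a common dummy source: a small \emph{trap} gadget $T_1$ of constant size (about $38$ voters) on which the greedy rule is lured into committing its first move for a constant payoff, and a large \emph{rich} gadget $T_2$ of size $\Theta(n)$ on which the genuine optimum realises an $\Omega(n)$ gain. Scores inside $T_1$ are tuned so that a single seed-message pair inside $T_1$ is (essentially) the only element of $\mathcal{F}(\emptyset,())$: it strictly increases either $|V_{c_0}^*|$ or $\MoV$, by a bounded amount (at most $38$). Preferences inside $T_1$ are further arranged so that once this move has been performed, no further action on $T_1$ or on the rest of the tree enters $\mathcal{F}(S,M)$; in particular, after the trap has absorbed enough budget, the remaining budget is insufficient to flip any individual voter of $T_2$ in isolation.

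The rich gadget $T_2$ is, by contrast, built around a directed chain/spider in which the preference gaps and the lengths are calibrated so that (a) no single seed-message pair $(s,m_s)$ with $|m_s| \le B$ alone appears in $\mathcal{F}(\emptyset,())$ (because no single injection can, on its own, push any voter of $T_2$ past her threshold for $c_0$), and (b) a specific simultaneous assignment of several seeds inside $T_2$, whose messages together use exactly the budget $B$, propagates along the directed edges and converts $\Theta(n)$ voters to $c_0$. The proof then proceeds in four steps: (i) describe the tree, scoring vectors, and budget explicitly; (ii) show that $\mathcal{F}(\emptyset,())$ is concentrated on the trap move, so every greedy algorithm is forced to select it first; (iii) verify that after this selection $\mathcal{F}$ is empty (or only contains moves that cannot raise $\MoV$ beyond the constant already accumulated in $T_1$), so greedy halts with $\Delta_{\MoV} \le 38$; (iv) exhibit the explicit multi-seed optimum that operates entirely inside $T_2$ and achieves $\Delta_{\MoV} = \Omega(n)$.

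The main obstacle, and the heart of the argument, is engineering $T_2$ so that a coordinated optimum converts $\Theta(n)$ voters while every single first move in $T_2$ remains invisible to $\mathcal{F}$: were any such move to raise $|V_{c_0}^*|$ or $\MoV$ even by a single unit, greedy could escape the trap. This forces a careful balance between the score distances, the per-seed message sizes available under budget $B$, and the directed tree topology which, unlike general digraphs or even undirected degree-$2$ graphs from Proposition~\ref{prop:example1}, limits how cheaply one can amplify a single seed's influence. Tuning the constants so that the trap payoff is at most $38$ while the rich-gadget optimum scales linearly with $n$ then yields the claimed bound.
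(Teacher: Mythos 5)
There is a genuine gap, and it sits exactly where you flag "the main obstacle": the rich gadget $T_2$ as you specify it cannot exist in this model. You require (a) that no \emph{single} seed--message pair flips any voter of $T_2$, yet (b) that a \emph{coordinated} multi-seed assignment of total cost $B$ converts $\Theta(n)$ voters to $c_0$. But the budget in ECS is global ($|M|=\sum_s|m_s|\le B$, with no per-seed cap), and in a directed tree with all probabilities equal to $1$ a voter receives exactly the messages of its seeded ancestors, whose scores simply add. So if the coordinated optimum flips some voter $v$ to $c_0$, you may consolidate the component-wise sum of the messages of $v$'s seeded ancestors into a single message of size at most $B$ placed at one suitably chosen ancestor (or at $v$ itself, if $v$ is a deepest converted voter); that single pair already strictly increases $|V^*_{c_0}|$ and therefore belongs to $\mathcal{F}(\emptyset,())$ by the second clause of its definition. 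Greedy is then not trapped, and your step (ii) fails. Any fix must prevent $c_0$ from gaining votes at all, which is incompatible with your stated source of the $\Omega(n)$ optimum. Beyond this, the proposal never exhibits the tree, the scores, or the budget, so even the trap gadget is not verified.

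The paper's proof takes a structurally different route that sidesteps this obstruction. It sets the score gap between $c_0$ and each voter's favourite above the budget ($B=2$), so the $|V^*_{c_0}|$ clause of $\mathcal{F}$ is vacuous and all action happens among $c_1$ and $c_2$. The instance ($|V|=19r$: a directed line of $7r$ nodes plus four directed stars) is tuned so that the optimum seeds two star roots, one sending positive news on $c_2$ and one on $c_1$, \emph{rebalancing} the two competitors so that $\max_{c\neq c_0}|V^*_c|$ drops by $r$; each of these two moves \emph{individually} overshoots and does not raise $\MoV$, hence is invisible to $\mathcal{F}$, exploiting the min--max structure of the margin of victory rather than vote gains for $c_0$. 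Greedy, meanwhile, is not silent: it is offered only leaf moves worth $+1$ each and spends its budget of $2$ on them, yielding the ratio $2/r = 38/|V|$ (this is where the constant $38$ comes from -- it is not a trap payoff). If you want to salvage your two-gadget plan, you would need to replace "convert $\Theta(n)$ voters to $c_0$" in $T_2$ with a rebalancing mechanism of this kind.
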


We recall that the greedy algorithms are essentially the only known algorithms guaranteeing bounded approximations for many problems related to the election control problem, such as the well-known influence maximization problem~\cite{kempe2003maximizing}. Hence, even if an algorithm exists enabling the manipulator to control the election in many instances,
the propositions above show that new approaches are necessary to design it.

Now, we provide further evidence of the hardness of the problem even in simple graphs, showing that  maximizing the expected $\Delta_\MoV^S$ is \NPHard \ even on a line.
We recall that, while Influence Maximization by Seeding is \NPHard \ with arbitrary graphs, there exists a polynomial-time algorithm when the graph is a line~\cite{wang2016bharathi}. The proof of the following theorem is provided in~\ref{sec:appendix}.

\begin{restatable}{theorem}{theoremtwo}
\label{thm:line}
	The ECS problem with at least four candidates is \NPHard  \ even on line graphs.
\end{restatable}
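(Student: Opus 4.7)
My plan is to prove NP-hardness via a reduction from a suitable NP-hard number problem on a line, most naturally \textsc{Subset-Sum} (or \textsc{3-Partition} if strong NP-hardness is desired). The idea is to exploit the fact that, although diffusion on a line is extremely constrained, the combination of four candidates together with the budget and the scoring gaps can be used to force a discrete, combinatorial selection of ``gadget'' sub-paths. Concretely, given a \textsc{Subset-Sum} instance with elements $a_1,\ldots,a_n$ and target $T$, I would build a line consisting of $n$ disjoint segments (glued together by chains that cannot be traversed within budget, or equivalently via edges of probability~$1$ whose gap-scores are set so large that no single news article can affect them). The $i$-th segment is a small path whose voters' scoring functions are chosen so that the only cost-effective manipulation inside segment~$i$ uses exactly $a_i$ news articles and, if executed, swings a designated number of votes from $c_1$ to $c_0$; any cheaper perturbation inside the segment has no effect on any voter's top choice, and any more expensive one buys nothing extra.

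Second, I would use the additional candidates $c_2$ and $c_3$ to build a threshold gadget, again living on the same line as a trailing sub-path: a block of voters whose preferences are tuned (using the tie-breaking convention from Section~\ref{sec:model}) so that $c_0$ overtakes the current leader if and only if \emph{exactly} $T$ votes are swung by the segment gadgets, and so that swinging more than $T$ votes provokes a symmetric gain by $c_2$ that cancels the increase in $\MoV$. The two extra candidates beyond $c_0,c_1$ are precisely what makes this double-sided threshold expressible on a line, explaining the ``at least four candidates'' hypothesis. With the budget set to $T$, a feasible pair $(S,M)$ achieving $\Delta_{\MoV}^S \ge 1$ must spend exactly $a_i$ on each chosen segment and zero elsewhere, so the set of activated segments is a valid \textsc{Subset-Sum} solution; conversely, any \textsc{Subset-Sum} solution yields such a pair $(S,M)$.

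The main obstacle will be designing the per-segment gadget so that it is genuinely ``all-or-nothing'': because a message may carry simultaneously positive and negative articles on several candidates, one must rule out solutions that combine fractional expenditures across several segments, or that use a single seed to send a cheap negative message against $c_1$ instead of a positive one for $c_0$. I expect to handle this by (i) keeping all retained edge probabilities equal to $1$ so that the MI-IC dynamics is deterministic on the line, (ii) making the score gap $\pi_v(1)-\pi_v(0)$ inside each segment strictly larger than any $b<a_i$ for every voter except at the unique ``trigger'' configuration, and (iii) using the $c_3$ scores to forbid negative-message shortcuts, mirroring the role $c_2$ plays in the proof of Theorem~\ref{thm:inapprox}. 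Once these gadget properties are verified, the equivalence with \textsc{Subset-Sum} and hence the NP-hardness conclusion follow directly.
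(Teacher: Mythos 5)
Your reduction takes a genuinely different route from the paper's, but as written it has two concrete gaps. First, your mechanism for isolating the segments does not work in this model: diffusion in the MI-IC process is governed solely by the edge probabilities and is completely independent of the voters' scores and of the budget, so neither ``chains that cannot be traversed within budget'' nor probability-$1$ connector edges with large score gaps stop a message from propagating past the buffer and into every downstream segment. A seed placed in segment $i$ would then deliver its $a_i$ articles to all later segments as well, destroying the per-segment cost accounting on which your all-or-nothing argument rests. The only way to separate segments on a ``line'' is with zero-probability edges (which is exactly what the paper does, via the footnote to its Figure on the line construction). Second, for the \textsc{Subset-Sum} equivalence to be nontrivial you need the number of votes swung by segment $i$ to scale with $a_i$ (if each segment swings a constant number of votes at cost $a_i$, the optimization is solved greedily by picking the cheapest segments); but votes are nodes, so the graph would need $\Theta(a_i)$ voters per segment, which is exponential in the binary encoding of a \textsc{Subset-Sum} or \textsc{Partition} instance. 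Switching to \textsc{3-Partition} is therefore not an optional strengthening, as you suggest, but a necessity for the construction to be polynomial; alternatively you would need a different device to register ``how much budget was spent'' in the vote tally.

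For contrast, the paper's proof avoids both issues by \emph{not} encoding the numbers as costs or node counts at all. It reduces from \textsc{Partition} (with a cardinality bound $k$), sets the budget to $k$, builds $n$ independent $5$-node lines, and encodes each $a_i$ as an edge-activation probability $p_i = a_i/(4t)$ (a polynomial-size rational). The expected increase in \MoV\ then becomes an explicit function of $x = \sum_{i \in K} p_i$ over the chosen seed set $K$, whose derivative $-1 + 2^{-4x+1}$ vanishes exactly at $x = \tfrac14$, i.e., exactly when $\sum_{i\in K} a_i = t$; candidates $c_2$ and $c_3$ serve as the two distinct recipients of $c_1$'s lost votes in the two probabilistic sub-events, not as a double-sided counting threshold. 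If you want to salvage your deterministic, cost-based route, you should restate it from \textsc{3-Partition}, use zero-probability separator edges, and then verify the all-or-nothing property against split seeds within a segment; otherwise the probabilistic encoding is the cleaner path.
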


We conclude the section proving that when we restrict to single-news-article-messages the ECS problem is hard even in the simple model in which there are two candidates and the score distances are two. We reduce from \textsc{Densest-k-Subgraph} whose definition follows.

\begin{definition}[\textsc{Densest-k-Subgraph (DkS)}]{
		Given an indirect graph $G=(X,N)$, find the set $X^*$ of $k$ vertexes that maximizes $d(X^*)=|E'|$, where $G(X^*)=(X^*,E')$ is the subgraph of $G$ with vertexes $X^*$.
	}
\end{definition}
The proof of the following theorem is provided in~\ref{sec:appendix}.
\begin{restatable}{theorem}{theoremthree}
	\label{thm:DkS}
	If there is a $\rho>0$ approximation algorithm for the ECS problem with single-news-article messages, two candidates, and arbitrary scores, then there is a $\rho$-approximation algorithm for \textsc{DkS}.
\end{restatable}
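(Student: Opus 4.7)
The plan is to give a polynomial-time approximation-preserving reduction from \textsc{DkS} to the ECS setting at hand (two candidates, single-news-article messages, arbitrary scores), so that a $\rho$-approximate ECS solution directly yields a $\rho$-approximate \textsc{DkS} solution.

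Given a \textsc{DkS} instance $(G=(X,N),k)$ (assume $k<|X|$, the alternative being trivial), I would build the voter network with one \emph{vertex-node} $v_x$ per $x\in X$ and one \emph{witness-node} $v_e$ per $e=\{x,y\}\in N$, together with the two directed edges $(v_x,v_e),(v_y,v_e)$ of probability one. I would set $C=\{c_0,c_1\}$, $B=k$, $\pi_{v_x}=\langle 0,\,k+1\rangle$, and $\pi_{v_e}=\langle 0,\,2\rangle$. Among the four admissible single-news-article messages only $+1$ on $c_0$ (equivalently $-1$ on $c_1$) can create a vote for $c_0$, so any $\rho$-approximation with positive value must use one of these; I fix it to be $+1$ on $c_0$.

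My first step is to verify the intended correspondence: seeding $S=\{v_x:x\in X^*\}$ for some $|X^*|=k$ flips exactly the witnesses whose two endpoints both lie in $X^*$, because such a witness receives two $+1$ activations and then prefers $c_0$ via the perturbed tie-break, whereas any $v_x$ has $\pi_{v_x}(1)=k+1$ and cannot be flipped under budget $k$. Consequently $\Expec_H[\Delta_\MoV]=2\,d(X^*)$, which already implies $\mathrm{OPT}_{\mathrm{ECS}}\geq 2\,d(X^*_{\mathrm{OPT}})$.

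The core of the argument will be a \emph{swap lemma}: I would show that any feasible $(S,M)$ can be turned, in polynomial time, into a pure vertex-seeding $(S',M)$ of the same size with $\Expec_H[\Delta_\MoV(S',M)]\geq\Expec_H[\Delta_\MoV(S,M)]$. Picking any witness-seed $v_e\in S$ with $e=\{x,y\}$, I replace it by a vertex-node $v_z$ chosen as follows: (i) if both $v_x,v_y\in S$, take any $v_z$ with $z\notin X_S$ (which exists since $|X|>k\geq|X_S|$); (ii) if exactly one endpoint, say $v_x$, is already in $S$, take $v_z=v_y$; (iii) if neither endpoint is in $S$, take $v_z=v_x$. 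A brief case check shows that $v_e$'s flipping status is preserved---in (i) and (ii) it still receives two activations after the swap, and in (iii) it was not flipped to begin with since self-activation alone produces score $(1,\,2-2\epsilon)$, still favoring $c_1$---while the newly-seeded $v_z$ propagates $+1$ to every witness adjacent to $z$, each of which flips whenever its other endpoint lies in $X_{S'}$, so the total number of flips can only grow. Iterating this swap at most $k$ times completes the transformation, and together with the first step yields $\mathrm{OPT}_{\mathrm{ECS}}=2\,d(X^*_{\mathrm{OPT}})$.

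Finally, given a $\rho$-approximate ECS solution $(S,M)$, I would invoke the swap lemma to obtain a pure vertex-seeding $S'$ with at least the same value, set $X_{S'}=\{x:v_x\in S'\}$, and pad arbitrarily to a $k$-element set $\widehat X\subseteq X$; padding can only increase $d$. Then $d(\widehat X)\geq d(X_{S'})=\Expec_H[\Delta_\MoV(S',M)]/2\geq \rho\,d(X^*_{\mathrm{OPT}})$, as required. The hard part is the swap lemma: at first glance a witness-seed looks comparable to a vertex-seed since both cost one in the budget, and I must rule out that a clever mixed strategy beats every pure vertex-seeding. The asymmetry I would exploit is that a vertex-seed $v_z$ scales its effect with the degree of $z$ via probability-one edges, whereas a witness-seed contributes at most one extra flip---and only in combination with an endpoint seed, an event that the chosen swap always reproduces.
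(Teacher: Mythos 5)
Your proposal is correct and follows essentially the same reduction as the paper: one voter per vertex and one per edge with probability-one edges into the edge-gadgets, a swap lemma moving any witness-seed to a vertex-seed without loss, and the identity $\Delta_\MoV^S = 2\,d(X')$ for pure vertex-seedings. The only differences are cosmetic (you give vertex-nodes scores $\langle 0,k+1\rangle$ instead of $\langle 1,0\rangle$, and you are slightly more explicit about preserving the seed-set cardinality and padding to a $k$-set), so the argument matches the paper's.
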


Manurangsi shows that there is no constant-factor polytime approximation algorithm for the \textsc{DkS} problem unless the Exponential Time Hypothesis is false~\cite{DBLP:conf/stoc/Manurangsi17}. Therefore, it is unlikely that the specific ECS problem considered above is approximable within a constant factor. This is in stark contrast with the known constant approximation algorithm existing in the setting when we further constrain score distances to be unitary \cite{wilder2018controlling}.

\subsection{Approximation Results}
We next show that the condition we used in the previous section to characterize hard instances is tight. Indeed, by dropping that condition, we can design poly-time approximation algorithms. Moreover, these algorithms turn out to follow the greedy approach that we proved to fail even for simple structures in hard-to-manipulate instances.

\begin{theorem}
\label{thm:approx}
Let $\delta \le B$. There is a greedy poly-time  algorithm returning a $\rho$-approximation to the ECS problem, with
 $$
  \rho = \frac{B - \delta + 1}{2\,\delta\, B}\left(1-\frac{1}{e}\right).
 $$
\end{theorem}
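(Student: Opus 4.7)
The plan is to exhibit a specific greedy algorithm, analyze it via the classical $(1-1/e)$ bound for submodular maximization under a cardinality constraint, and then link the resulting simplified objective back to $\Delta_\MoV^S$.

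First I would fix the algorithm. Set $k=\lfloor B/\delta\rfloor$ and let $\bar m$ be the canonical message with $\bar m(0)=\delta$ and $\bar m(i)=0$ for all $i\neq 0$; note $|\bar m|=\delta$. By the definition of $\delta$, any voter who receives $\bar m$ sees her $c_0$-score rise enough to dominate every other candidate. Define
\[
 g(S) \;=\; \Expec_H\bigl[\,|R_S(H)\setminus V_{c_0}|\,\bigr],
\]
where $R_S(H)$ is the set of voters reached in the live-graph $H$ by at least one seed in $S$ under the cascade driven by $\bar m$. Starting from $S=\emptyset$, greedily add the seed $s\notin S$ with the largest marginal $g(S\cup\{s\})-g(S)$ until $|S|=k$; output the resulting seed set $\hat S$ together with the message assignment $\hat M$ that sets $m_s=\bar m$ for every $s\in\hat S$. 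The total cost is $k\delta\leq B$, so the solution is feasible.

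The analysis would then proceed in four steps. (i) As the expected coverage of a random reachability set system restricted to the fixed subset $V\setminus V_{c_0}$, the function $g$ is monotone submodular by the classical independent-cascade argument of~\cite{kempe2003maximizing}, so the greedy attains $g(\hat S)\geq(1-1/e)\,\max_{|S|\leq k} g(S)$. (ii) Because $\bar m$ converts every reached voter to $c_0$ and leaves $|V^*_c|\leq|V_c|$ for every $c\neq c_0$, the algorithm achieves $\Expec_H[\Delta_\MoV^S(\hat S,\hat M,H)]\geq g(\hat S)$. (iii) For any feasible $(S^*,M^*)$, both summands of $\Delta_\MoV^S$---the gain in $c_0$-voters and the drop suffered by the originally leading opponent $c_{\max}\neq c_0$---only involve voters of $V\setminus V_{c_0}$ reached by at least one cascade of $S^*$ (using $V_{c_{\max}}\subseteq V\setminus V_{c_0}$), so each is bounded by $g(S^*)$, giving $\Expec_H[\Delta_\MoV^S(S^*,M^*,H)]\leq 2\,g(S^*)$. (iv) Each seed in $S^*$ costs at least $1$, so $|S^*|\leq B$; ordering the seeds of $S^*$ by greedy marginal contribution to $g$ (which yields a decreasing sequence of marginals by submodularity) and averaging gives
\[
 \max_{|S|\leq k} g(S) \;\geq\; \frac{k}{|S^*|}\,g(S^*) \;\geq\; \frac{k}{B}\,g(S^*).
\]
Chaining the four inequalities together with $k\geq(B-\delta+1)/\delta$ yields the stated ratio $\rho = \frac{B-\delta+1}{2\delta B}(1-1/e)$.

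The main obstacle I anticipate is Step (iii): the naive bound $\Delta_\MoV^S\leq 2\,\Expec_H[|R_{S^*}(H)|]$ is too weak, since the unrestricted reachability count includes voters already in $V_{c_0}$, which are irrelevant to $\Delta_\MoV^S$. The key observation is that both the gain in $c_0$'s vote share and the decline in the leading opponent's vote share are controlled by the \emph{restricted} reachability $g$, which is exactly the submodular function the greedy optimizes. Once this alignment between the upper bound for the optimum and the lower bound for the greedy is secured, Steps (i), (ii), and (iv) are routine, and the final constant follows by direct calculation.
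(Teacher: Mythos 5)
Your proposal is correct and follows essentially the same route as the paper's proof: the same algorithm skeleton ($k=\lfloor B/\delta\rfloor$ seeds chosen greedily, each sending a cost-$\delta$ message that flips every reached voter to $c_0$), the same $(1-1/e)$ bound from submodularity, the same $k/B \ge (B-\delta+1)/(\delta B)$ cardinality argument, and the same factor-$2$ decomposition of $\Delta_\MoV^S$ into the gain of $c_0$ and the loss of the strongest opponent. The one substantive difference is that you run the greedy on the \emph{restricted} coverage $g(S)=\Expec_H\bigl[\,|R_S(H)\setminus V_{c_0}|\,\bigr]$ rather than on the raw influence $\Expec_H[\chi(S,H)]$, and this is in fact a genuine improvement: the paper's concluding inequality $\Expec_H[\chi(\hat{S},H)]\le\Expec_H[\Delta_\MoV^S(\hat{S},M^*,H)]$ only counts correctly the influenced voters \emph{outside} $V_{c_0}$, so a greedy that maximizes unrestricted $\chi$ can waste its budget influencing voters who already rank $c_0$ first (large $\chi$, zero gain in $\MoV$) and need not attain the stated ratio. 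Your step~(iii) shows that the matching upper bound $\Expec_H[\Delta_\MoV^S(S^*,M^*,H)]\le 2\,g(S^*)$ still holds for the restricted function, so the whole chain closes and the constant is unchanged; this is the version of the argument one should actually keep.
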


\begin{proof}
Let $m^*$ with $|m^*|=\delta$ be the message that cause each voter to vote for $c_0$, whatever was the ranking before the reception of this message. 
Our algorithm selects $\left\lfloor\frac{B}{\delta}\right\rfloor$ seeds through the greedy algorithm to maximize $\Expec_H[\chi(S,H)]$ (\emph{i.e.}, take at each time the seed that most increases this quantity), and let each of them to send the message $m^*$.
It directly follows that this algorithm runs in poly-time in greedy fashion.

In order to formally prove the approximation factor of this algorithm for the election control problem, let us denote
with $\hat{S}$ the set of seeds returned by our algorithm,
with $S^*$ the set of seeds maximizing $\Expec_H[\Delta_\MoV(S,M^*,H)]$,
with $S'$ the set of seeds of size $B$ that maximizes $\Expec_H[\chi(S,H)]$ and
with $S''$ the set of seeds of size $k = \left\lfloor\frac{B}{\delta}\right\rfloor$ that maximizes $\Expec_H[\chi(S,H)]$.

It is known that the function $\Expec_H[\chi(S, H)]$ is monotone and submodular on $S$
\citep{kempe2003maximizing}, \emph{i.e.}, $\Expec_H[\chi(S,H)] \leq \Expec_H[\chi(T,H)]$ and $\Expec_H[\chi(S \cup \{x\}, H)] - \Expec_H[\chi(S, H)] \geq \Expec_H[\chi(T \cup \{x\}, H)] - \Expec_H[\chi(T, H)]$ for every $S \subseteq T$ and every $x \notin T$. Consequently, the greedy algorithm is known to return, for every $k$, a set of $k$ seeds whose influence is an $\left(1 - \frac{1}{e}\right)$-approximation of the maximum expected influence achievable with $k$ seeds \citep{kempe2003maximizing}. 
Hence, we have that:
\begin{equation}
 \label{eq:proof1}
 \Expec_H[\chi(\hat{S},H)] \geq \left(1 - \frac{1}{e}\right) \Expec_H[\chi(S'',H)].
\end{equation}

 Note that $|V_{c}|- \Expec_H\left[\left|V^*_c(S^*,M^*,H)\right|\right] \le  \Expec_H\left[\chi(S^*,H)\right]$ for every $c \neq c_0$, since at most one vote can be lost by $c$ for every influenced node in graph $H$.
 Then we have that
\begin{equation}
	\begin{aligned}
	\label{eq:proof2}
 	& \max_{c \neq c_0} |V_{c}|- \Expec_H\left[\max_{c \neq c_0} \left|V^*_c(S^*,M^*,H)\right|\right]\\ 
 	& \qquad \le \max_{c \neq c_0} \Big\{|V_{c}|- \Expec_H\left[\left|V^*_c(S^*,M^*,H)\right|\right] \Big\}\\
 	& \qquad \le \Expec_H\left[\chi(S^*,H)\right].
	\end{aligned}
\end{equation}
A similar argument proves that
\begin{equation}
 \label{eq:proof3}
 \Expec_H\left[\left|V^*_{c_0}(S^*,M^*,H)\right|\right] - |V_{c_0}| \le \Expec_H\left[\chi(S^*,H)\right].
\end{equation}

 Moreover, by submodularity of $\chi$, it holds that $\frac{\Expec_H\left[\chi(S',H)\right]}{|S'|} \leq \frac{\Expec_H\left[\chi(S'',H)\right]}{|S''|}$.
Hence, since $|S'| = B$ and $|S''| = \left\lfloor\frac{B}{\delta}\right\rfloor \geq \frac{B-\delta+1}{\delta}$, we achieve that
\begin{equation}
\label{eq:proof4}
 \Expec_H\left[\chi(S',H)\right] \leq \frac{\delta B}{B-\delta+1}\Expec_H\left[\chi(S'',H)\right].
\end{equation}

 Moreover, by definition of $\Delta_\MoV^S$, we have 
 \begin{multline*}
 \Expec_H[\Delta_\MoV^S(S^*,M^*,H)] =  \\ \Expec_H\left[\left|V^*_{c_0}(S^*,M^*,H)\right|\right] - \Expec_H\left[\max_{c \neq c_0} \left|V^*_c(S^*,M^*,H)\right|\right] -  \left(|V_{c_0}| - \max_{c \neq c_0} |V_{c}| \right).
 \end{multline*}
 Hence, we directly achieve that
 \begin{multline*}
 	\Expec_H[\Delta_\MoV^S(S^*,M^*,H)] = \\ \left(\Expec_H\left[\left|V^*_{c_0}(S^*,M^*,H)\right|\right] - |V_{c_0}|\right) + \left( \max_{c \neq c_0} |V_{c}| - \Expec_H[\max_{c \neq c_0} |V^*_c(S^*,M^*,H)|] \right).
 \end{multline*}
Then, have that
\begin{align*}
  \Expec_H[\Delta_\MoV^S(S^*,M^*,H)] &  
 \leq 2\,\Expec_H\left[\chi(S^*,H)\right]  \tag*{\text{(by \eqref{eq:proof2} and \eqref{eq:proof3})}}\\
& \leq 2\,\Expec_H\left[\chi(S',H)\right] \tag*{\text{(by definition of $S'$)}}\\
 & \leq \frac{2\,\delta\, B}{B-\delta+1}\Expec_H\left[\chi(S'',H)\right] \tag*{\text{(by \eqref{eq:proof4})}}\\
 & \leq \frac{2\,\delta\, B}{B-\delta+1} \left(1-\frac{1}{e}\right)^{-1}\Expec_H\left[\chi(\hat{S},H)\right] \tag* {\text{(by \eqref{eq:proof1})}}\\
  & \leq \frac{2\,\delta\, B}{B-\delta+1} \left(1-\frac{1}{e}\right)^{-1}\Expec_H\left[\Delta_\MoV^S(\hat{S},M^*,H)\right] \nonumber,
\end{align*}
 where the last inequality follows from the fact that, by definition of $m^*$, all the influenced nodes will vote for $c_0$.\end{proof}

Notice that Theorem \ref{thm:approx} guarantees a constant approximation factor whenever $\delta$ is fixed. 
Our algorithm heavily depends on the possibility that the seeds can send messages with information on multiple candidates. Indeed, Theorem~\ref{thm:DkS} shows that, without this possibility, it is unlikely that the ECS problem is approximable within a constant factor even with two candidates.

%

\subsection{Seeding Complexity in Variants of the Model}
\label{subsec:extensions}
We describe some extensions and variants of our model and  show how most of the results presented in the previous section extend to these settings.
\subsubsection{Bribed Seeds}
In our model, seeds act as initiators of positive and/or negative messages about the candidates. However, apart from that, their behavior is exactly the same as any other node in the network. In particular, the messages that they receive affect their preference ranks and, consequently, their vote.
We also study a variant, in which seeds are \emph{bribed} (\emph{i.e.}, for each seed, their preferred candidate is set by the manipulator, and it is independent from her initial preference rank, and from the messages that she sends and receives).

It directly follows that the reduction described in the proof of Theorem~\ref{thm:inapprox} does not work in this variant.
However, we next show that, even in this variant, the \textsc{ECS} problem is inapproximable.
\begin{theorem}
\label{thm:bribed_inapprox}
Given the set of ECS instances said hard to manipulate and with at least three candidates, for any $\rho > 0$, there is not any poly-time algorithm returning a $\rho$-approximation to the ECS problem with bribed seeds, unless $\mathsf{P} = \mathsf{NP}$.
\end{theorem}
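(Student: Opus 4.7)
The plan is to adapt the reduction of Theorem~\ref{thm:inapprox}, taking explicit care of the extra power the bribed variant gives to the manipulator. The crucial step of the original proof that breaks here is the observation that, since $\delta>B$, one has $|V^*_{c_0}(S^*,M^*,H)|=|V_{c_0}|$. In the bribed variant this is no longer true: the manipulator can set the vote of every seed directly, so $|V^*_{c_0}|$ can grow by up to the number of bribed seeds. In the construction of Theorem~\ref{thm:inapprox}, this allows the trivial strategy of bribing $B$ seeds with preferred candidate $c_1$ to vote for $c_0$, obtaining $\MoV>0$ without any Set-Cover structure.

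To restore hardness, I would modify the instance along two axes. First, I would shift the initial vote counts so that $|V_{c_1}|$ and $|V_{c_2}|$ both exceed $|V_{c_0}|+2B$; this ensures that no bribing-only strategy, which at best increases $|V_{c_0}^*|$ by $B$ while decreasing some $|V_{c_j}^*|$ by at most $B$, can close the gap with every non-$c_0$ candidate simultaneously. Second, to keep the Set-Cover propagation strong enough to compensate for this shifted baseline, I would \emph{amplify} the $G_1$ gadget: for each element-node $v_{z_i}$ in $G_1$ I append a directed chain of $T$ additional voters whose preferences coincide with those of $G_1$ (so they vote $c_2$ initially, but flip to $c_1$ as soon as a ``$+1$ on $c_1$'' cascade reaches them). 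Choosing $T$ polynomially large in the input size, a successful Set-Cover solution triggers a cascade flipping $\Theta(nT)$ voters from $c_2$ to $c_1$, a shift of a magnitude unattainable by $B$ bribes alone. The size of the $G_2$ clique and the number of padding voters in $G_3$ are then retuned so that, when \emph{combined} with bribing the $h{+}1$ propagation seeds to vote for $c_0$, the Set-Cover strategy exactly produces $\MoV>0$, while any deviation fails.

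The proof then follows the same two-direction template as Theorem~\ref{thm:inapprox}. For the \emph{if} direction, from a Set-Cover $X^*$ of size $h$ one constructs $(S^*,M^*)$ by taking the same $h$ seeds $v_{x_i}$ in $G_1$ and one seed in $G_2$ (each sending the same message as in Theorem~\ref{thm:inapprox} and each bribed to vote $c_0$); by the amplified cascade and the $h{+}1$ extra $c_0$-votes coming from bribed seeds, one checks $\MoV>0$. For the \emph{only if} direction, one argues that any feasible strategy with $\MoV>0$ must (i) use one seed to trigger the full $G_2$ clique propagation (since otherwise $c_1$'s count remains too large to be overcome by at most $B$ bribes), and (ii) use the remaining $h$ seeds in $G_1$ in a way that activates every $v_{z_i}$ (since otherwise the uncovered chains contribute $\Theta(T)$ extra votes to $c_2$, again too many to offset by bribing); together, (i)--(ii) force the seed set to project onto a Set-Cover of size at most $h$.

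The main obstacle is the \emph{only if} direction: unlike in Theorem~\ref{thm:inapprox}, where $c_0$'s count is frozen, here the manipulator can freely trade message budget for bribes, so one has to rule out every mixed strategy of partial bribing and partial propagation. I expect this to require a careful case analysis bounding, for each possible split of the budget between bribes and message seeds, the resulting changes to $|V^*_{c_0}|$, $|V^*_{c_1}|$, and $|V^*_{c_2}|$, and showing that the only split that can yield $\MoV>0$ is the one corresponding to full Set-Cover propagation plus bribing of the propagation seeds. Tuning the amplification factor $T$ large enough (yet still polynomial) so that omitting even a single element's chain swamps any budgetary reallocation is the technical heart of this step.
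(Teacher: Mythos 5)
Your diagnosis of why the Theorem~\ref{thm:inapprox} argument breaks, and your instinct to amplify the gadget so that a cascade unlocked by a set cover dwarfs anything achievable by $B$ bribes, is exactly the mechanism of the paper's (much shorter) proof: since in a hard-to-manipulate instance only the bribed seeds --- at most $h+1$ nodes --- can ever be moved to $c_0$ ``for free,'' the paper simply replaces every node of the Theorem~\ref{thm:inapprox} instance by a clique of size $(h+1)\rho'$ with $\rho'$ large relative to $1/\rho$; this scales all vote counts uniformly, so the optimum is at least $(h+1)\rho'$ when a cover exists but $O(h+1)$ otherwise, and a $\rho$-approximation separates the two. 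Where your concrete construction goes wrong is in what it asks the gadget to certify. The objective is $\Delta_\MoV$, the \emph{increase} in margin of victory, not the sign of the final $\MoV$, and your tuning targets the latter. Concretely: (i) shifting the counts so that $|V_{c_1}|,|V_{c_2}|>|V_{c_0}|+2B$ does not stop a bribing-only strategy from achieving $\Delta_\MoV\approx 2B>0$, so the ``positive iff cover exists'' dichotomy is unrecoverable no matter how $G_2$ and $G_3$ are retuned; and (ii) that same shift makes final $\MoV>0$ unachievable by \emph{every} strategy, including your intended one: $c_0$ gains at most $B$ votes, hence $|V^*_{c_1}|+|V^*_{c_2}|\ge |V_{c_1}|+|V_{c_2}|-B$, hence $\max\{|V^*_{c_1}|,|V^*_{c_2}|\}\ge |V_{c_0}|+\tfrac{3}{2}B+1>|V^*_{c_0}|$. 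So the ``if'' direction cannot be completed as stated.

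The repair is to aim directly at a multiplicative gap in the optimal value of $\Delta_\MoV$: at least $\Omega(T)$ with a cover, at most $O(B)$ without one. Your $T$-chains are the right kind of device, but flipping $\Theta(nT)$ voters from $c_2$ to $c_1$ increases $\Delta_\MoV$ only insofar as it lowers $\max_{c\ne c_0}|V^*_c|$; if, as you propose, the components are retuned so that the cover strategy ``exactly produces $\MoV>0$,'' then the leading competitor's count drops by only $O(B)$ and the optimum exceeds the bribing-only baseline by an additive $O(B)$, which no constant-factor approximation is forced to detect. You would instead need to set the initial imbalance $|V_{c_1}|-|V_{c_2}|=\Theta(T)$ and verify that a full cover balances the two candidates at a level $\Theta(T)$ below the old maximum, while even one uncovered element leaves $c_2$ essentially at the old maximum --- precisely the bookkeeping that the uniform clique blow-up inherits from Theorem~\ref{thm:inapprox} for free.
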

\begin{proof}
 Consider the reduction described in the proof of Theorem~\ref{thm:inapprox}, except that now each node is enlarged into a clique of size $(h+1) \,\rho'$, where $\rho' > \rho$.
 Hence, if a set cover of size at most $h$ exists, then, $\Delta_{\MoV}(S^*,I^*,H) \geq (h+1) \, \rho'$, otherwise the only nodes that eventually change opinion are the seeds, that are at most $h+1$. Thus any $\rho$-approximate algorithm must be able to distinguish these two cases and thus solves the \textsc{Set-Cover} problem in polynomial time.
\end{proof}
Instead, it is easy to check that Theorem~\ref{thm:approx} is unaffected by the fact that seeds are bribed, and thus a constant approximation is still possible when $\delta \leq B$.

\subsubsection{Other Objective Functions}
In addition to the maximization of the increase in the margin of victory, alternative objective functions, previously studied in~\cite{wilder2018controlling}, may be of interest.

For example, one may want to maximize the increase in the probability of victory. For this objective function, it is not trivial to see that Theorem~\ref{thm:inapprox} keeps holding.
However, notice that this objective function makes the problem even harder than maximizing the increase in the margin of victory. Indeed, for the latter objective, Theorem~\ref{thm:approx} implies that a $\frac{1}{2}\left(1 - \frac{1}{e}\right)$-approximation can be computed in poly-time when only two candidates are involved. It is instead not hard to see that, to maximize the increase in the probability of victory with only two candidates, it is sufficient that all selected seeds send the same message. Hence, for two candidates, maximizing the increase in the probability of victory in our setting is the same as doing it in the setting studied in~\cite{wilder2018controlling}. Hence, the problem cannot be approximated, unless $\mathsf{P}=\mathsf{NP}$, within a factor $\rho > 0$, even for two only candidates.

An apparently weaker goal would be that one of computing the set of seeds and the corresponding messages so that the probability of victory  is merely above a given threshold (so the set of feasible solutions would be larger than in the setting described above). Unfortunately, this objective function does not make the problem easier to be solved. Indeed, not only Theorem~\ref{thm:inapprox} holds in this setting regardless of the threshold, but one may show that, as for the goal of maximizing the probability of victory, the inapproximability still holds when only two candidates are available \cite{wilder2018controlling}.
 
\subsubsection{Threshold Dynamics}
The results provided in Section~\ref{sec:general}, that are based on a multi-issue independent cascade model, can be extended to settings in which information diffuses according to the \emph{linear threshold model} \cite{kempe2003maximizing}. This  represents the most prominent among the diffusion models alternative to the independent cascade. In the linear threshold model, 
for each node $v$ of the network, there is a threshold $\theta_v$ drawn randomly in $[0,1]$, and incoming edges $(u,v)$ have a weight $w_{u,v}$ such that $\sum_{(u,v)} w_{u,v} = 1$.
Then, a node $v$ becomes active at time $t$ only if the sum of weights of edges coming from active nodes passes the threshold.

It is known that this diffusion model leads to different dynamics with respect to the independent cascade model. Still, we show that our proofs can be adapted. In particular,  Theorem~\ref{thm:inapprox} and Theorem~\ref{thm:approx} still hold.

Specifically, for the inapproximability result, we use, in place of \textsc{Set-Cover}, a reduction from \textsc{Vertex-Cover}.
This is the problem of deciding whether, given a graph $Z$ of $g$ nodes and an integer $h$, there is a subset $S$ of at most $h$ nodes of $Z$ such that every edge of $Z$ has at least one endpoint in $S$.
The reduction  is similar to the one described in Theorem~\ref{thm:inapprox}. Namely, the component $G_1$ consists of $r$ copies of the graph $G$. Now by setting $n=r\,(g-h)$, we let components $G_2$ and $G_3$ to have the same number of nodes as in the proof of Theorem~\ref{thm:inapprox}, except that now the nodes in each component are not arranged as a clique, but as a directed ring (so that a message sent by a node in one component will activate all nodes in that component regardless of their threshold). Notice that, by considering the same initial ranks as in the proof of Theorem~\ref{thm:inapprox}, the expected margin of victory of $c_0$ increases by $1$ if and only if there is in $G$ a vertex cover of size at most $h$. Finally, the expected increase in the margin of victory when no vertex cover exists can be made as low as desired by increasing~$r$.

On the other side, it directly follows that the greedy algorithm proposed in Theorem~\ref{thm:approx} works, with the same approximation factor, even with the linear threshold diffusion model. Indeed, it is known that the influence maximization is a monotonic and submodular function even with this dynamics \citep{kempe2003maximizing}, and it can be observed that this is sufficient to make the proof of Theorem~\ref{thm:approx} to hold.

\subsubsection{Seeds with Different Costs}
In our model, we assume that each node can be selected as a seed at the same cost. This can be highly unrealistic. Hence, an extension to our model would be to assume that each node $u$ has a different cost $w(u)$ that should be paid for each message initiated by that node.

Intuitively, this extension makes the election control problem harder.
Hence, inapproximability results clearly extend to this setting too.
Actually, we can prove that the inapproximability holds even if we restrict to undirected graphs. We can do that by adapting, within the framework of the proof of Theorem~\ref{thm:inapprox},
the reduction described in~\cite{khanna2014influence} for the hardness of the influence maximization problem, from the problem of finding a vertex cover in a \emph{cubic graph}, \emph{i.e.}, a graph in which every vertex has degree exactly equal to three.~\footnote{Note that, in that reduction, the author assumes that there is a set of non-allowed seed, that can be simulated in our setting by setting the initial preference rank for those nodes such that $c_0$ is the best ranked candidate and $c_2$ the last ranked one.}

Surprisingly, however, we have that, whenever $\delta \leq B$, a poly-time algorithm returning a constant approximation to the election control problem exists even if the nodes have heterogeneous seeding costs.
Indeed, this setting admits a poly-time algorithm for influence maximization returning a $\left(1 - \frac{1}{\sqrt{e}}\right)$-approximation of the optimal seed set \citep{nguyen2013budgeted}.
Then, the arguments of the proof of Theorem~\ref{thm:approx} immediately prove that this algorithm
provides a $\rho$-approximation for the extension of the election control problem to voters with different costs, where
 $\rho = \frac{B - \delta + 1}{2\delta B}\left(1-\frac{1}{\sqrt{e}}\right)$.

\section{Edge Removal Complexity}
\label{sec:edgeremoval}

We study, in this section, the \textsc{ECER} and \textsc{IMER} problems.
All the results provided in this section hold even with unitary score distances.
%
%
%
Initially, we focus on the IMER problem when one can only remove edges, as its characterization is useful for the characterization of the \textsc{ECER} problem with two candidates and limited budget.
We show that the IMER problem is hard.
Our proof reduces from the \textsc{Maximum-Subset-Intersection} problem that does not admit any constant-factor approximation polynomial-time algorithm unless \Poly~$=$~\NP, as showed in~\cite{SHIEH2012723}.

\begin{definition}[\textsc{Maximum-Subset-Intersection} (\textsc{MSI})]
	Given a finite set $N = \{z_1,\dots,z_n\}$ of elements, a collection $X= \{x_1,\dots,x_g\}$ of sets with $x_i \subset N$, and a positive integer $h$, the goal is to find exactly $h$ subsets $x_{j_1},\dots,x_{j_h}$ whose intersection size $|x_{j_1} \cap \ldots \cap x_{j_h}|$ is maximum.
\end{definition}

\begin{theorem}\label{thm:hardIinfMin} For any constant $\rho>0$, there is no polynomial-time algorithm returning a $\rho$-approximation to \textsc{IMER} problem when the budget $B$ is finite, unless \Poly~$=$~\NP.
\end{theorem}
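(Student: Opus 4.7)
The plan is to reduce from \textsc{Maximum-Subset-Intersection} (MSI), which by~\cite{SHIEH2012723} admits no polynomial-time constant-factor approximation unless $\mathsf{P}=\mathsf{NP}$. Given an MSI instance $(N,X,h)$ with $|N|=n$ and $|X|=g$, I would construct an IMER instance $(G,S,M,B)$ as follows (the messages $M$ are immaterial because $\chi$ does not depend on them). Introduce one seed $s$; for every set $x_i\in X$ a ``hub'' node $a_i$ together with a deterministic edge $(s,a_i)$; and, for every element $z_j\in N$, $\lambda$ copies $v_j^{(1)},\dots,v_j^{(\lambda)}$, where $\lambda$ is a polynomial scaling parameter fixed later. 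For every pair $(i,j)$ with $z_j\notin x_i$ and every $k\in[\lambda]$, add a deterministic edge $(a_i,v_j^{(k)})$. Set $B=g-h$. A routine preprocessing (discarding every element of $\bigcap_{i\in[g]}x_i$, whose copies would be initially uninfluenced and so not blockable) allows us to assume $\bigcap_{i\in[g]}x_i=\emptyset$, which leaves MSI hardness intact.

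The first step is a structural claim: an optimal $E^{\ast}$ may be assumed to consist of exactly $B$ first-level edges $(s,a_i)$. By an exchange argument, any second-level removal $(a_i,v_j^{(k)})$ can newly block at most the single copy $v_j^{(k)}$, whereas replacing it by an unused first-level edge $(s,a_{i''})$ blocks $a_{i''}$ plus every downstream copy that loses its last surviving path, a weakly larger set; monotonicity of $\chi$ in the deleted edge set also lets us assume $|E^{\ast}|=B$. Writing $I^{\ast}=\{i:(s,a_i)\in E^{\ast}\}$ and observing that a copy $v_j^{(k)}$ is blocked iff no $a_i$ with $i\notin I^{\ast}$ and $z_j\notin x_i$ is still reachable, i.e., iff $z_j\in\bigcap_{i\notin I^{\ast}}x_i$, one gets
\[
\Delta I^-(E^{\ast})\;=\;B\;+\;\lambda\,\Bigl|\bigcap_{i\in J}x_i\Bigr|,\qquad J:=[g]\setminus I^{\ast},\ |J|=h.
\]
Maximizing over $I^{\ast}$ makes $J$ range over every $h$-subset of $[g]$, so $\mathrm{OPT}_{\mathrm{IMER}}=B+\lambda\cdot\mathrm{OPT}_{\mathrm{MSI}}$.

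For the approximation transfer, assume a polynomial-time $\rho$-approximation for IMER and apply it, obtaining $E'$ with $|E'|\le B$ and $\Delta I^-(E')\ge\rho\cdot\mathrm{OPT}_{\mathrm{IMER}}$. Letting $I'=\{i:(s,a_i)\in E'\}$ and bounding the contribution of the at most $B-|I'|$ second-level removals by one each, one gets $\Delta I^-(E')\le B+\lambda\,|\bigcap_{i\notin I'}x_i|$, and hence
\[
\Bigl|\bigcap_{i\notin I'}x_i\Bigr|\;\ge\;\rho\cdot\mathrm{OPT}_{\mathrm{MSI}}\;-\;\frac{(1-\rho)B}{\lambda}.
\]
Extending $I'$ arbitrarily to $\tilde I$ of size $B$ only enlarges the intersection, so $J'=[g]\setminus\tilde I$ is an admissible $h$-subset of $[g]$ enjoying the same lower bound. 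Choosing $\lambda=nB+1$ (polynomial in the input) forces the additive slack strictly below $1/n$, and integrality of the intersection size then converts this into a $(\rho-o(1))$-approximation for MSI, contradicting~\cite{SHIEH2012723} for any constant $\rho>0$.

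The main obstacle is the additive ``free'' $|I^{\ast}|\le B$ contribution from blocked hubs, which on its own would dilute the approximation ratio if $\mathrm{OPT}_{\mathrm{MSI}}$ were small relative to $B$. Polynomial scaling via $\lambda$ is what makes the MSI-dependent term dominate and absorbs this offset; the exchange argument in Step~1 then certifies that no clever mix of first- and second-level removals can circumvent the MSI barrier.
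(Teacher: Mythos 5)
Your reduction is essentially the paper's own proof: the same complement-bipartite gadget from \textsc{MSI} (a per-set bottleneck edge feeding element nodes blown up by a polynomial factor, budget $g-h$), the same canonical-form argument that only bottleneck edges need be removed, and the same $\Delta I^- = B + \lambda\cdot\mathrm{OPT}_{\textsc{MSI}}$ accounting driving the approximation transfer. Your version is, if anything, slightly more careful (the explicit exchange argument, the bound on non-canonical approximate solutions, and the preprocessing of the global intersection), but the route is the same.
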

\begin{proof}
	We reduce from \textsc{MSI}, showing that a constant-factor approximation algorithm for IMER implies the existence of a constant-factor approximation for \textsc{MSI}, thus having a contradiction unless \Poly~$=$~\NP.
	Given an instance $(X,N)$ of \textsc{MSI}, we build an instance of \textsc{IMER} as follows.
	For each element $z_i$, we add $n^2\,g^2$ nodes $v_{z_i,j}$ with $j \in \{1,\ldots,n^2g^2\}$.
	For each set $x_i \in X$, we add two nodes $v_{x_i,1}, v_{x_i,2}$ and an edge from $v_{x_i,1}$ to $v_{x_i,2}$. All $v_{x_i,1}$ are seeds, while each $v_{x_i,2}$ has an edge to each node $v_{z_i,j}, z_i \in N \setminus x_i$ with $ j \in \{1,\ldots,n^2g^2\}$, \emph{i.e.}, all the nodes of all the elements not in the set $x_i$.
	Figure~\ref{fig:ER_IM} depicts an example of network built with the above mapping.
	The budget is set equal to $g-h$.
	\begin{figure}[ht]
		\centering
		\includegraphics[width=0.7\linewidth]{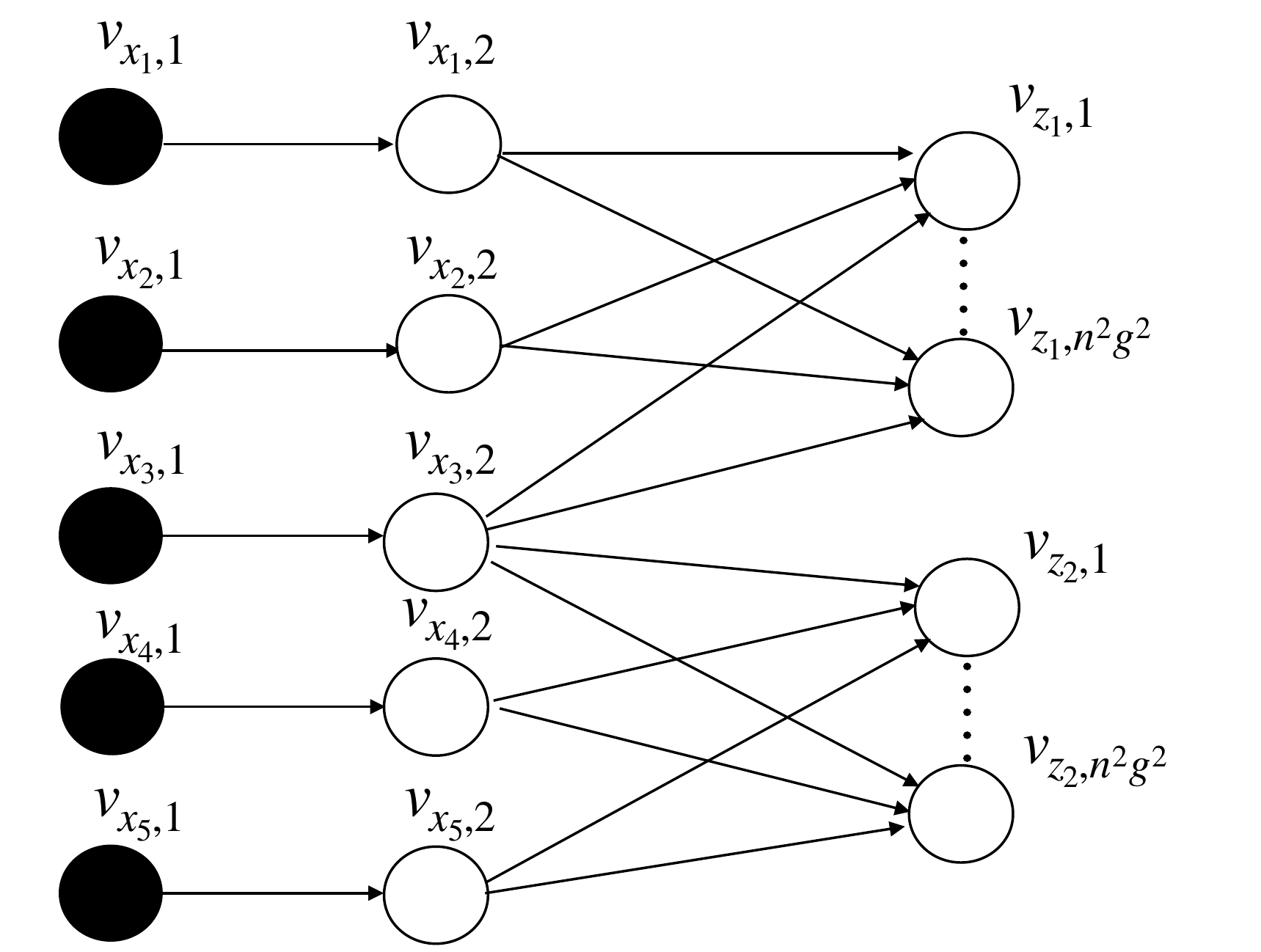}
		\caption{Structure of the election control problem used in the proof of Theorem \ref{thm:hardIinfMin}.}
		\label{fig:ER_IM}
	\end{figure}
	
	Notice that, in the optimal solution, only edges from nodes $v_{x_i,1}$ to $v_{x_i,2}$ are removed.
	Thus, the problem reduces to choose $g-h$ sets $x_i \in X$ and remove the edges from  $v_{x_i,1}$ to $v_{x_i,2}$, such that as least as possible nodes $v_{z_i,j}$ are influenced.
	The optimal value is obtained by choosing $X^*\subset X$ of cardinality $h$ that is solution of \textsc{MSI} and then removing the edges from  $v_{x_i,1}$ to $v_{x_i,2}$ for all $x \in X \setminus X^*$. 
	Call this set of edges $E^*$.
	If we remove edges $E^*$, all the nodes $v_{z_i,j}, z_i \in \cap_{x_i \in X^*} x_i$ are not influenced, since there are no edges from $v_{x_i,2}, x \in X^*$ to $v_{z_i,j}$, as they all exist in the complement of the bipartite graph.
	
	The relationship between the optimal solution of IMER and that one of \textsc{MSI} is $\Delta I^-(E^*)=g-h+OPT \,n^2\,g^2$, where $OPT$ is the optimal solution to \textsc{MSI}.
	Assume by contradiction there exists an $\rho$-approximation algorithm $\mathcal A$ for IMER, where $\rho \in (0,1)$.
	This implies that there exists an edge set $E'$ such that $\Delta I^-(E')=g-h+APX \,n^2\,g^2$, where $APX$ is an approximation of \textsc{MSI}.
	%
	Since $\Delta I^-(E') \ge \rho \,\Delta I^-(E^*)$, then $g-h+APX \,n^2\,g^2 \ge (g-h+OPT \,n^2\,g^2) {\rho}$, and 
	$APX \ge \frac{(g-h)(\rho-1)}{n^2\,g^2}+\rho \,OPT$.
	Hence, there exists a $\rho'$ such that $APX \ge \rho' OPT$ and an algorithm $\mathcal A'$ for \textsc{MSI} with a $\rho'$-approximation factor.
\end{proof}
%
 
We can state the following corollary, whose proof directly follows from the proof of the above theorem. 
\begin{corollary}\label{cor:ERFixed}
For any constant $\rho>0$, there is not any polynomial time algorithm returning a $\rho$-approximation to the \textsc{ECER} problem when budget $B$ is finite even when there are two candidates and single-news-article-messages, unless \Poly~$=$~\NP.
\end{corollary}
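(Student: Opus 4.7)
The plan is to adapt the reduction from \textsc{MSI} of Theorem~\ref{thm:hardIinfMin} with only minor additions, embedding it in an election with exactly two candidates $c_0, c_1$ and single-news-article messages. I keep the same graph $G$ on the nodes $\{v_{z_i,j}\}, \{v_{x_i,1}\}, \{v_{x_i,2}\}$, the same edges (all of probability $1$), the same seed set $S = \{v_{x_i,1} : x_i \in X\}$, and the same budget $B = g-h$. To turn it into an ECER instance I set each seed to send the single-news-article message $m_s = (0,+1)$, positive for $c_1$; each $v_{z_i,j}$ receives the unitary initial preference $\pi_v = \langle 1, 0 \rangle$ and thus initially votes $c_0$; and each $v_{x_i,1}$ and $v_{x_i,2}$ receives $\pi_v = \langle 0, 1 \rangle$ and always votes $c_1$ regardless of messages received.

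Under the $(1-\epsilon)$ tie-breaking perturbation of Section~\ref{sec:model}, a voter $v_{z_i,j}$ switches from $c_0$ to $c_1$ exactly when she is activated by the MI-IC process, because her score vector then becomes $\langle 1-\epsilon, 1 \rangle$ and $1 > 1-\epsilon$. No other voter ever changes her vote. Hence $\Delta_{\MoV}^-(E',H)$ equals exactly twice the number of nodes $v_{z_i,j}$ whose activation is blocked by the removal $E'$: each such voter contributes $+1$ to the count of $c_0$ and removes $1$ from the count of $c_1$, the only competitor. This makes ECER on this instance equivalent, up to a factor of $2$, to the influence-minimization problem on the same instance restricted to the $v_{z_i,j}$ layer, which is exactly what the proof of Theorem~\ref{thm:hardIinfMin} analyses.

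The remainder of the argument then follows \emph{verbatim} from the proof of Theorem~\ref{thm:hardIinfMin}. Any optimal $E'$ consists of $g-h$ seed edges $v_{x_i,1} \to v_{x_i,2}$, selecting an $h$-subset $X^* \subseteq X$ to keep live, and the unblocked $v_{z_i,j}$ are those with $z_i \in \bigcap_{x \in X^*} x$. I may assume without loss of generality that $\bigcap_{x \in X} x = \emptyset$ (otherwise delete these universal elements from $N$ and from every $x_i$, obtaining an equivalent \textsc{MSI} instance). Under this normalization, the optimum satisfies $\Delta_{\MoV}^-(E^*,H) = 2\,OPT\cdot n^2 g^2$, while any feasible $E'$ corresponds to some $h$-subset $X'$ with $\Delta_{\MoV}^-(E',H) = 2\,\bigl|\bigcap_{x \in X'} x\bigr|\cdot n^2 g^2$. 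Hence a polynomial-time $\rho$-approximation for ECER delivers a polynomial-time $\rho$-approximation for \textsc{MSI}, which is impossible unless $\mathsf{P}=\mathsf{NP}$ by~\cite{SHIEH2012723}.

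The only step that will require a separate (short) justification is the claim that an optimum only removes seed edges: removing any other edge either saves at most one $v_{z_i,j}$ activation (since alternative incoming paths exist) or none at all, whereas removing a single seed edge deactivates an entire $v_{x_i,2}$ and can save $\Theta(n^2 g^2)$ activations. With budget $B=g-h$ and the inflation factor $n^2 g^2$ in the construction, seed-edge removals strictly dominate, so I can restrict attention to them without loss. This is the only piece of reasoning that the ECER proof must add on top of the IMER proof, and it is minor.
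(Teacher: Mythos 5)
Your proposal is correct and follows essentially the same route as the paper: reuse the \textsc{MSI} construction of Theorem~\ref{thm:hardIinfMin} verbatim, assign two candidates with unitary scores so that every blocked activation translates into a vote saved for $c_0$ and denied to $c_1$, and inherit the inapproximability. The only (cosmetic) difference is that the paper gives \emph{all} nodes scores $\langle 1,0\rangle$ and message $(-1,0)$, which yields the clean identity $\Delta_\MoV^- = 2\,\Delta I^-$ directly, whereas your choice of making the $v_{x_i,\cdot}$ nodes permanent $c_1$-voters forces you to redo the final arithmetic on the $v_{z_i,j}$ layer only --- both work.
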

\begin{proof} We can build an instance of ECER with the same graph of Theorem \ref{thm:hardIinfMin}, two candidates, all nodes with scores $ \langle 1,0\rangle $ and seeds with messages $(-1,0)$. It is easy to see that $\Delta_\MoV= 2 \Delta I^-$. Since approximating $\Delta I^-$ is hard, it follows that approximating $\Delta_\MoV^-$ is hard too.
\end{proof}

Since with finite budget even the setting with single-news-article messages and two candidates is hard, we focus on those problems in which the budget is unlimited ($B = \infty$). Notice that, while a finite budget corresponds to the case in which a manipulator pays a platform, in the case in which the manipulator is the platform itself, the budget is actually unlimited. 

In networks with single-news-article-messages and only two candidates, the optimal solution can be found easily. 
Intuitively, the problem becomes easy because we can easily solve \textsc{IMER}. If we have unlimited budget, the optimal solution to \textsc{IMER} removes all the edges.
%
It is then easy to extend this solution to solve the ECEA when all the seeds send the same message on the same single candidate  and there are only two candidates:
if the message is negative for $c_0$, \emph{e.g.}, $q_0=-1$ or $q_1=1$, we remove all edges from the network, clearly minimizing the negative effects of the diffusion of the message;
if the message is positive for $c_0$, \emph{e.g.}, $q_0=1$ or $q_1=-1$, since we cannot increase the diffusion by removing edges, we do not modify the network.
From the previous arguments, we can directly state the following.
\begin{obs}
\label{obs:polyECER}
	There exists a polynomial-time algorithm for the \textsc{ECER} problem with single-news-article messages, two candidates, and unlimited budget.
\end{obs}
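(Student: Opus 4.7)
The plan is to observe that, in the setting of Observation~\ref{obs:polyECER}, only two candidate solutions ever need to be considered, namely $E' = \emptyset$ and $E' = E$, and that a straightforward case analysis based on the sign of the common message tells us which of the two is optimal. Since both sets are trivially computable in polynomial time, this suffices.

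First I would classify the four possible single-news-article messages over two candidates according to their effect on $c_0$: I will call $(+1,0)$ and $(0,-1)$ \emph{pro-$c_0$}, and $(-1,0)$ and $(0,+1)$ \emph{anti-$c_0$}. Because all seeds share the same message (by definition of the single-news-article setting) and score distances are unitary, I will check, using the perturbed update rule together with the stated tie-breaking convention, that every influenced voter either switches her vote in the direction favored by the message or keeps it unchanged. Thus the induced vote changes go in a single, globally consistent direction across the whole network.

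Next I would invoke the standard monotonicity of the independent cascade: by coupling live graphs through an independent Bernoulli with parameter $p(e)$ drawn once for each $e \in E$, and simply ignoring the sampled bit for removed edges, one obtains, realization by realization, the inclusion $A^t_{m_s}(E \setminus E') \subseteq A^t_{m_s}(E)$ for every seed $s$ and every time $t$. Combined with the previous step, this yields that $\Expec_H[\MoV(S,M,E \setminus E',H)]$ is a monotone function of the set of removed edges: weakly decreasing in $E'$ when the message is pro-$c_0$ and weakly increasing in $E'$ when the message is anti-$c_0$.

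The conclusion is then immediate: in the pro-$c_0$ case I output $E' = \emptyset$, and in the anti-$c_0$ case I output $E' = E$ (feasible because the budget is unlimited); deciding which case applies requires only inspecting the single non-zero coordinate of the common message. I do not anticipate a genuine obstacle; the only mildly subtle point is the coupling step that establishes diffusion monotonicity, but this is a well-known property of the independent cascade model and carries over unchanged to MI-IC when all seeds share a single message.
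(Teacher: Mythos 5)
Your proposal is correct and follows essentially the same route as the paper: classify the common message as pro-$c_0$ or anti-$c_0$, then output $E'=\emptyset$ or $E'=E$ respectively, using monotonicity of the cascade under edge removal. The paper states this more tersely (asserting that removing edges cannot increase diffusion and that removing all edges minimizes it), whereas you make the coupling argument and the direction of each voter's switch explicit, but the underlying idea is identical.
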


Now we show that extending the setting to three or more candidates elections or to the diffusion of different messages makes the problem hard.
We introduce the \textsc{Independent-Set} problem, that is known not to be approximable to any constant factor in~\cite{v003a006}, to prove the hardness of the \textsc{ECER} when there are three candidates and messages are single-news-article.

\begin{definition}[\textsc{Independent-Set}]
	Given a graph $G=(X,N)$, with $|X|=g$ vertexes and $|N|=n$ edges,  find the largest set of vertexes  $X^*$ such that there is no edge connecting two vertexes in $X^*$.
\end{definition}

\begin{theorem} \label{thm:ER_single_message}
	For any constant $\rho > 0$, there is no polynomial time algorithm returning a $\rho$-approximation to the \textsc{ECER} with single-news-article messages even when there are three candidates and the budget is unlimited, unless \Poly~$=$~\NP.
\end{theorem}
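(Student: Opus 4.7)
The plan is to reduce from \textsc{Independent-Set}, which admits no constant-factor polynomial-time approximation unless \Poly~$=$~\NP. Given an instance $(X,N)$ with $g$ vertices and $n$ edges, I would construct an ECER instance on three candidates with single-news-article messages, unitary score distances, and unlimited edge-removal budget, so that a large independent set in $G$ corresponds to a set of edges whose removal produces a large increase in the margin of victory of $c_0$, and vice versa.

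The construction would use a fixed set of seeds propagating a single-news-article message of the form $+c_1$ along a layered network. For each vertex $x_i \in X$, I would introduce a cluster of ``reward'' nodes whose activation benefits $c_0$, reachable only through the propagation initiated from a vertex-side gadget associated to $x_i$. For each edge $(x_i,x_j) \in N$, I would place an ``edge gadget'' node with incoming arcs from both the $x_i$-side and the $x_j$-side, and assign it initial scores of the form $\langle 2, 0, 1 \rangle$, so that a single $+c_1$ arrival leaves the vote on $c_0$ (since $\langle 2,1,1\rangle$ still ranks $c_0$ first) whereas a second $+c_1$ arrival tips the scores into a tie broken, under the paper's rule, in favor of $c_1$ (since $c_1$ had the lower original score). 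This AND-like behavior, implemented via multi-activation and tie-breaking, is precisely what will encode the pairwise conflicts of \textsc{Independent-Set}.

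To close the reduction, given an independent set $X^* \subseteq X$ one removes the edges feeding the vertex-side gadgets of $X \setminus X^*$, preserving the reward clusters of $X^*$ while ensuring that every edge gadget receives at most one activation (because at most one endpoint of each edge in $N$ lies in $X^*$). A direct computation then shows that $\Expec_H[\Delta_\MoV^-]$ grows linearly in $|X^*|$. Conversely, any edge-removal strategy achieving $\Expec_H[\Delta_\MoV^-]\ge k$ can be decoded into a set $X^*$ of at least $k$ vertices whose reward clusters remain active; if $X^*$ were not independent, some edge gadget would be doubly activated and, by construction, the resulting loss would overwhelm the putative gain, a contradiction. A standard amplification that replicates each reward cluster into a polynomial number of duplicates turns the additive correspondence between $|X^*|$ and $\Delta_\MoV^-$ into a multiplicative one, so that a $\rho$-approximation for ECER would yield a $\rho'$-approximation for \textsc{Independent-Set} with $\rho'$ of the same order as $\rho$, contradicting the assumed inapproximability.

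The main obstacle is the simultaneous constraint of single-news-article messages and unitary score distances: with only a single $\pm 1$ increment on a single candidate, the only mechanism available to distinguish \emph{exactly one} from \emph{at least two} arrivals at a node is the tie-breaking rule, which in turn pins down the admissible score vectors of every gadget and therefore the way the layers must be wired. A secondary, more routine hurdle is to rule out ``creative'' removals that act outside the canonical edges corresponding to vertex selections; this is handled by weighting the reward clusters with a sufficiently large number of duplicates, so that the canonical independent-set-based removal strictly dominates every non-canonical alternative and the decoding step above is well defined.
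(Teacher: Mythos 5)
You correctly identify \textsc{Independent-Set} as the right source problem (the paper reduces from it too), but your gadgetry diverges from the paper's and has two concrete gaps. First, your AND gadget distinguishes ``one arrival of $+c_1$'' from ``two arrivals'' by letting the score of $c_1$ accumulate to $2$. In the single-news-article setting every seed sends the \emph{same} message $m_s$ with $m_s(1)=+1$, and the preference revision sums over the \emph{set} $R\subseteq M=\cup_s m_s$ of received messages; whether two identical messages originating from distinct seeds contribute $+2$ or collapse to $+1$ is exactly the kind of modeling detail you cannot lean on, and indeed none of the paper's single-news-article reductions ever relies on multiplicity of arrivals --- every gadget there flips after a \emph{single} reception. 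If the cumulative increment is capped at $+1$, your edge gadget never fires and the reduction collapses.

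Second, and independently, the quantitative bookkeeping does not close. Your objective $\Delta_\MoV^-(E')=\MoV(S,M,E\setminus E',H)-\MoV(S,M,E,H)$ is measured against the unmodified graph, in which \emph{all} edge gadgets are doubly activated and all reward clusters are active. Any removal strategy that kills the double activations (e.g., removing every edge) already collects the full ``penalty avoided'' term $P$ minus the lost reward $R$, so the optimum has the form $P-R+R_{X^{\max}}$ with an additive offset $P-R$ that is unrelated to the independent set. To force decodability you need each edge-gadget penalty to dwarf the total reward, but to make a $\rho$-approximation of $\Delta_\MoV^-$ yield a $\rho'$-approximation of $|X^*|$ you need the reward term to dominate the offset; your duplication trick pushes these two requirements in opposite directions and cannot satisfy both. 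The paper's construction avoids penalties and AND gadgets entirely: a seeded line spreading $+c_2$ feeds one node $v_{x_i}$ per vertex (initially voting $c_0$, flipping to $c_2$ on a single reception) and one line $L_{z_i}$ per edge (initially voting $c_1$, flipping to $c_2$), while two large blocks of $n^2g^2$ isolated voters pin the runner-up counts so that the baseline $\MoV$ is exactly $0$ and \emph{any} positive improvement forces every $L_{z_i}$ to be activated. The set of $v_{x_i}$ left active must then be a vertex cover, and $\Delta_\MoV^-$ equals precisely the number of deactivated $v_{x_i}$, i.e., the size of an independent set --- no offsets, no amplification. You would need to rebuild your reduction along these ``force a vertex cover, count its complement'' lines rather than the penalty/reward scheme.
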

\begin{proof}
	Given an instance of \textsc{Independent-Set}, we build an instance of election control as follows.
	We add a line $L_1$ of $n\,g-g$ nodes with preference $ \langle 2,0,1\rangle $ and we seed the first node of the line with a message with $q_0=q_1=0$ and $q_2=1$.
	We add a node $v_{x_i}$ for each node $x_i \in X$ with preferences $ \langle 2,0,1\rangle $ and an edge from the last element of the line $L_1$ to $v_{x_i}$.
	For each element $z_i \in N$, we add a line $L_{z_i}$ of $g$ nodes with preferences $ \langle 0,2,1\rangle $ and an edge from each $x_j \ni z_i$ to the first node of $L_{z_i}$.
	Moreover, we add $n^2\,g^2$ isolated nodes with preferences $ \langle 2,1,0\rangle $ and $n^2\,g^2$ isolated nodes with preferences $ \langle 1,2,0\rangle $.
	Figure~\ref{fig:ER_single_message} depicts an example of network produced with the above mapping.
	Note that, if no edge is removed, all non-isolated voters change their preferences and vote $c_2$, implying $\MoV(S,M,E,H) = 0$.
	We prove that a constant-factor approximation for ECER would lead to a constant-factor approximation for \textsc{Independent-Set}.
	
	\begin{figure}[htb]
		\centering
		\includegraphics[width=1\linewidth]{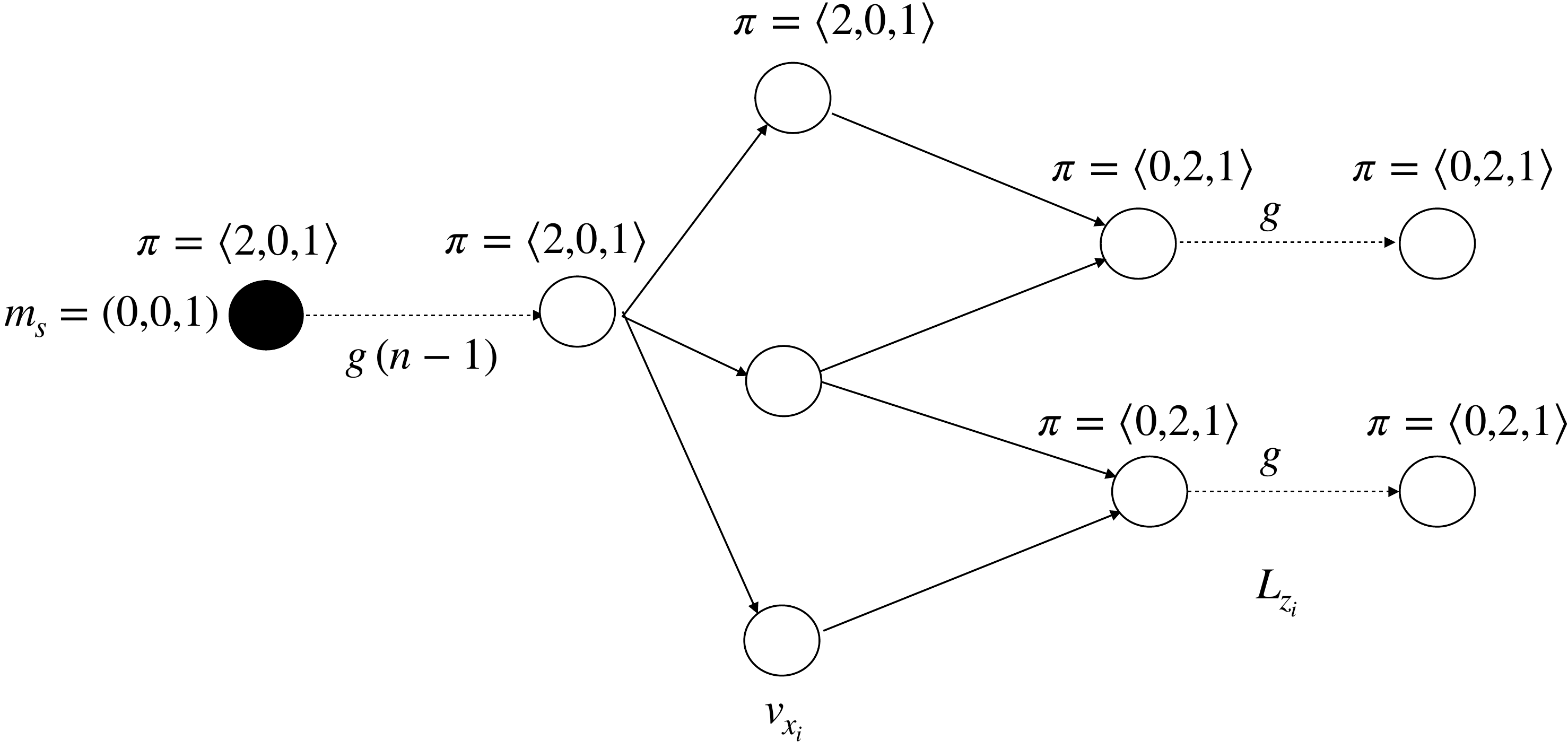}
		\caption{Structure of the election control problem used in the proof of  Theorem \ref{thm:ER_single_message}.}
		\label{fig:ER_single_message}
	\end{figure}
	
	Suppose that there exists a set of edges $E'$, such that $\Delta_\MoV^-(E') > 0$. Then $c_1$ looses all her votes in non-isolated nodes, otherwise $V_{c_0}(S,M,E',H)^*\le n^2\,g^2--(n-1)g$ and $V_{c_1}^*\ge n^2\,g^2-(n-1)g$.
	This suggests that the optimal solution is given by the greatest independent set $X^*\subseteq X$. In particular, $E^*$ is given by all the edges from the last node of $L_1$ to all $v_{x_i}$ with $x_i \in X^*$.  
	Notice that the set of active nodes $v_{x_i}, x_i \in X \setminus X^*$ is the complement of a maximum independent set and hence a minimum vertex cover.
	Thus, removing all edges in $E^*$, we obtain $\Delta_\MoV^-(E^*) = | X^*|$.
	
	Suppose there exists a $\rho$-approximation algorithm $\mathcal{A}$ for the ECER problem that removes edges $E'$. This implies that $\Delta_\MoV^-(E') \ge \rho \,\Delta_\MoV^-(E^*)$, where $E'$ is the set of the edges removed by algorithm $\mathcal A$.
	Since $\Delta_\MoV^-(E')>0$, $	\mathcal A$ removes only edges from $L_1$ to $v_{x_i}$ since, if it removes edges between nodes in $L_1$, we would have $\Delta_\MoV^-(E')\le 0$.
	%
	Moreover, all lines $L_{z_i}$ must be active. Hence, the active vertexes $v_{x_i}$ are a vertex cover and the inactive vertexes in $v_{x_i}$ are an independent set. 
	We remark that the value of $\Delta_\MoV^-(E')$ is exactly the number of inactive vertexes, \emph{i.e.}, the vertexes in the independent set. Thus, if there exists a $\rho$-approximation algorithm for ECER, there exists a $\rho$ approximation algorithm for \textsc{Independent-Set}, leading to a contradiction.
\end{proof}
%
%

We now focus on the case in which messages can be arbitrary and only two candidates. We reduce from the \textsc{Set-Cover} problem to prove the hardness of the \textsc{ECER} problem even in these settings.

\begin{theorem} \label{thm:ER_multiple_messages}
	For any $\rho > 0$, there is no polynomial time algorithm returning a $\rho$-approximation to the \textsc{ECER} even with two candidates and unlimited budget, unless \Poly~$=$~\NP.
\end{theorem}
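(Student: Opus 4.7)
The plan is a reduction from \textsc{Set-Cover}. Given an instance $(N, X, h)$ with $|N|=n$ and $|X|=g$, I will build an ECER instance with two candidates and unitary score distances in which the optimum $\Delta_\MoV^-$ is a strictly positive integer if $(N, X, h)$ admits a set cover of size at most $h$, and is $0$ otherwise. Since any $\rho$-approximation algorithm, for any $\rho>0$, must output a strictly positive value whenever the optimum is strictly positive, such an algorithm would decide \textsc{Set-Cover}, contradicting $\mathsf{P}\neq\mathsf{NP}$.

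The construction will exploit the freedom of arbitrary messages. For each element $z_i \in N$ I introduce an \emph{element voter} $v_{z_i}$ with initial rank $\langle 0,1\rangle$, replicated a polynomial number of times so that flipping even a single ``original'' voter yields a sizable contribution to $\MoV$. For each set $x_j \in X$ I introduce a negative seed $s^-_{x_j}$ sending the message $(0,1)$, and I add a single positive master seed $s^+$ sending $(1,0)$. The key topological idea is that $s^+$'s positive cascade is routed toward voters \emph{only} through the $s^-_{x_j}$ nodes: I place edges $s^+ \to s^-_{x_j}$ for every $j$ and $s^-_{x_j} \to v_{z_i}$ for every $z_i \in x_j$. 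Thus both the forwarded pro-$c_0$ message and the native pro-$c_1$ messages share the outgoing edges of every $s^-_{x_j}$, so no out-edge $s^-_{x_j}\to v_{z_i}$ can be removed without killing a unit of positive \emph{and} of negative polarity simultaneously at $v_{z_i}$. I also attach a block of auxiliary voters (isolated or with extreme scores insensitive to the messages in play) to balance the initial tally so that $\MoV(\emptyset,\emptyset,E,H)=0$ baseline.

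For the forward direction I will show that a set cover $X^*\subseteq X$ of size $h$ yields an explicit removal set: delete the edges $s^+\to s^-_{x_j}$ for each $x_j\notin X^*$, so that only the ``chosen'' gateways forward the positive message, while the pattern of surviving edges drives the net polarity at every $v_{z_i}$ strictly in favour of $c_0$, giving $\Delta_\MoV^->0$. The harder converse direction argues that any removal achieving $\Delta_\MoV^->0$ must flip at least one original element voter, and by analysing the polarity algebra at each $v_{z_i}$ I will conclude that the set of $s^-_{x_j}$ still forwarding $s^+$'s message must both (i) cover every $z_i$ in $N$ and (ii) be small enough (at most $h$) to keep each voter's incoming $(0,1)$-count dominated by the incoming $(1,0)$-count. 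The main obstacle is to make the construction genuinely robust to unlimited removal budget: the shared-cascade topology must be engineered so that no brute-force strategy (disconnecting all $s^-_{x_j}$, removing all voter-facing edges, etc.) can increase the margin of victory without also erasing the positive polarity, thereby forcing the manipulator to solve \textsc{Set-Cover}. Tie-breaking via the $\epsilon$-perturbation will be used at each voter to convert the integer gap between the yes- and no-instances into a strictly positive $\Delta_\MoV^-$, yielding inapproximability for every $\rho>0$.
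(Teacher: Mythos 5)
Your high-level plan (a reduction from \textsc{Set-Cover} in which yes-instances yield a strictly positive optimal $\Delta_\MoV^-$ and no-instances yield zero) is the same as the paper's, but the construction as sketched has two genuine gaps. First, the polarity accounting at an element voter does not work under the paper's MI-IC semantics: the pro-$c_0$ information originating at the single master seed $s^+$ is \emph{one} message $m_{s^+}$, so it activates each $v_{z_i}$ at most once and contributes exactly $+1$ to $\pi_{v_{z_i}}(0)$ no matter how many gateways forward it, whereas every surviving gateway $s^-_{x_j}$ with $z_i\in x_j$ contributes a \emph{distinct} message worth $+1$ to $\pi_{v_{z_i}}(1)$. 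Starting from $\langle 0,1\rangle$, the voter ends at roughly $\langle 1, 1+k\rangle$ with $k\ge 1$ surviving covering gateways, so no element voter ever flips; the condition that the incoming $(1,0)$-count dominate the incoming $(0,1)$-count is unattainable with a single positive seed and unitary messages. Second, and more fundamentally, nothing in your construction penalizes using more than $h$ gateways. With an \emph{unlimited} removal budget the manipulator can route $s^+$ through all $g$ gateways and independently prune the in-neighbourhood of each $v_{z_i}$ to whatever local pattern is most favourable; the global cardinality bound $h$ never constrains anything, so the reduction does not force the manipulator to solve \textsc{Set-Cover}. You correctly identify this as ``the main obstacle'' but do not resolve it.

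The paper resolves exactly this point with a probabilistic edge of weight $\frac{1}{2}$ that creates two live graphs: in one, the positive message flows and every covered element-line flips to $c_0$ (rewarding full coverage); in the other, only a pro-$c_1$ message flows and every gateway node that is still connected itself defects to $c_1$, costing $c_0$ one vote per active gateway (penalizing the use of more than $h$ gateways). The expected increase in \MoV\ is then positive precisely when some $h$ gateways cover all elements. Some analogous device --- coupling the benefit of coverage with a per-gateway cost inside the same random live-graph distribution, rather than relying on a per-voter score comparison --- is what your construction is missing.
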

\begin{proof}
	Consider an instance of \textsc{Set-Cover}. We suppose, w.l.o.g., $n > g$ and build a graph as follows.
	We add a node $v_1$ with preferences $ \langle 1,0\rangle $ and seeded with messages $q_0=1$ and $q_1=-1$,
	a node $v_2$ with preferences $ \langle 1,0\rangle $ and seeded with message $q_0=-1$, and an edge between $v_1$ and $v_2$.
	%
	%
	We add a line $L_1$ of $n^2-h-1$ nodes with preferences $ \langle 1,0\rangle $ and an edge of probability $\frac{1}{2}$ from $v_1$ to the first node of the line and an edge from $v_2$ to the first node of the line.
	Moreover we seed the first node of the line with message $q_1=1$.
	We add a node $v_{x_i}$ for each set $x_i \in X$ with preferences $ \langle 1,0\rangle $ and an edge from the last element of the line $L_1$ to $v_{x_i}$.
	For each element $z_i \in N$, we add a line $L_{z_i}$ of $n$ nodes with preference $ \langle 0,1\rangle $ and an edge from each $x_j \in z_i$ to the first node of $L_{z_i}$.
	Moreover, we add $g-h+1$ isolated nodes with preferences $ \langle 0,1\rangle $.
	Figure~\ref{fig:ER_multiple_messages} depicts an example of network produced with the above mapping.
	Note that, if no edge is removed, all the voters do not change their preferences and $\MoV = 0$.
	We prove that $\Delta_\MoV^-$ is larger than $0$ if and only if \textsc{Set-Cover} is satisfiable.
	\begin{figure}[htb]
		\centering
		\includegraphics[width=1\linewidth]{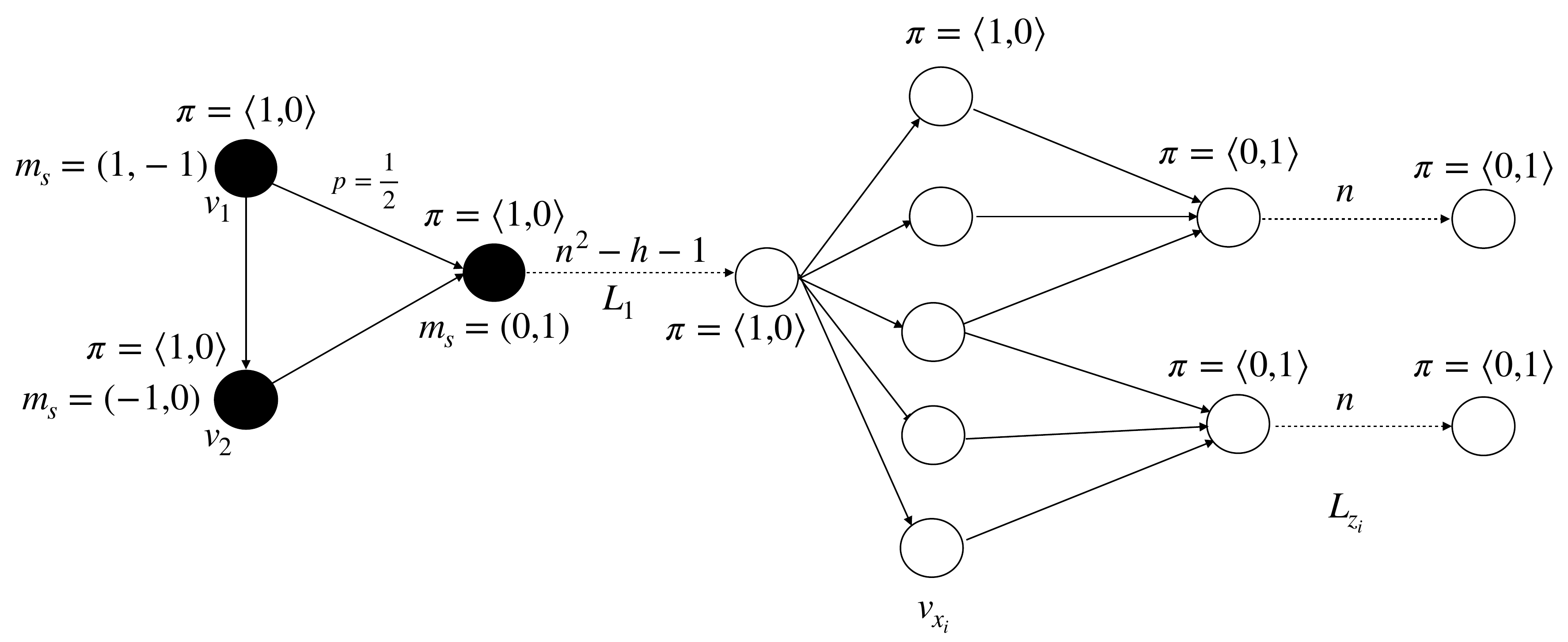}
		\caption{Structure of the election control problem used in the proof of  Theorem \ref{thm:ER_multiple_messages}.}
		\label{fig:ER_multiple_messages}
	\end{figure}

	\textbf{If}.
	Define the set of removed edges $E^*$ as composed by the edge between $v_2$ and $L_1$ and the incoming edge of each $v_{x_i}$ with $x_i \in X \setminus X^*$.
	We have two possible live graphs: $H_1$ if the edge between $v_1$ and $L_1$ is active, $H_2$ otherwise.
	Thus, $\Delta_\MoV^-(E^*, H_1)=2n^2$ and $\Delta_\MoV^-(E^*, H_2)=2(-n^2+h+1-h)=-2n^2+2$.
	Hence, $\Delta_\MoV^-(E^*)=1$.
	
	\textbf{Only if}.
	Suppose we do not remove neither the edge from $v_1$ towards $v_2$ nor the edge from $v_2$ towards $L_1$. In this case, no voter changes her vote from $c_1$ to $c_0$ since all the nodes that votes for $c_1$ receive messages $q_0=1$, $q_0=-1$, $q_1=1$, and $q_1=-1$.
	Thus, one of the two aforementioned edges should be removed. It easy to see that removing the edge from $v_2$ to $L_1$ is the best choice.
	Since $c_0$ must take some of the votes of $c_1$, the message in $v_1$ must reach at least some lines in $L_z$ and no edges must be removed in $L_1$.
	We have two possible live graphs: $H_1$ if the edge between $v_1$ and $L_1$ is active, $H_2$ otherwise.
	Assume by contradiction that in $H_1$ not all lines $L_z$ vote for $c_0$.
	This implies that $\Delta_\MoV^-(E^*)\le \frac{2(n(n-1))-2(n^2+h+1)}{2}<0$, where $E^*$ is the set of removed edges.
	Hence, in $H_1$, all line $L_z$ must be active and $\Delta_{\MoV}^-(E^*,H_1) =2 n^2$.
	In $H_2$, $\Delta_{\MoV}^-(E^*)$ must be larger than $-2n^2+1$ and at most $h$ nodes $v_{z_i}$ can be active.
	Thus, there exists a set cover of size $h$.
\end{proof}
%
%

\section{Edge Addition Complexity}
\label{sec:edgeaddition}
We study, in this section, the \textsc{ECEA} and \textsc{IMEA} problems.  
All the results provided in this section hold even with unitary score distances.
%
%
%
%
Initially, we study the complexity of \textsc{IMEA} problem with a finite budget.
 First, we notice that the \APX-hardness of the \textsc{IMEA} problem directly follows from the \APX-hardness of the influence maximization problem by seeding. In fact, the seeding problem with network $G(V,E,p)$ and budget $B$ is equivalent to the edge-addition problem with the same graph, except for an additional isolated node $v_1$, that is the only seed, in which we can add at most $B$ edges and the probabilities $p$ of the new (added) edges are all zero except for the edges connecting $v_1$ to the nodes of $V$, whose probabilities $p$ are one.
We improve this result, showing that the IMEA problem is harder to approximate than influence maximization by seeding. Indeed, IMEA cannot be approximated to any constant factor, unless \Poly~$=$~\NP, while influence maximization by seeding can be.
In our proof, we reduce from the maximization version of \textsc{Set-Cover}, called \textsc{Max-Cover}.

\begin{definition}[\textsc{Max-Cover}]
	Given a finite set $N = \{z_1,\dots,z_n\}$ of elements, a collection $X= \{x_1,\dots,x_g\}$ of sets with $x_i \subset N$, and $h \in \mathbb{N}^+$, the objective is to select $X^* \subset X$, with $|X^*| \le h$, that maximizes $\left|\cup_{x_i \in X^*} x_i\right|$.
\end{definition}
Feige proves that deciding whether in an instance of \textsc{Max-Cover} all the elements can be covered or at most a $(1-\frac{1}{e}+\epsilon)$ fraction of them
is \NPHard\ for any $\epsilon>0$~\cite{Feige1998}.

\begin{theorem}\label{thm:hardIinfMax} For any constant $\rho>0$, there is no polynomial time algorithm returning a $\rho$-approximation to the \textsc{IMEA} problem when $B$ is finite, unless \Poly~$=$~\NP.
\end{theorem}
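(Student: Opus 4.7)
The plan is to reduce from \textsc{Max-Cover}, exploiting Feige's $(1-1/e)$-inapproximability and then amplifying the gap so that it becomes arbitrarily small on the side of influence.

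Given a \textsc{Max-Cover} instance $(X,N,h)$, I would build a graph containing a single seed $s$ that is isolated in the base edge set, a \emph{set-node} $v_{x_i}$ for every $x_i \in X$, an \emph{element-node} $v_{z_j}$ for every $z_j \in N$, and, hanging off each element-node, an \emph{amplifier} consisting of $M$ auxiliary nodes linked in a probability-$1$ chain, so that lighting $v_{z_j}$ deterministically activates its whole amplifier. The only base arcs are the probability-$1$ edges $(v_{x_i},v_{z_j})$ whenever $z_j \in x_i$, together with the amplifier edges. The IMEA budget is set to $B=h$ and $M$ is chosen as a large polynomial in $n$ and $g$.

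The first step is a structural lemma identifying the optimum. Because $s$ is the unique seed, any added arc not leaving $s$ is useless; among arcs $(s,u)$, targeting a set-node dominates targeting an element-node, since the former simultaneously fires all linked $v_{z_j}$ and their $M$-sized amplifiers, while the latter fires only one amplifier. A short exchange argument then forces the optimum to consist of exactly $h$ arcs $(s,v_{x_i})$ hitting distinct set-nodes, yielding $\Delta I^+ = h + k^*(M+1)$ where $k^*$ is the Max-Cover optimum.

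The final step is the gap transfer. Feige's theorem tells us it is NP-hard to distinguish $k^*=n$ from $k^* \le (1-1/e+\varepsilon)n$, so taking $M$ large enough to swamp the additive $h$ immediately rules out $\rho$-approximation for every constant $\rho$ exceeding $1-1/e+\varepsilon$. To strengthen this to \emph{every} constant $\rho>0$, as the theorem claims, I would compose $t$ disjoint copies of the basic construction sharing the seed $s$, with a combined budget $th$ and amplifier sizes growing geometrically across copies; the goal is that the optimum be forced to split its budget uniformly across all $t$ copies, so that Feige's per-copy gap accumulates multiplicatively and the NO/YES ratio shrinks as $(1-1/e+\varepsilon)^t$, falling below any prescribed $\rho$ for $t$ large.

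The main obstacle is proving this uniform budget-splitting property in the composed construction: one must engineer the geometric amplifier sizes and the inter-copy connectivity so that concentrating more than $h$ units of budget in a single copy (to attain full coverage there) is strictly worse than spreading evenly and paying the Feige loss in every copy. Without this forcing lemma, an adversarial algorithm could always achieve full coverage in a subset of copies and only the single-copy $(1-1/e)$-hardness would propagate.
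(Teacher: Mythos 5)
There is a genuine gap in the amplification step, and it is precisely the one you flag as the ``main obstacle'': it cannot be repaired in the form you propose. If the $t$ copies are disjoint and share only the seed, the objective $\Delta I^+$ is \emph{additive} across copies, so the NO/YES ratio is a ratio of sums, not a product of per-copy ratios. With geometrically growing amplifier sizes $M_j$, the largest copy alone carries a constant fraction of $\sum_j M_j$; an adversarial algorithm can dump its entire budget $th$ into that one copy, where $th \geq$ the number of sets typically suffices to cover \emph{all} elements even in a Feige NO-instance (NO only excludes covers of size $h$, not of size $th$), and thereby collect a constant fraction (roughly $1/2$) of the YES optimum. No choice of amplifier sizes or inter-copy wiring that keeps the influence additive can force uniform budget-splitting, so the gap never accumulates multiplicatively and you are stuck at the single-copy $1-1/e+\varepsilon$ threshold, which does not prove the theorem for every constant $\rho>0$.

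The paper's construction resolves this by composing the copies \emph{in series} rather than in parallel. It chains $n^8$ blocks, each encoding the \textsc{Max-Cover} instance, where block $i$ can only be entered through a node $v_i$ that is activated (with probability $1-n^{-8/n}$ per covered element) by the element-nodes of block $i-1$; the essentially entire payoff is a single set of $n^{10}$ nodes sitting \emph{after the last block}. Reaching the payoff requires the activation to survive every block, so the success probability is the product of per-block pass-through probabilities: full coverage in each block gives $\bigl(1-n^{-8}\bigr)^{n^8}\approx 1/e$, while a $3/4$-coverage ceiling in each block forces many blocks to pass with probability at most $1-n^{-6}$ (roughly), driving the overall probability to $e^{-n}$ and the NO-value to $o(n^{10})$. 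This serial, multiplicative structure is the missing idea; your additive parallel composition does not supply it. The rest of your single-copy gadget (seed plus set-nodes plus amplified element-nodes, with the exchange argument confining added arcs to $(s,v_{x_i})$) is fine as far as it goes, but it only yields the $1-1/e$ barrier.
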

\begin{proof}
	Consider an instance of \textsc{Max-Cover}. We assume, w.l.o.g., $g<n$ and we build an instance of IMEA as follows:
	for each $i$ in $\{1,\ldots,n^8\}$, we add a node $v_i$, a node $v_{i,x_j}$ for each $x_j \in X$ and a node $v_{i,z_t}$ for each $z_t \in N$. Moreover, we add an edge from each $v_{i,x_j}$ to each $v_{i,z_t}, z_t \in x_j$ with probability $1$ and an edge from $v_{i,z_t}$ to $v_{i+1}$ with probability $1-\frac{1}{n^\frac{8}{n}}$.
	We add a node $v_{n^8+1}$ and an edge with probability $1$ towards $n^{10}$ nodes. Call the subgraph composed by these nodes $G'$.
	The resulting graph is depicted in Figure~\ref{fig:EA_IM}.
	The only seed is $v_1$, and the only edges that can be added are the edges between $v_i$ and $v_{i,x_j}, x_j \in X$ with probability one.
	The budget is $h \,n^8$.
	If \textsc{Max-Cover} is satisfiable, \emph{i.e.}, there exists a set $X^*$ that covers all the elements, there exists a solution $E^*$ to IMEA in which for each $i\in\{1,\ldots,n^8\}$ we add the edges from $v_i$ to $h$ $v_{i,x_j}$ such that if $v_i$ is active then all $v_{i,z_t}$ are active.
	In this case, $\Delta I^+(E^*)$ is larger than the expected influence on the subgraph $G'$, \emph{i.e.}, $\Delta I^+>[1-(\frac{1}{n^\frac{8}{n}})^n]^{n^8} n^{10}\ge [1-\frac{1}{n^8}]^{n^8} n^{10}>(\frac{1}{e}-\epsilon) n^{10}$ for all $\epsilon>0$ and $n$ large enough.
	Suppose each cover of size at most $h$ cover at most $\frac{3}{4}$.
	It implies that at least $\frac{n^8}{h+1}$ nodes $v_i$ have at most $h$ outwards edges and thus they leave at least $\frac{1}{4}$ vertexes $v_{i,z_t}$ without incoming edges.
	Thus the probability of activating $G'$ is smaller than $[1-(\frac{1}{{n^\frac{8}{n}}})^n]^\frac{h \,n^8}{h+1} [1-(\frac{1}{{n^\frac{8}{n}}})^\frac{3n}{4}]^\frac{n^8}{h+1} \le \frac{1}{e^\frac{h}{h+1}} \frac{1}{e^\frac{n^2}{h+1}} = \frac{1}{{e}^\frac{h+n^2}{h+1}}\le e^{-n}$.
	and $\Delta I^+(E') \le n^8 (n+g+1)+1+ e^{-n} n^{10}$.
	Clearly $\frac{\Delta I^+(E')}{\Delta I^+(E^*)} < \rho$, for each $\rho>0$ and $n$ sufficiently large.
	Hence, a $\rho$-approximation algorithm for IMEA implies that we can distinguish between satisfiable instances of \textsc{Max-Cover} and instances in which at most $\frac{3}{4}$ of the elements are covered,
	leading to a contradiction.
\end{proof}
	\begin{figure}[htb]
\includegraphics[width=1\linewidth]{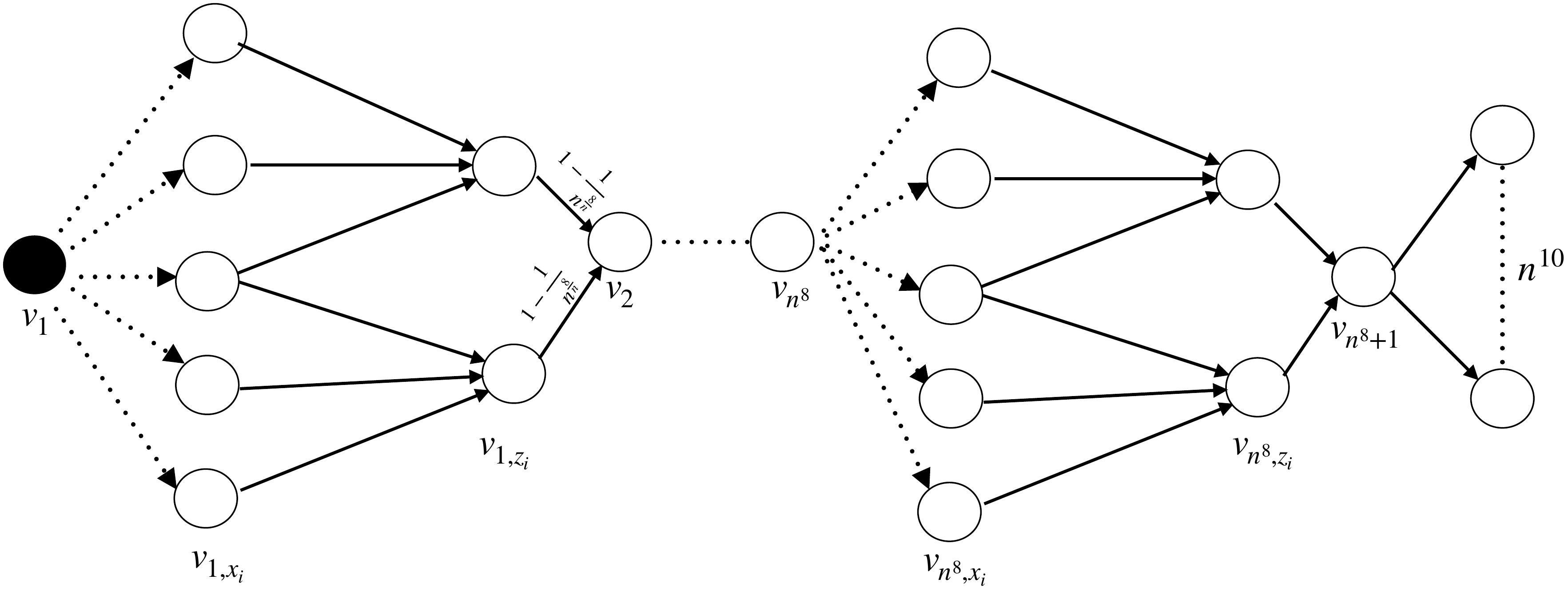}
\caption{Structure of the election control problem used in the proof of  Theorem \ref{thm:hardIinfMax}.}
\label{fig:EA_IM}
\end{figure}
	
%
%
%

Now we can state the following result, whose proof is direct from the proof of the above theorem.

\begin{corollary}\label{cor:EAFixed}
	For any constant $\rho>0$, there is not any polynomial time algorithm returning a $\rho$-approximation to the \textsc{ECEA} when $B$ is finite even when there are two candidates and single-news-article messages, unless \Poly~$=$~\NP.
\end{corollary}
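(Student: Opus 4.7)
The plan is to lift the IMEA hardness construction of Theorem~\ref{thm:hardIinfMax} to the ECEA setting through a one-to-one correspondence between influenced voters and votes transferred to $c_0$. I would take the graph, the single seed $v_1$, and the budget $B = h\,n^8$ exactly as in the proof of Theorem~\ref{thm:hardIinfMax}, declare every node to be a voter with initial preference vector $\langle 0, 1 \rangle$, and let $v_1$ send the single-news-article message $m_{v_1} = (+1, 0)$. By construction this instance satisfies the two-candidate, single-news-article-messages, unitary-score-distances restrictions in the corollary statement.

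Before any edge is added every voter casts a vote for $c_1$, so $|V^*_{c_0}| = 0$ and $|V^*_{c_1}| = |V|$. For any set $E'$ of added edges and any live graph $H$, the score-based revision together with the tie-breaking convention of Section~\ref{sec:model} updates every activated voter to scores $(1,\,1-\epsilon)$, flipping her vote to $c_0$, while unreached voters keep their original scores and continue to vote for $c_1$. Hence $|V^*_{c_0}(S, M, E \cup E', H)| = \chi(\{v_1\}, E\cup E', H)$ and $|V^*_{c_1}(S, M, E \cup E', H)| = |V| - \chi(\{v_1\}, E\cup E', H)$, and a direct computation gives $\Delta_\MoV^+(E', H) = 2\,\Delta I^+(E', H)$ pointwise in $H$, and therefore also in expectation.

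Consequently, a polynomial-time $\rho$-approximation algorithm for ECEA applied to this instance would immediately yield, with the same ratio, a polynomial-time $\rho$-approximation for IMEA on the graph of Theorem~\ref{thm:hardIinfMax}, contradicting that theorem unless \Poly~$=$~\NP. The only step requiring care is verifying that the revision rule, combined with the stated tie-breaking convention, causes exactly the activated voters -- and no others -- to switch to $c_0$; this is immediate from the definitions, so I do not expect any real technical obstacle, and the argument should amount to a short observation piggybacking on Theorem~\ref{thm:hardIinfMax}.
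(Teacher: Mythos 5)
Your proposal is correct and matches the paper's intended argument: the paper states that the corollary follows directly from Theorem~\ref{thm:hardIinfMax}, and (exactly as in the explicit proof of the edge-removal analogue, Corollary~\ref{cor:ERFixed}) the intended reduction is to reuse the same graph with uniform preferences and a single positive message on $c_0$ so that $\Delta_\MoV^+ = 2\,\Delta I^+$. Your verification that activated voters, and only those, switch to $c_0$ under the tie-breaking rule is the right (and only) detail to check, so nothing is missing.
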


Thus, we focus on the case with unlimited budget. Since the maximum influence is reached when the network is fully connected, the optimal solution to \textsc{IMEA} with unlimited budget adds all the non-existing edges to the network and thus can be computed in polynomial time. An argument similar to the one used for edge removal shows that \textsc{ECEA} with unlimited budget, two candidates, and single-news-article messages is easy.
In particular, if the message is positive for $c_0$, \emph{i.e.}, $q_0=1$ or $q_1=-1$, we aim at maximizing the diffusion of the message and we add all the edges. If the message is negative for $c_0$,  \emph{i.e.}, $q_0=-1$ or $q_1=1$, we aim at minimizing the diffusion and we do not remove any edge.
From the previous arguments, we can directly state the following.
\begin{obs}
\label{obs:polyECEA}
	There exists a polynomial-time algorithm for the \textsc{ECEA} problem with single-news-article messages, two candidates and unlimited budget.
\end{obs}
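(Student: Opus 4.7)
The plan is to formalize the case split already outlined in the paragraph preceding the observation. Because the messages are single-news-article and there are only two candidates, the common message $m_s$ sent by every seed has exactly one nonzero entry equal to $\pm 1$, so the manipulator's decision reduces to whether to enlarge the edge set or not. I will show that one of the two extreme solutions---adding every admissible edge or adding none---is optimal, and that the right choice is read off from the sign of the (unique) nonzero coordinate of $m_s$.

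The key step is a monotonicity lemma for the MI-IC dynamics: for any two edge sets $E_1 \subseteq E_2$, for every seed $s$ and every voter $v$, the probability that $v$ eventually lies in $\bigcup_t A^t_{m_s}$ when diffusion is run on $(V,E_1)$ is at most the corresponding probability on $(V,E_2)$. This follows from a standard live-graph coupling: couple the edge coin flips for $E_1$ with the corresponding ones in $E_2$ and draw fresh independent coins for $E_2 \setminus E_1$; the live subgraph over $E_1$ is then contained in the live subgraph over $E_2$, so the set of vertices reachable from $s$ in the live-graph can only grow when edges are added.

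With the lemma in hand, summing over seeds shows that, under this coupling, the random number of times $v$ receives $m_s$ is pointwise non-decreasing in the edge set. Since the message is single-news-article, every reception applies exactly the same signed increment either to $\pi_v(0)$ or to $\pi_v(1)$. Hence $\pi^*_v(0) - \pi^*_v(1)$ is pointwise non-decreasing in the edge set whenever $m_s$ is pro-$c_0$, i.e. $m_s(0) = +1$ or $m_s(1) = -1$, and pointwise non-increasing whenever $m_s$ is anti-$c_0$. Because with two candidates the vote of $v$ depends only on the sign of $\pi^*_v(0) - \pi^*_v(1)$ (with a fixed tie-breaking rule that does not interact with edge additions), $\Expec_H[\,|V^*_{c_0}| - |V^*_{c_1}|\,]$ is monotone in the same direction, and therefore so is $\Expec_H[\Delta_\MoV^+(E',H)]$.

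Thus, with unlimited budget, an optimal $E'$ is $(V \times V) \setminus E$ in the pro-$c_0$ case and $\emptyset$ in the anti-$c_0$ case; both are produced in polynomial time after a constant-time inspection of $m_s$. I do not anticipate any real obstacle. The only point that needs care, and that motivates carrying out the monotonicity lemma explicitly rather than quoting standard IC monotonicity, is that formally each seed induces its own activation process $A^t_{m_s}$ even when all seeds share the same message content; the per-seed coupling handles this cleanly and keeps the sum of score updates well defined.
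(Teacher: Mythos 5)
Your proposal is correct and follows essentially the same route as the paper: decide by the sign of the single news article, add every admissible edge when the message is pro-$c_0$ and add none when it is anti-$c_0$, justified by monotonicity of the diffusion in the edge set. The only difference is that you make explicit the live-graph coupling and the per-seed reachability argument that the paper leaves implicit when it asserts that maximum influence is attained on the fully connected network.
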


 Next, we prove that increasing the number of candidates or allowing arbitrary messages makes the problem hard. 

\begin{theorem}\label{thm:EA_single_message}
	For any $\rho > 0$, there is not any polynomial time algorithm returning a $\rho$-approximation to the \textsc{ECEA} with single-news-article messages even when there are three candidates and the budget is unlimited, unless \Poly~$=$~\NP.
\end{theorem}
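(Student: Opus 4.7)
The plan is to prove inapproximability by a polynomial-time reduction from \textsc{Independent-Set}, which is inapproximable within any constant factor unless $\mathsf{P}=\mathsf{NP}$. The construction is analogous in spirit to that used in the edge-removal counterpart (Theorem~\ref{thm:ER_single_message}), but inverted to target edge addition: instead of starting from a configuration where every non-isolated voter is already flooded by the seed's message and removing edges to save selected voters, I would start from a configuration with no baseline diffusion and add edges to trigger a selective cascade whose effect on $\MoV$ encodes the independence structure of the input graph $H$.

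Concretely, given an instance $H=(X,N)$ with $|X|=g$ and $|N|=n$, I would build an ECEA instance with three candidates $c_0,c_1,c_2$ and a single seed $s$ carrying a single-news-article message $q_2=1$. The structure would include a relay voter $v_{x_i}$ for each $x_i\in X$ (whose own vote is stable), a voter $u_{z_j}$ for each edge $z_j=(x_a,x_b)\in N$ connected by probability-one edges from $v_{x_a}$ and $v_{x_b}$, and carefully tuned preferences (exploiting the tie-breaking perturbation, using profiles of the form $\langle 1,2,0\rangle$ and $\langle 0,2,1\rangle$) so that the flip behavior of each $u_{z_j}$ depends on the multiplicity of received messages. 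Probabilities would be chosen so that $p(s,v_{x_i})=1$ and $p(s,v)=0$ for every other $v$, forcing any useful added edge to be of the form $s\to v_{x_i}$. A polynomial-sized block of isolated padding voters of each preference type would then calibrate the baseline $\MoV$ and amplify the approximation gap, in the same spirit as the $n^2 g^2$ isolated voters used in the proof of Theorem~\ref{thm:ER_single_message}.

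The analysis then shows that, up to additive padding offsets, the $\Delta_{\MoV}^+$ achievable by adding edges $\{s\to v_{x_i}:i\in X^*\}$ is a function only of $|X^*|$ and $e(X^*)$, the number of edges induced by $X^*$ in $H$. By tuning the preferences so that the coefficient of $e(X^*)$ is negative (each ``internal'' edge of $X^*$ pushes $c_2$'s count upward in a way that penalizes $c_0$'s margin), the optimum $\Delta_{\MoV}^+$ equals $\max_{X^*\subseteq X}\bigl(|X^*|-e(X^*)\bigr)$, which is exactly the size of a maximum independent set of $H$ (by a short exchange argument: from any $X^*$ one can iteratively remove a vertex incident to at least one internal edge without decreasing $|X^*|-e(X^*)$, ending at an independent set). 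A polynomial-time $\rho$-approximation for $\Delta_{\MoV}^+$ would then yield, after accounting for the padding, a polynomial-time constant-factor approximation for \textsc{Independent-Set}, a contradiction.

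The main obstacle is the severe restriction imposed by single-news-article messages: all seeds emit the same $\pm 1$ message on the same single candidate, so the induced score updates are monotone in the multiplicity of received messages and do not, at first sight, admit a ``penalty for over-selection'' gadget. Overcoming this requires precisely engineered preferences combined with the tie-breaking perturbation so that a voter's flip is controlled by a threshold (e.g., at least two received positive-$c_2$ messages) corresponding exactly to the presence of an internal edge of $X^*$ in $H$, thereby injecting the needed non-monotone combinatorial behavior into the objective. A secondary subtlety lies in sizing the padding so that the additive offsets introduced by the relay voters and the isolated padding do not absorb the independent-set signal, and so that any multiplicative $\rho$-approximation of $\Delta_{\MoV}^+$ transfers to a multiplicative approximation of $\alpha(H)$ with a factor that still rules out every $\rho>0$.
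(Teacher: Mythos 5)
Your reduction hinges on a gadget that the model does not support. With a \emph{single} seed $s$ sending $q_2=1$, there is exactly one message $m_s$ in circulation: in the MI-IC dynamics a voter is added to $A^t_{m_s}$ only if it has never been activated by $m_s$ before, and the preference revision sums over the set $R\subseteq M$ of \emph{distinct} received messages. Hence a voter $u_{z_j}$ reached through both relays $v_{x_a}$ and $v_{x_b}$ still has $R=\{m_s\}$ and gets exactly the same $+1$ on $c_2$ as a voter reached through one relay. There is no notion of ``multiplicity of received messages'' to threshold on, so the coefficient of $e(X^*)$ in your objective is forced to be zero: $\Delta_{\MoV}^+$ can only depend on \emph{which} voters are reached, i.e.\ on the union of out-neighborhoods of the selected $v_{x_i}$'s, which is a monotone coverage-type quantity. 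The entire $|X^*|-e(X^*)$ encoding of maximum independent set therefore collapses. (Message multiplicities do matter when there are several seeds, since each seed contributes its own message to $R$; but your construction explicitly uses one seed, and the edge-removal proof you are mirroring obtains the independent-set structure by a completely different mechanism --- a covering constraint forcing the \emph{active} $v_{x_i}$'s to be a vertex cover --- not by a multiplicity threshold.)

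The paper's proof takes a different and simpler route: a gap reduction from \textsc{Set-Cover}. A seed feeds a long line $L_1$ of $ng-h-1$ voters with preferences $\langle 2,0,1\rangle$, the addable edges are $v_1\to L_1$ and $L_1\to v_{x_i}$, and each element $z_i$ is a line $L_{z_i}$ of $g$ voters with preferences $\langle 0,2,1\rangle$ fed by the $v_{x_j}$ with $z_i\in x_j$. To get $\Delta_{\MoV}^+>0$ one must make $c_1$ lose all $ng$ non-isolated votes (so every line $L_{z_i}$ must be activated, i.e.\ the selected sets must cover $N$) while $c_0$ may lose at most $h$ of the $v_{x_i}$ (so at most $h$ sets may be selected); thus $\Delta_{\MoV}^+>0$ iff a cover of size $h$ exists, and any $\rho$-approximation, for any $\rho>0$, decides \textsc{Set-Cover}. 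If you want to keep an \textsc{Independent-Set} flavor you would have to re-engineer your construction around such a covering/penalty mechanism (activation of $v_{x_i}$ costing $c_0$ a vote) rather than around message multiplicities, and also address that your padding-and-offset step currently only yields constant-factor hardness rather than the ``any $\rho>0$'' statement being proved.
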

%
%

\begin{proof}
	Given an instance of \textsc{Set-Cover}, we build an instance of election control as follows.
	We add a node $v_1$ with preferences $ \langle 0,1,2\rangle $ and seed it with $q_2=1$.
	We add a line $L_1$ of $ng-h-1$ nodes with preferences $ \langle 2,0,1\rangle $.
	We add a node $v_{x_i}$ for each set $x_i \in X$ with preferences $ \langle 2,0,1\rangle $.
	For each element $z_i \in N$, we add a line $L_{z_i}$ of $g$ nodes with ranking $ \langle 0,2,1\rangle $ and an edge from each $x_j \ni z_i$ to the first node of $L_{z_i}$.
	Moreover, we add $n^2g^2$ isolated nodes with preferences $ \langle 2,1,0\rangle $ and $n^2g^2$ isolated nodes with preferences $ \langle 1,2,0\rangle $.
	An example of network produced with the above mapping is depicted in Figure~\ref{fig:EA_single_message}.
	The only edges that can be added are the node from $v_1$ to $L_1$ and the edges from $L_1$ to each nodes $v_{x_i}$, \emph{i.e.}, these edges have probability $1$ and all other non existing edges have probability $0$.
	Notice that if no edge is added, all nodes will not change their votes,implying $\Delta_\MoV(\emptyset)=0$.
	We prove that there exists a set $E^* \subseteq E$ with $\Delta_{\MoV}^+(E^*)>0$ if and only if \textsc{Set-Cover} is satisfiable. 
	%
	\begin{figure}[htb]
		\includegraphics[width=1\linewidth]{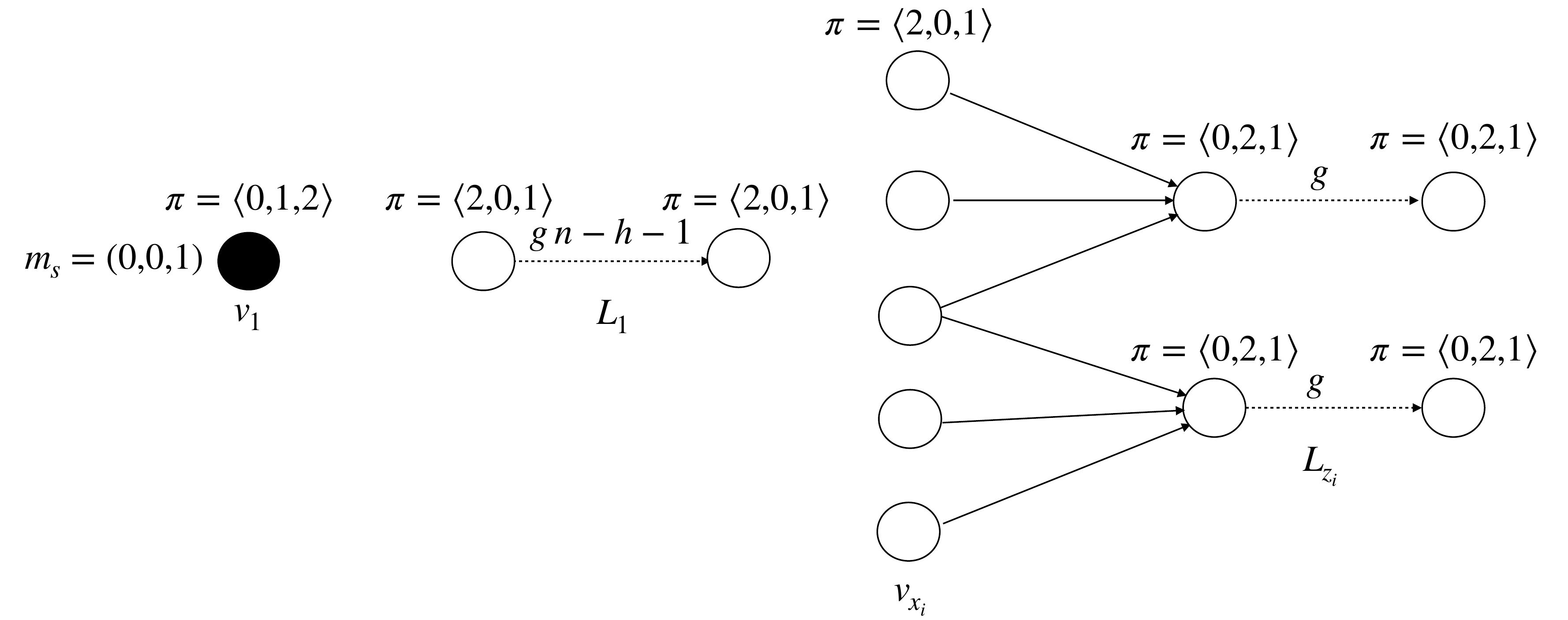}
		\caption{Structure of the election control problem used in the proof of Theorem \ref{thm:EA_single_message}.}
		\label{fig:EA_single_message}
	\end{figure}
	
	\textbf{If}.
	Given a set cover $X^*$, define as $E^*$ the set of the edge from $v_1$ to $L_1$ and all the edges from $L_1$ to $v_{x_i}, x_i \in X^*$.
	If we add edges $E^*$, $c_0$ loses $ng-h-1+h$ votes, while $c_1$ loses $n\,g$ votes, and thus $\Delta_{\MoV}^+(E^*)=1$.
	
	\textbf{Only if}.
	The existence of a set $E^*$ with $\Delta_\MoV^+(E^*) > 0$ implies that the edge from $v_1$ to $L_1$ is added and $c_0$ looses at least $n\,g-h-1$ votes.
	Thus, $c_1$ must lose at least $n\,g-h$ votes, implying that she loses all the not isolated nodes.
	Since $\Delta_\MoV^+(E^*) > 0$, $c_0$ can lose at most $h$ nodes $v_{x_i}$, \emph{i.e.}, there are at most $h$ edges from $L_1$ to $v_{x_i}$ in $E^*$.
	Hence, there are $h$ nodes $v_{x_i}$ that cover all the elements $z_i$ and \textsc{Set-Cover} is satisfiable.
\end{proof}

\begin{theorem}\label{thm:EA_multiple_messages}
	For any $\rho > 0$, there is not any polynomial time algorithm returning a $\rho$-approximation to the \textsc{ECEA} even with two candidates and unlimited budget, unless \Poly~$ =$~\NP.
\end{theorem}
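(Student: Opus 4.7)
The plan is to reduce from \textsc{Set-Cover}, adapting the construction in the proof of Theorem~\ref{thm:ER_multiple_messages} to the edge-addition setting so that the expected $\Delta_\MoV^+$ is strictly positive (in fact, equal to $1$) if and only if the \textsc{Set-Cover} instance admits a cover of size $h$. Since $\MoV$ takes integer values and any $\rho$-approximation must distinguish $\Delta_\MoV^+ \geq 1$ from $\Delta_\MoV^+ \leq 0$, this yields the claimed inapproximability under $\mathsf{P} \neq \mathsf{NP}$.

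First, I would build a graph with two seeds $v_1$ and $v_2$ whose messages partially cancel on $c_0$: $v_1$ would be seeded with a message positive for $c_0$, say $(1,-1)$, and $v_2$ with a message negative for $c_0$, say $(-1,0)$. Following the skeleton of Figure~\ref{fig:ER_multiple_messages}, the network would contain a line $L_1$ (whose first node carries an additional seed with $q_1 = 1$), a gateway node $v_{x_i}$ for each $x_i \in X$, a line $L_{z_i}$ of appropriate length for each $z_i \in N$ with edges from $v_{x_j}$ whenever $z_i \in x_j$, and an appropriate number of isolated nodes to make the baseline $\MoV$ equal to $0$. The key design choice is to make the base graph ``inert'': specifically, the edges that the ER reduction removes (the edge $v_2 \to L_1$ and the edges from the last node of $L_1$ to the gateways $v_{x_i}$) would simply be absent in the EA construction, forcing the manipulator to add edges to open the propagation pathway from $v_1$ to the lines $L_{z_i}$.

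To reproduce the $H_1$/$H_2$ split that drives the tight computation $\tfrac12(2n^2) + \tfrac12(-2n^2+2) = 1$ in the edge-removal proof, I would include one existing probabilistic edge of probability $\tfrac12$ (e.g.\ from $v_1$ into $L_1$) so that the live graph still randomly decides whether the pro-$c_0$ stream reaches $L_1$ at all. To eliminate the issue that, with unlimited budget, the manipulator could trivially add every possible edge, I would assign probability $0$ to all non-existing edges except those connecting the end of $L_1$ to the gateways $v_{x_i}$, which would carry probability $1$ if added. Under this restriction, the manipulator's choice of which $v_{x_i}$ to connect encodes exactly a choice of a subfamily $X^* \subseteq X$. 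The line lengths and the isolated-node counts would be tuned, as in Theorem~\ref{thm:ER_multiple_messages}, so that (i) if $X^*$ is a cover of size $h$, every $L_{z_i}$ gets activated by the pro-$c_0$ stream in $H_1$ while the damage in $H_2$ is bounded so that the expectation equals $1$, and (ii) if $X^*$ fails to cover some element, enough $L_{z_i}$ voters remain with $c_1$ to make $\Delta_\MoV^+ \leq 0$.

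The main obstacle I expect is calibrating the construction against the unlimited budget: we must ensure that adding \emph{more} gateway edges than needed does not help, and in particular that adding a cover of size strictly greater than $h$ (or any non-cover together with extra edges) never exceeds $\Delta_\MoV^+ = 1$. This likely requires inserting a few extra isolated voters (or adjusting line lengths) so that activating additional gateways beyond $h$ pushes the competing message to enough $c_0$-supporters in $L_1$ and $\{v_{x_i}\}$ to exactly cancel the gain on the element lines, mirroring the bookkeeping of the edge-removal proof. Once this balance is achieved, the equivalence ``$\Delta_\MoV^+(E^*) > 0 \iff$ \textsc{Set-Cover} is satisfiable'' follows, and the inapproximability for every $\rho > 0$ is immediate.
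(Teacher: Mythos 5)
Your high-level plan --- reducing from \textsc{Set-Cover}, using a single probability-$\frac12$ edge to create the $H_1$/$H_2$ split, and calibrating the instance so that $\Expec_H[\Delta_\MoV^+]$ equals $1$ exactly when a cover of size $h$ exists --- is the same as the paper's. However, the concrete construction you sketch does not survive the unlimited budget, and this is exactly the obstacle you flag at the end without resolving. By placing the $q_1=1$ seed directly on the first node of $L_1$ and letting the probability-$\frac12$ edge run from $v_1$ straight into $L_1$, you make the switch of the $n^2-h-1$ nodes of $L_1$ to $c_1$ in $H_2$ occur already in the \emph{baseline} (no edges added), so that loss is sunk and does not count inside $\Delta_\MoV^+$. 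The marginal cost of adding a gateway edge to $v_{x_i}$ is then a single vote lost in $H_2$, while the marginal gain of covering one more element is $n$ votes in $H_1$ (weight $\frac12$). Hence with unlimited budget the manipulator simply adds \emph{all} $g$ gateway edges and obtains an expected increase of roughly $n^2-g>0$ whether or not a cover of size $h$ exists; the ``only if'' direction collapses. Your proposed remedies do not repair this: isolated voters are unaffected by edge additions and only shift the static margin, and no tuning of line lengths changes the fact that the large $L_1$ damage is already paid in the baseline rather than charged to the manipulator's added edges.

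The paper avoids this by a structural rerouting rather than by bookkeeping: the $q_1=1$ seed sits on a separate node $v_2$, the probability-$\frac12$ edge joins $v_1$ to $v_2$, and the \emph{only} way any message can reach $L_1$ is the addable edge from $v_2$ to the first node of $L_1$. To realize any gain at all, the manipulator must add that edge and thereby accept, in $H_2$, the loss of all $n^2-h-1$ nodes of $L_1$ \emph{as part of} $\Delta_\MoV^+$. With $k$ gateway edges added and all elements covered, the expectation becomes $n^2-(n^2-h-1+k)=h+1-k$, positive only for $k\le h$; if some element is uncovered, the $H_1$ gain drops by at least $n> g\ge h$ and the expectation is non-positive. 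This forced fixed cost is the calibration your sketch is missing, and it is what encodes the cardinality constraint of \textsc{Set-Cover} despite the unlimited budget.
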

%
%

\begin{proof}
	Consider an instance of \textsc{Set-Cover}. We suppose, w.l.o.g., $n>g$ and build a graph as follow. 
	We add a node $v_1$ with preferences $ \langle 1,0\rangle $ and seeded with messages $q_0=1$ and $q_1=-1$, and a node $v_2$ with preferences $ \langle 1,0\rangle $ and seeded with message $q_1=1$. Moreover we add an edge with probability $\frac{1}{2}$ between $v_1$ and $v_2$.
	We add a line $L_1$ of $n^2-h-1$ nodes with preferences $ \langle 1,0\rangle $.
	%
	We add a node $v_{x_i}$ for each set $x_i \in X$ with preferences $ \langle 1,0\rangle $.
	For each element $z_i \in N$, we add a line $L_{z_i}$ of $n$ nodes with preferences $ \langle 0,1\rangle $ and an edge from each $x_{j} \ni z_i$ to the first node of $L_{z_i}$.
	Moreover, we add $g-h+1$ isolated nodes with preferences $ \langle 0,1\rangle $.
	The edges that can be added are: the edge from $v_2$ to the first node of $L_1$ with probability $1$ and edges from the last node of $L_1$ to all $v_{x_i}$ with probability $1$ (all other non-existing edges have probability $0$).

	\begin{figure}[!ht]
		\includegraphics[width=1\linewidth]{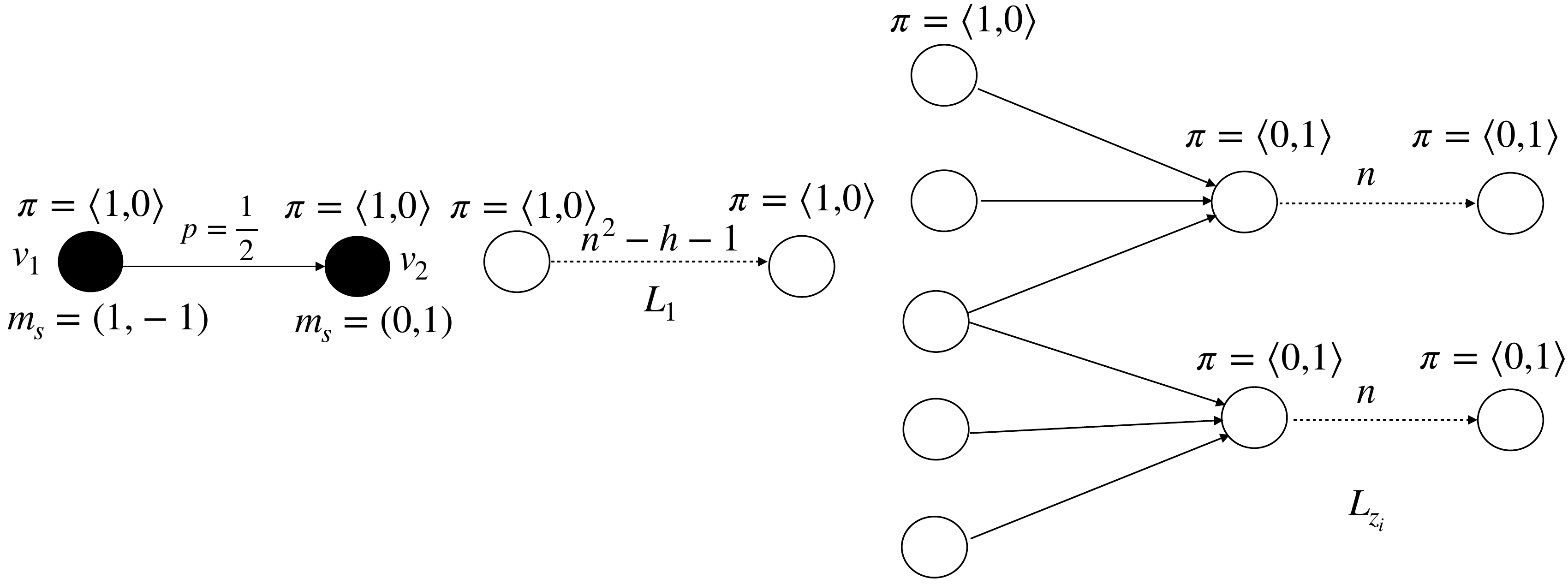}
		\caption{Structure of the election control problem used in the proof of  Theorem \ref{thm:EA_multiple_messages}.}
		\label{fig:EA_multiple_messages}
	\end{figure}
	
	%
	Notice that, if no edges are added, no voter changes her votes and $\MoV(\emptyset)$ is $0$.
	We prove that there exists a set $E^* \subseteq E$ with $\Delta_\MoV^+(E^*)>0$ if and only if \textsc{Set-Cover} is satisfiable.

	\textbf{If}.
	The set of added edges $E^*$ is composed by the edge between $v_2$ and $L_1$ and the incoming edge of each $v_{x_i}$ with $x_i \in X^*$.
	We have two possible live graphs: $H_1$ if the edge between $v_1$ and $v_2$ is active, $H_2$ otherwise.
	$\Delta_\MoV^+(E', H_1)=2n^2$ and $\Delta_\MoV^+(E', H_2)=2 (-n^2+h+1-h)=-2n^2+2$.
	Thus, $\Delta_\MoV^+( E')=1$.
	
	\textbf{Only if}.
	Suppose we do not add the edge between $v_2$ and $L_1$. In this case, \MoV\ does not change and $\Delta_\MoV^+=0$.
	Thus, the edge between $v_2$ and $L_1$ must belong the set of added edges $E^*$.
	We have two possible live graphs: $H_1$ if the edge between $v_1$ and $v_2$ is active, $H_2$ otherwise.
	Assume by contradiction that in $H_1$ not all lines $L_{z_i}$ vote for $c_0$.
	This implies that $\Delta_\MoV^+(E^*) \le \frac{2[n(n-1)]-2[n^2+h+1]}{2}<0$.
	Hence, in $H_1$, all line $L_z$ must be active and $\Delta_\MoV^+(E^*,H_1) = 2n^2$.
	In $H_2$, $\Delta_\MoV^+(E^*)$ must be larger than $-2n^2+1$ and at most $h$ nodes $v_{z_i}$ can be active, \emph{i.e.}, at most $h$ edges from $L_1$ to voters $v_{x_i}$ can be added. Thus, there exists a set cover of size $h$, leading to a contradiction.
\end{proof}

\section{Reoptimization Complexity}
\label{sec:reoptimization}
In this section, we show a form of robustness of our hardness results. Specifically, we consider the following reoptimization setting.
\begin{definition}
	An election control through social influence by seeding reoptimization problem $ReOpt(I,S^*,e,o)$  is defined as follows.
	\begin{itemize}
		\item{INPUT: $(I,S^*,e,o)$, where $I$ is an instance of election control, $S^*$ is an optimal solution to $I$, $e \in  V \bigtimes V$ is an edge and $o \in [0,1]$ is a probability.
		}
		\item{OUTPUT: the optimal solution to $I_1$, where $I_1$ is obtained changing the probability of edge $e$ to $o$ in the instance $I$.
		}
	\end{itemize}
\end{definition}

We prove that Theorem \ref{thm:inapprox} can be extended to prove the hardness of reoptimization.

\begin{theorem}
	For any $\rho > 0$ even depending on the size of the problem, there is not any poly-time algorithm returning a $\rho$-approximation to the reoptimization problem for ECS, unless $\mathsf{P} = \mathsf{NP}$.
\end{theorem}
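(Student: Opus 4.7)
The plan is to extend the reduction from \textsc{Set-Cover} used in Theorem~\ref{thm:inapprox} so that it produces a reoptimization input $(I, S^*, e, o)$ whose approximate resolution would decide \textsc{Set-Cover}. Given a \textsc{Set-Cover} instance $(X,N,h)$, I would construct $I$ by taking the three-component graph $J = G_1 \cup G_2 \cup G_3$ appearing in the proof of Theorem~\ref{thm:inapprox} and attaching a disjoint ``bonus gadget'' connected to the rest of the instance only through a single critical edge $e$. The gadget consists of a distinguished entry node $w$ (with preferences making her vote for $c_0$ initially) together with a large pool of $K \gg |V(J)|$ swing voters, calibrated so that, when $e$ is live with probability $1$ as in $I$, seeding $w$ with a single short message produces a cascade that flips every swing voter in favor of $c_0$, yielding a $\Delta_\MoV$ of order $K$. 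By choosing $K$ larger than any increment that the $J$-part could possibly deliver, the optimum $S^*$ of $I$ is simply ``seed $w$ with the gadget-activating message'', which is trivially exhibitable in polynomial time.

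The reoptimization query then sets $o=0$ on $e$, producing $I_1$ in which the gadget is severed: the cascade originating from $w$ can no longer reach the swing voters, so the gadget contribution to $\Delta_\MoV$ collapses. I would further calibrate the gadget so that any alternative seed placement inside the gadget is equally unable to flip the swing voters under budget $B$---for instance, by giving each swing voter and every intermediate node a score gap to $c_0$ strictly larger than $B$, so that a flip requires the simultaneous accumulation of score increments delivered along the specific two-path combination that is unlocked only when $e$ is live (the $w$-through-$e$ path being one of them). Under this calibration, any positive $\Delta_\MoV$ in $I_1$ must come from manipulating the embedded $J$-part, and the optimum of $I_1$ equals $1$ if $(X,N,h)$ admits a cover of size $h$ and $0$ otherwise, exactly as in Theorem~\ref{thm:inapprox}.

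The conclusion then follows the same integrality argument as in Theorem~\ref{thm:inapprox}: since $\Delta_\MoV$ is always integer-valued, any $\rho$-approximate solution with $\rho>0$ applied to $I_1$ must attain value at least $1$ precisely when the optimum is at least $1$, so a polynomial-time $\rho$-approximation for the reoptimization problem would decide \textsc{Set-Cover}, contradicting $\mathsf{P}\neq\mathsf{NP}$.

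The main obstacle I foresee lies in the gadget-calibration step, specifically in ruling out alternative seed placements that could replicate in $I_1$ the cascade that $w$ induces in $I$. Since the algorithm is free to seed any node of the graph, the gadget can be genuinely ``disabled'' after cutting $e$ only if no single seed, wherever placed, can by itself propagate enough score increment through the remaining edges to flip the swing voters---hence the requirement for two independent propagation paths combined with score gaps strictly greater than $B$. This is the same $\delta > B$ bookkeeping that already drives the inapproximability in Theorem~\ref{thm:inapprox}, and it controls the gadget design at the price of a slightly larger but still polynomial-size instance.
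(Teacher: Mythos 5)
Your overall architecture (augment the hard instance of Theorem~\ref{thm:inapprox} with a structure hanging off one critical edge $e$, so that the optimum of $I$ is trivially exhibitable and zeroing $e$ recovers the hard instance) is the right one, and it is also the paper's; but your specific gadget cannot be calibrated as you describe, and the failure is in the model, not in bookkeeping. In the MI-IC model each voter is activated by each message $m_s$ at most once, so the total increment she can receive on $c_0$ is $\sum_{m_s \in R} m_s(0) \le \sum_{s}|m_s| = |M| \le B$, \emph{independently of how many live paths deliver the messages}. Multiple propagation paths do not stack score increments. Consequently, if you give the swing voters a score gap to $c_0$ strictly larger than $B$, they can never be flipped to $c_0$ --- not even in $I$ with $e$ live --- so the gadget never produces the $\Delta_{\MoV}$ of order $K$ that your construction needs; and if instead the gap is at most $B$ (so that seeding $w$ works in $I$), then in $I_1$ an algorithm can simply seed one swing voter directly with the same message and flip her, giving $\Delta_{\MoV} \ge 1$ even when no set cover exists. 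That destroys the ``$0$ versus $\ge 1$'' gap on which your final integrality argument rests, and since the theorem must rule out \emph{every} $\rho>0$, no positive baseline in the no-instance can be tolerated. A second, unaddressed problem is that $S^*$ must be an \emph{optimal} solution of $I$ supplied as part of the reoptimization input: with a leftover budget after seeding $w$, the true optimum of $I$ may be ``seed $w$ plus the best manipulation of the $J$-part,'' which you cannot exhibit in polynomial time without solving \textsc{Set-Cover}; and because $\MoV$ is computed globally, a gadget that moves $K \gg |V(J)|$ votes changes which candidate is the runner-up and hence whether the $J$-part contributes anything at all, so the two parts do not decouple the way your sketch assumes.

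The paper sidesteps all of this by \emph{not} introducing a high-value side component. It keeps the instance at the scale of Theorem~\ref{thm:inapprox} and instead inserts a single shortcut node $v^*$ \emph{inside} the set-cover gadget $G_1$, with edges to all element nodes $v_z$ and one critical edge into the gadget; in $I$ the shortcut lets a trivially describable seed set (which includes $v^*$) cover all elements without solving \textsc{Set-Cover}, so $S^*$ is exhibitable in polynomial time, while zeroing the critical edge in the reoptimization query restores exactly the original reduction, with optimum $1$ or $0$ according to the existence of a cover. If you want to salvage your version, you would have to redesign the gadget so that its entire effect in $I$ is mediated by reaching the swing voters \emph{only} through $e$, with every alternative access point either absent or protected by a gap exceeding $B$ in the score difference that actually has to be overcome (which, for flips not involving $c_0$, is the gap between the two non-$c_0$ candidates, not the gap to $c_0$), and so that the optimum of $I$ provably does not also require optimizing the embedded $J$-part. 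That is essentially what the paper's in-gadget shortcut achieves for free.
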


\begin{proof}
	Consider the reduction in Theorem \ref{thm:inapprox}. We build an instance $I$ of ECS in which we replace the set cover instance in $G_1$ with the following graph. For each $z_i \in Z$, there is a node $v_{z_i}$. For each $x_i \in X$, we add two nodes $v_{x_{i,1}}$ and $v_{x_{i,2}}$, and an edge from $v_{x_{i,1}}$ to $v_{x_{i,2}}$. Moreover, we add an edge from $v_{x_{i,2}}$ to all $v_{z}$, $z \in x_i$. Finally, we add a node $v^*$ with an edge from $v^*$ to all nodes $v_z$, $z \in Z$, and an edge from $v^*$ to $v_{x_{1,2}}$ (or any node $x_{i,2}$). We modify the graphs $G_2$ and $G_3$ in such a way that $c_1$ needs the votes of $2h+n$ nodes of $G_1$.
	Let $S^*$ be the optimal solution of $I$ that includes seeds $v_z$, any $h$ nodes $v_{x_{i,1}}$ and a node in $G_2$. 
	Consider the problem $ReOpt(I,S^*,(v^*,v_{x_1}),0)$, its optimal solution is the optimal solution of the optimization problem over $I$. If $ReOpt(I,S^*,(v^*,v_{x_1}),0)$ can be approximated in polynomial time, then set cover can be solved in polynomial time.
\end{proof}

Similarly, we consider the reoptimization problems for edge removal or edge addition.

\begin{definition}
	An election control trough social influence by edge removal or addition reoptimization problem $ReOpt(I,E^*,e,o)$  is defined as follows.
	\begin{itemize}
		\item{INPUT: $(I,E^*,e,o)$, where $I$ is an instance of election control, $E^*$ is an optimal solution to $I$, $e \in  V \bigtimes V$ is an edge and $o \in [0,1]$ is a probability.
		}
		\item{OUTPUT: the optimal solution to $I_1$, where $I_1$ is obtained changing the probability of edge $e$ to $o$ in the instance $I$.
	}
\end{itemize}
\end{definition}
The following theorem shows a general result, that extends the hardness of the optimization problem to its reoptimization variant whenever a simple condition is satisfied.
\begin{theorem}\label{thm:reopt}
	For the set of election control problems by edge removal or addition with $\max_v \{\max_{c_i} \pi_v(i) - \min_{c_i} \pi_v(i)\}= O(poly(size(I)))$, reoptimization is as hard as optimization.
\end{theorem}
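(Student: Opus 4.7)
The plan is a Turing reduction that lifts each of the hardness results of Theorems~\ref{thm:ER_single_message}--\ref{thm:EA_multiple_messages} to the reoptimization setting. Given any hard instance $I_{\text{hard}}$ of ECER or ECEA satisfying the polynomial-score-distance hypothesis, I will construct in polynomial time a reoptimization input $(I, E^*, e, o)$ such that an algorithm for the reoptimization problem, invoked on this input, returns an optimal solution to $I_{\text{hard}}$ after a trivial post-processing step.

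The high-level construction mirrors the strategy already used earlier in this section for seeding. Starting from $I_{\text{hard}}$, I append to its graph a polynomial-size \emph{shortcut gadget} $\mathcal{G}$ anchored by a distinguished toggle edge $e = (s, h)$ of probability $1$, where $s$ is a seed of $I_{\text{hard}}$ and $h$ is the entry point of $\mathcal{G}$. The gadget is engineered so that, with $e$ active, (a) its cascade has an overwhelmingly predictable effect on the tallies of $c_0$ and its competitors, and (b) an optimal manipulation $E^*$ of $I$ can be explicitly described and verified in polynomial time. Depending on the problem variant, $E^*$ will be either $\emptyset$ (when the gadget already saturates $\MoV$) or a polynomially-describable set of edges induced by $\mathcal{G}$ (when blocking or exploiting the gadget is the clearly optimal move). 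The polynomial-score-distance assumption enters here to guarantee that $\mathcal{G}$ can be built with polynomially many extra nodes and edges: since any voter can be swung with polynomially many messages, a polynomial-size cascade structure is enough to dictate the outcome in $I$.

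The reduction then calls the reoptimization oracle on $(I, E^*, e, 0)$. When $p(e)$ is set to $0$, the gadget $\mathcal{G}$ becomes unreachable from any active seed, so its voters keep their initial preferences and every edge inside $\mathcal{G}$ is traversed by no message. Hence the gadget contributes only a constant shift to $\MoV$ and all manipulations involving gadget edges are vacuous in $I_1$. The optimization landscape of $I_1$ thus coincides, up to this constant additive shift, with that of $I_{\text{hard}}$ on the original edges, so the oracle's answer, once restricted to $E_0$, is an optimum of $I_{\text{hard}}$; since $\Delta_{\MoV}$ is invariant under the constant shift, approximation ratios are preserved too. Because the whole construction is polynomial-time, any polynomial-time (exact or approximation) algorithm for reoptimization would yield one for the original optimization problem, contradicting Theorems~\ref{thm:ER_single_message}--\ref{thm:EA_multiple_messages}.

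The main obstacle is engineering $\mathcal{G}$ so as to simultaneously (i) force $E^*$ to be polynomial-time-computable for $I$ despite the complex $I_{\text{hard}}$ structure sitting alongside $\mathcal{G}$, and (ii) render $\mathcal{G}$ inert once $e$ is deactivated. Point (i) is where the polynomial score bound is essential: without it, the gadget could need superpolynomial fan-out to overpower large initial preference gaps, breaking the polynomial-time reduction. To make (i) concrete, $\mathcal{G}$ will be a tree of probability-$1$ edges rooted at $h$, broadcasting the positive-$c_0$ message carried along $e$ to every voter, and with enough replicated branches that the cumulative score injected into each voter surpasses her polynomially-bounded preference gap, making every voter switch to $c_0$ in $I$. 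Point (ii) is automatic for ECER since the manipulator can only remove edges, but for ECEA one must additionally decree that all other non-existing edges from the seeds of $I_{\text{hard}}$ into $\mathcal{G}$ have probability $0$ when added, so that the manipulator cannot create a parallel path bypassing the disabled $e$, just as is done in the proofs of Theorems~\ref{thm:EA_single_message} and \ref{thm:EA_multiple_messages}. These ingredients suffice to transplant each of the four hardness constructions into the reoptimization framework, yielding the claimed equivalence.
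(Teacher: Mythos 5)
Your overall strategy --- append a gadget reachable only through a single toggle edge, arrange that the optimum of the augmented instance is trivially describable, and then query reoptimization with the toggle edge's probability set to $0$ so that the gadget becomes inert and the original hard instance re-emerges --- is exactly the paper's strategy, and your observations about the role of the polynomial score bound and about forbidding bypass edges in the addition variant are both on target. However, the concrete gadget you propose does not work under the paper's diffusion and preference-revision model, and this is a genuine gap rather than a cosmetic one. First, your gadget contains no seeds of its own: it broadcasts ``the positive-$c_0$ message carried along $e$'' from an existing seed $s$ of $I_{\text{hard}}$. In the hard ECER/ECEA instances the seeds do not send uniformly $c_0$-positive messages (e.g., $q_2=1$ in Theorem~\ref{thm:ER_single_message}, or the mixed messages of Theorem~\ref{thm:ER_multiple_messages}), so flooding $m_s$ to every voter does not make everyone vote for $c_0$, and the optimum of $I$ is then neither $\emptyset$ nor obviously polynomial-time computable --- your fallback of a ``polynomially-describable $E^*$ when blocking or exploiting the gadget is clearly optimal'' is asserted, not proved, and establishing it would require solving an instance at least as hard as $I_{\text{hard}}$ itself. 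Second, and more fundamentally, ``replicated branches'' cannot amplify a single message: in the MI-IC model a voter is activated by each message $m_s$ at most once, and the score update is $\pi_v(i)\leftarrow\pi_v(i)+\sum_{m_s\in R}m_s(i)$ over the \emph{set} $R$ of received messages, so delivering the same message along many tree branches injects $m_s(0)$ into each voter exactly once, not once per branch. A single unit message therefore cannot overcome a preference gap of $d\ge 2$, no matter how large the fan-out of your tree.

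The paper's gadget repairs both defects: it introduces $d+1$ \emph{fresh seed nodes}, each sending its own $+1$ message on $c_0$, funnels all of them through one internal bottleneck edge $(v_1^*,v_2^*)$, and then fans out to every node of $G$; since every voter then receives $d+1$ distinct positive messages on $c_0$ and no initial gap exceeds $d$, everyone votes for $c_0$, \MoV\ is already maximal, and $E^*=\emptyset$ is verifiably optimal. To fix your argument, replace the single-message broadcast tree with such a bank of $d+1$ distinct seeds (this is precisely where the hypothesis $\max_v \{\max_{c_i}\pi_v(i)-\min_{c_i}\pi_v(i)\}=O(poly(size(I)))$ is used to keep the gadget polynomial), and make the toggle edge the internal bottleneck of the gadget rather than an edge leaving a seed of $I_{\text{hard}}$.
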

\begin{proof}
	Consider an instance $I$ of election control with $G=(V,E,p)$.
	By assumption $d=\max_v [\max_{c_i} \pi_v(i) - \min_{c_i} \pi_v(i)]= O(poly(size(I)))$, \emph{i.e.}, $d$ is polynomially upper bounded in the instance size.
	We build a graph $G_1$ with $d+1$ nodes $\{v_{i}\}, i \in \{0,\dots,d\}$ with seeds $q_0=1$.
	We add a node $v^*_1$ with an edge from each node in $v_i$ to $v^*_1$.
	We add a node $v^*_2$ with an edge from $v_1^*$ to $v_2^*$.
	Moreover, we add an edge from $v_2^*$ to any node of $G$.
	In edge addition instances, we set $p=0$ for all (non-existing) edges among $v_i$ and $G$.
	Finally, we set the preferences of $v_0$ and $v_i$ s.t.~they will vote for $c_0$, \emph{i.e.}, $\pi(0)>\pi(i)$ holds for every $c_i \neq c_0$.
	
	Notice that, since all nodes in $G$ receive $d+1$ positive messages on $c_0$ and $c_0$ is loosing by at most $d$ in each preferences, all nodes will vote for $c_0$.
	Thus the optimal solution removes/adds no edges.
	Consider the problem $ReOpt(I,\emptyset,(v_1^*,v_2^*),0)$, its optimal solution is the optimal solution of the optimization problem over $I$.
\end{proof}
In the reductions used in the proofs of all the theorems provided in the previous sections, $\max_v \{\max_{c_i} \pi_v(i) - \min_{c_i} \pi_v(i)\}$ is constant. Hence, as a corollary of Theorem~\ref{thm:reopt}, we have that all our hardness results on optimization problems extend to their reoptimization variants.

\section{Conclusions and Future Work}
\label{sec:conclusions}
In this work, we analyze the problem of manipulating the result of an election (a.k.a. election control trough social influence) by some forms of manipulations. More precisely, we investigate both the case in which the manipulator can make seeding and the case in which the manipulator can alter the network by removing or adding edges. We prove a tight characterization of the settings in which computing an approximation to the best manipulation can be infeasible or feasible. In particular, our results show that, except for trivial classes of instances, the manipulation is hard, even when one accepts an approximation of the margin of victory. In particular, we provide a comprehensive study, investigating various subcases to identify the minimal subsets of instances for which the election control problem is hard. We show that manipulation by seeding is hard even with basic instances such as line graphs and that the most known greedy algorithms for social influence do not provide any approximation factor even with basic graphs. In the case of edge removal or addition, we also show that, even when the manipulator has an unlimited budget, the problem is hard. Interestingly, we derive a similar result also to influence maximization/minimization, as this problem was unexplored so far. Finally, we show that our hardness results hold in a reoptimization variant.  This bundle of results is, therefore, positive for the democracy, as it is unlikely that a manipulator can effectively manipulate an election.

While we provided a poly-time constant-approximation algorithm in many settings, we did not try to optimize the approximation ratio. Hence, it would be interesting to design algorithms that can  improve on ours, and close the small gaps existing among our results. Furthermore, it would be interesting to analyze other generalizations of our model, \emph{e.g.}, different models for information diffusion and time-evolving networks. 

\bibliography{elections}

\appendix
\section{Omitted Proofs}
\label{sec:appendix}

\propositionone*

 \begin{proof}
 Consider the graph given in Figure~\ref{fig:example1}: it is composed of three graphs $G_1,G_2,G_3$, and both $G_2$ and $G_3$ are composed of two subgraphs, respectively, $G_{2,1},G_{2,2}$ and $G_{3,1},G_{3,2}$. Furthermore, the graph is undirected and each node has a degree of at most $2$. The influence probability associated with every edge is one.
 According to the preference ranks of the nodes, candidate $c_2$ collects 5 votes, while candidates $c_1$ and $c_0$ gather $7$ votes each. Thus, the actual margin of victory of $c_0$ is equal to zero. Suppose $B=2$.

\begin{figure}[!htb]
	\centering
	\includegraphics[width=0.8\columnwidth]{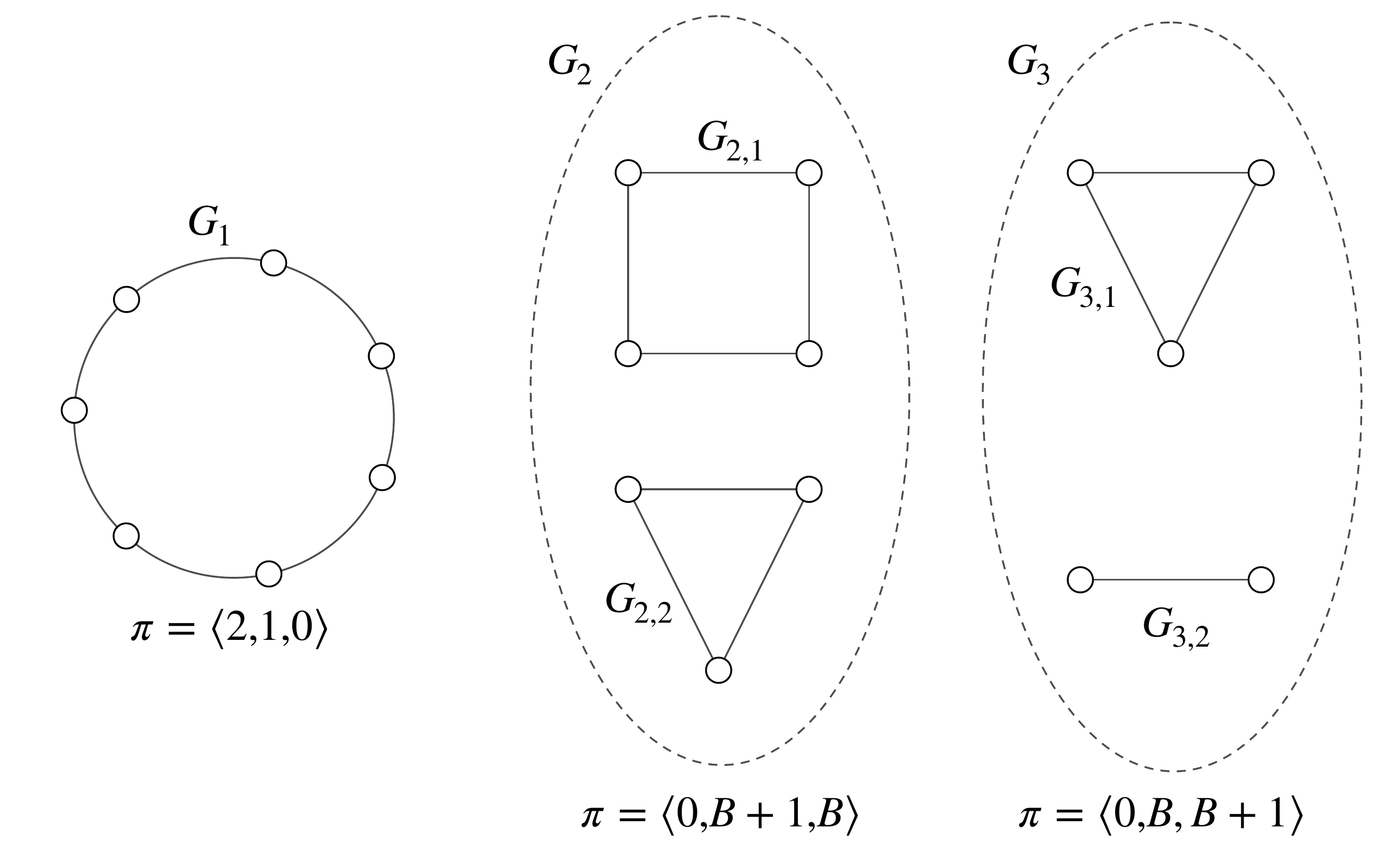}
	\caption{example of a small undirected network in which the greedy algorithm performs badly.}
	\label{fig:example1}
\end{figure}

Since, for all the nodes of $G_2$ and $G_3$, the score difference between $c_0$ and the most-preferred candidate is larger than the budget $B$ and $B=2$, it is clear that $c_0$ cannot get any further vote. Then, to increase the margin of victory of $c_0$, it is necessary that $c_2$ obtains some of the $c_1$'s votes. The optimal solution $(S^*,M^*)$ is that, while $c_0$ keeps $7$ votes, $c_1$ and $c_2$ collect $6$ votes each, providing $\Expec_H \left[\MoV(S^*,M^*,E, H)\right] = 1$. This can be obtained by setting $m_v(2)=1$ for a single $v \in G_{2,2}$ and $m_v(1)=1$ for a single $v \in G_{3,2}$.

However, this solution cannot be found by any algorithm adopting the greedy approach described above. Indeed, we next show that $\mathcal{F}(\emptyset,())$ is empty, and thus the algorithm never adds any seed in $S$: clearly, $\mathcal{F}(\emptyset,())$ cannot contain any pair $(s,m_s)$ that increases the number of votes of $c_0$; moreover,
by seeding a node in $G_1$ the margin of victory clearly cannot increase (it either remains unchanged, or it decreases if $c_0$ ceases to be the best ranked candidate); similarly, by seeding a node in $G_2$, either the margin of victory reduces (if $c_2$ passes $c_1$) or remains unchanged; finally, by seeding one node in $G_3$ either the margin of victory reduces (if $c_1$ passes $c_2$) or remains unchanged.

Hence, the greedy solution results in a zero margin of victory, and thus it cannot be a $\rho$-approximation.
\end{proof}

\propositiontwo*

 \begin{proof}
Consider the graph in Figure~\ref{fig:example2}. It is composed of 5 subgraphs $A_1,A_2,A_3,A_4,A_5$, where $A_1$ is a directed line of $7\, r$ nodes, while $A_2,A_3,A_4,A_5$ are directed trees in which there is a root and the remaining nodes are children of the root. The specific number of nodes of every subgraph is reported in the figure. The influence probability associated with every directed edge is one. Let $r > \frac{2}{\rho}$.  Observe that $|V| = 19\,r$ and therefore $\rho > \frac{38}{|V|}$.
  According to the preference ranks of the nodes, the candidate $c_2$ collects $5\,r$ votes, while candidates $c_1$ and $c_0$ gather $7\,r$ votes each. Then, the actual margin of victory $c_0$ is equal to zero. Suppose $B=2$.

 \begin{figure}[!htb]
	\centering
	\includegraphics[width=0.7\columnwidth]{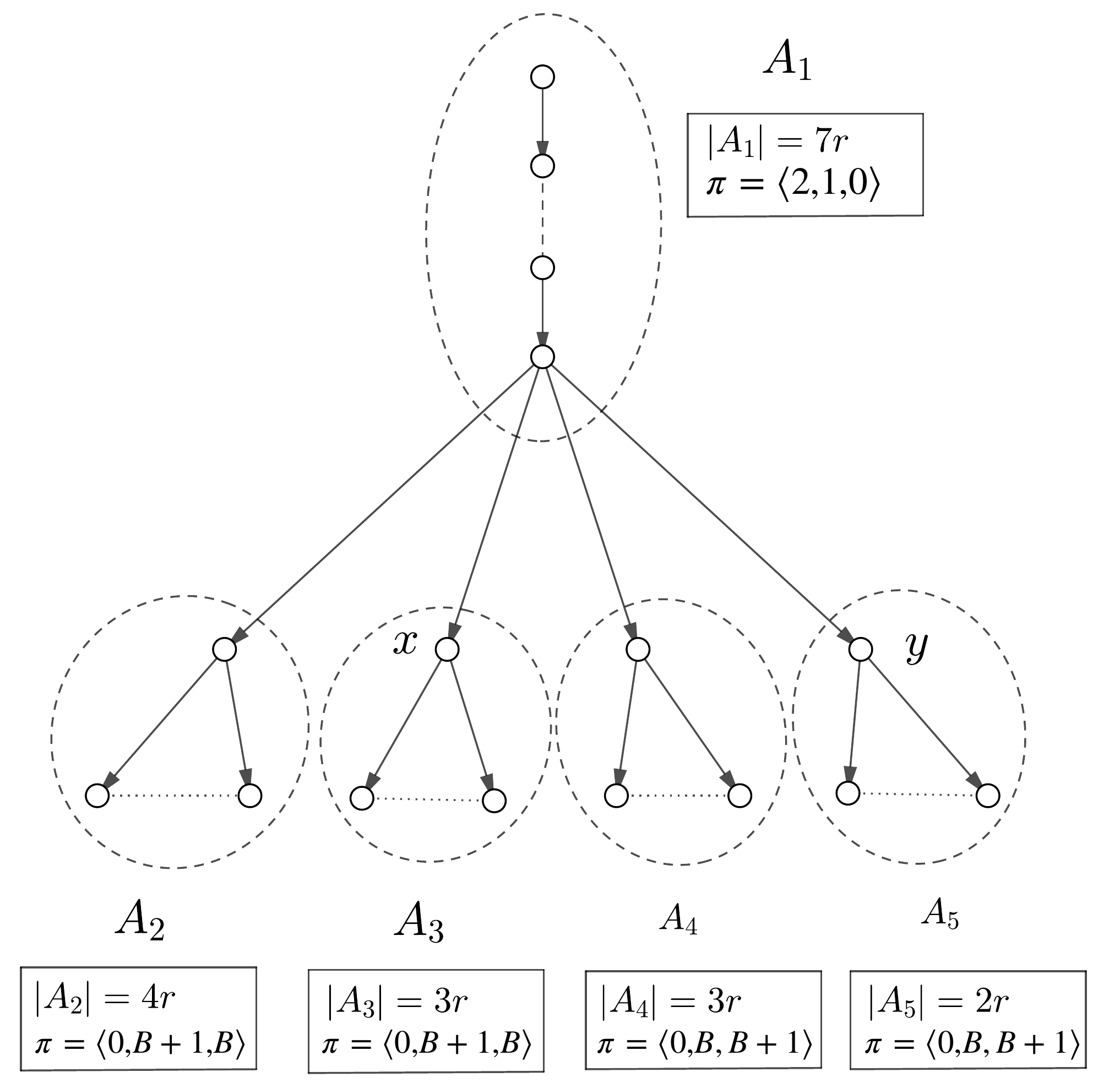}
	\caption{Example of a tree in which the greedy algorithm performs badly.}
	\label{fig:example2}
 \end{figure}

Due to the budget constraint, $c_0$ cannot get any further vote, and thus, to increase the margin of victory of $c_0$, we need that $c_2$ gets some of the $c_1$'s votes. The optimal solution $(S^*,M^*)$ is then obtained by setting that $S^*=\{x, y\}$ (see in the figure which nodes are labeled as $x$ and $y$) has $m_x$ with $m_x(2)=1$ and $m_y$ with $m_y(1)=1$, and the expected margin of victory is $r$.
 However, this solution cannot be found by any algorithm adopting the greedy approach. Indeed, we have that no pair $(s,m_s)$ can increase the number of votes of $c_0$; moreover, by seeding a node in $A_1$ the margin of victory clearly cannot increase (it either remains unchanged, or it decreases if $c_0$ ceases to be the best ranked candidate); by seeding a node in $A_4$ or in $A_5$, either the margin of victory decreases (if $c_2$ passes $c_1$) or remains unchanged; finally, by seeding the root of $A_1$ or the root of $A_2$, either the margin of victory decreases (if $c_2$ passes $c_1$) or remains unchanged. Hence, the only action that the greedy algorithm can take would be to select as seed either a leaf of $A_1$, or a leaf of $A_2$ and letting them to change its vote from $c_1$ to $c_2$. By repeating the argument, we have that the two seeds selected by a greedy algorithm must be two leaves from $A_1 \cup A_2$. So, the expected margin of victory is $2$, and the approximation factor is $\frac{2}{r} < \rho$.
 \end{proof}

\theoremtwo*
 \begin{proof}
	
	We reduce from \textsc{Partition}. This problem, given a set of positive integers $A = \left\lbrace a_1, a_2, ... , a_n \right\rbrace $, asks if there is a subset $K \subset A$ whose sum is equal to $t$, where $t= \sum_{a \in A}{a}/2$. It easy to see that the \NPHard ness of this problem holds even imposing that the cardinality of $S$ is $k\le n/2$.
	
	\begin{figure}
		\centering
		\includegraphics[width=0.7\linewidth]{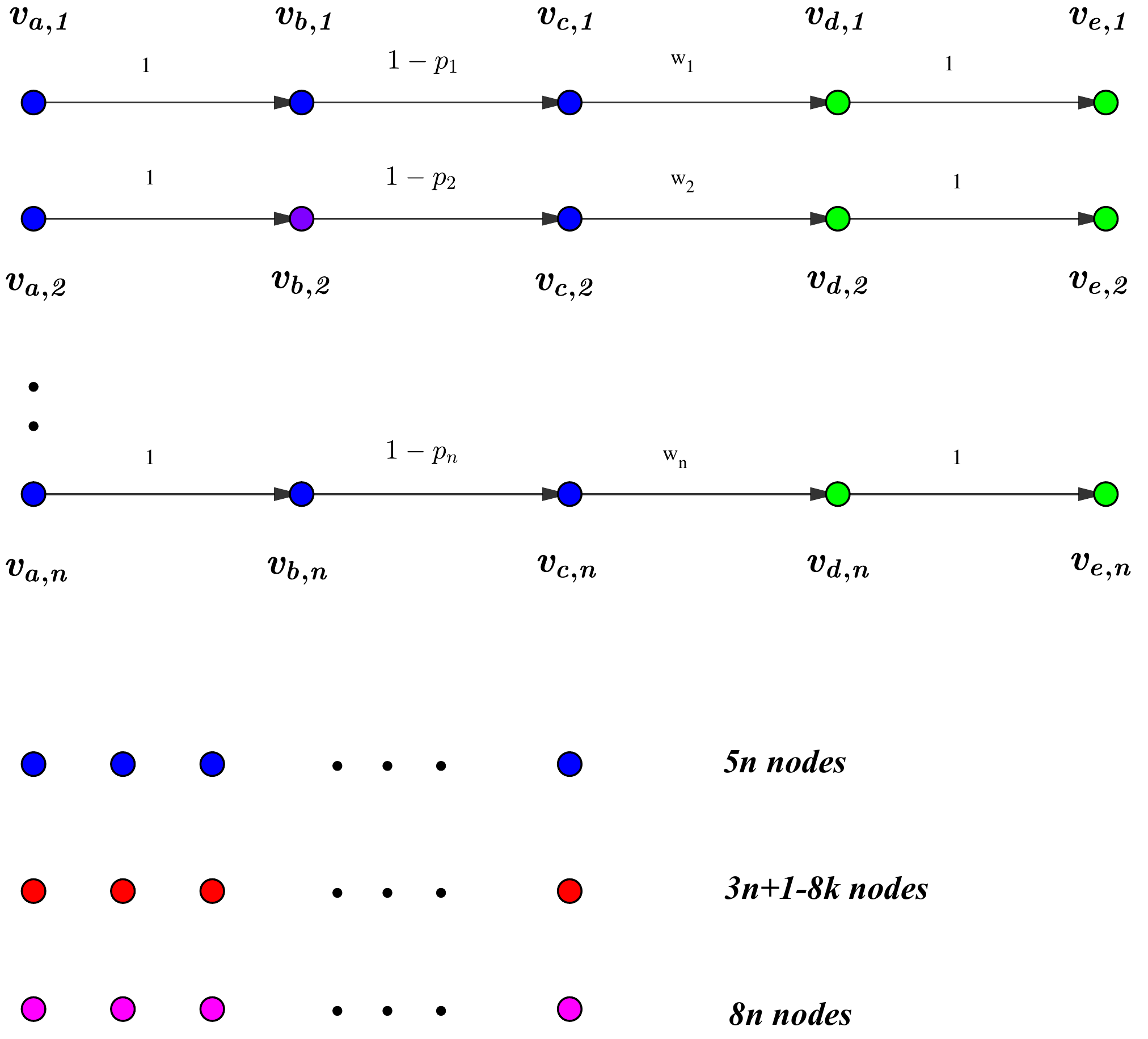}
		\caption{Structure of the election control problem used in the proof of  Theorem \ref{thm:line}. Blue nodes represent voters with preference rank $\langle 0,B+2,B+1,1\rangle$, green nodes represent voters with preference rank $\langle 0,B,B+1,B+2\rangle$, while red nodes and pink nodes have preference rank, respectively, $\langle 0,B+1,B+2,B\rangle$, and $\langle 4,3,2,1\rangle$.}
		\label{fig:linea}
	\end{figure} 
	
	Hence, given a set $A$ of positive integers, a target $t$ and a number $k$ as the input of \textsc{Partition}, we build a graph as in Figure \ref{fig:linea}. There are four candidates $c_0, c_1, c_2, c_3$ and budget $B=k$. Some voters are isolated nodes, while the others are arranged in $n$ independent lines.~\footnote{The graph can be seen as a single line with zero probability links going from $v_{e,j}$ to $v_{a,j+1}$, for $j = 1,2,\ldots,n-1$, and connecting the isolated nodes.} There are $5\,n$ isolated nodes with preference rank $\langle 0,B+2,B+1,B\rangle$, $8\,n+1-8\,k$ isolated nodes with preference rank $\langle 0,B+1,B+2,B\rangle$ and $3\,n$ nodes with preference rank $\langle 4,3,2,1\rangle$. Each line is composed by 5 nodes. For each line $i \in \{1, 2, \dots , n\}$, the first three nodes have preference rank $\langle 0,B+2,B+1,1\rangle$, the last two nodes have preference rank $\langle 0,B,B+1,B+2\rangle$, the edges connecting $v_{a,i}$ to $v_{b,i}$ and $v_{d,i}$ to $v_{e,i}$ have probability $1$, and the edges $(v_{b,i},v_{c,i})$ and $(v_{c,i},v_{d,i})$ are activated, respectively, with probability $1 - p_i$ and $w_i$, where:
	\begin{center}
		$ \displaystyle
		p_i = \frac{a_{i}}{4t} \: \:, \quad
		w_i = \frac{2^{-4p_i}}{(1-p_i)\, (2 \ln2)^{1/k}}.
		$
	\end{center}

	Note that, since $a_i < t, \forall i \in \{ 1, \dots, n\}$, then $p_i < \frac{1}{4}, \forall i$. It is easy to see that, since $p_i < \frac{1}{4}$, it holds $w_i < 1, \forall i$.
	$c_1$ collects $8\,n $ votes, $c_2$ collects $8\,n + 1 - 8\,k $ votes, $c_3$ collects $2\,n$ votes, and $c_0$ collects $8\,n$ votes. Hence, $\MoV(\emptyset,(),E,H)=0$. 
	
	Since $c_0$ cannot gain votes, the only way to increase the margin of victory is making $c_1$ loose some of her votes in favor of the other candidates. 
	First, we prove that the optimal solution is given by a set of $k$ nodes $v_{a,i}$ with messages $m_s(2)=1$ (or equivalently $m_s(1)=-1$).
	Consider any seed set $S'$ of $k$ nodes $v_{a,i}$ and let $K'=\{i:v_{a,i}\in S'\}$. Notice that $c_1$ has more votes than $c_2$ unless all the edges among $v_{b,i}$, $v_{c,1}$ and $v_{d,i}$ are active for all $i \in K'$. This happens with probability  $\prod_{i \in K'} (1 - p_i) w_i$.  In this case $c_1$ has $8\,n-3\,k$ votes and $c_2$ has $8\,n+1-3\,k$.
	When at least an edge is not active, the \MoV\ is given by the number of nodes lost by $c_1$.
	Hence, the expected \MoV \ is 
	
	\begin{align*}
	\Expec_H[\Delta_\MoV^S (S',M,H)]  & = 2 k + \prod_{i \in K'} (1- p_i) - \prod_{i \in K'} (1 - p_i)\,w_i \\ & = 3 k - \prod_{i \in K'} p_i - \prod_{i \in K'} (1 - p_i)\,w_i.
	\end{align*} 
	
	Suppose the optimal solution $S^*$ takes some seeds $S^*_1 \subseteq S^*$ not in the set $\{v_{a,i}\}$. Then, for each seed $s \in S_1^*$, the expected number of votes of $c_1$ decreases of at most $1+ (1-p_i)=2-p_i$.
	Define the set $R(z)$ as the set of the $z$ smallest $p_i$.
	Consider the set $R(k)$ and the set of seeds $S''=\{v_{a,i}\}_{i \in R(k)}$. $\Expec_H[\Delta_\MoV (S'',M,H)] =3 k - \prod_{i \in R(k)}  p_i - \prod_{i \in R(k)} (1 - p_i)\,w_i> 3k-1-\prod_{i \in R(k)}p_i$, where the last inequality follows from $\prod_{i \in R(k)} (1 - p_i)\,w_i < 1$.
	It follows that $\Expec_H[\Delta_\MoV^S (S^*,M,H)]\le |S^*_1|(2-p_i) + \sum_{i \in R(|S^*| -|S^*_1|)} (3- p_i)< \Expec_H[\Delta_\MoV^S (S'',M,H)$.
	
	We proved that in the optimal solution $S^*$, all the seeds are placed at the beginning of the line. Let $K^*=\{i: v_{a,i} \in S^* \}$ and let $x = \sum_{i \in K^*} p_i$. Then the derivative of $\Expec_H[\Delta_\MoV^S(S^*,M^*,H)]$ with respect to $x$ is equal to $- 1 + 2^{-4x+1}$. 
	This means that the value of $x$ that maximizes the margin of victory is $\frac{1}{4}$, which is equivalent to $\sum_{i \in K^*} a_i = t$. This holds if and only if we reduce from a "yes" instance of partition. Hence a polynomial time algorithm for the ECS problem, would allow us to solve \textsc{Partition} in polinomial time, leading to a contradiction unless \Poly=\NP.\end{proof}

\theoremthree*
\begin{proof}
	Given an indirect graph $G$, we add a voter $v_x$ for each vertex of $X$ with preference rank $\langle 1,0 \rangle$.
	For each edge $n$ of $N$, we add a node $v_{n}$ with preference rank $\langle 0,2 \rangle$.
	Finally, for each edge $n=(x,x')$, we add an edge (with probability $1$) from $v_x$ to $v_{n}$ and from $v_{x'}$ to $v_{n}$.
	The budget is $B$ and it is easy to see that, in the optimal solution, the manipulator sends only messages $(1,0)$ or $(0,-1)$.
	
	First, we prove that for every solution $S$ to the ECS problem, we can construct in polynomial time a solution with at least the same $\Delta_\MoV^S$ and all the seeds in the set $\{v_x\}$. Let $S$ be a solution to the ECS problem. Suppose $S$ includes some nodes in $v_n$, where $n=(x,x')$. If both or neither of $v_x$ and $v_{x'}$ are seeds, the seed in $v_n$ is useless and can be removed. Otherwise, only $v_x$ (or only $v_{x'}$) is a seed. In this case, we can replace the seed in $v_n$ with a seed in $v_{x'}$ (or $v_x$) without decreasing $\Delta_\MoV^S$.
	
	It can be observed that the value of $\Delta_\MoV^S$ for a seed set $S'$ that includes only seeds $v_{x}$ is equal to $2|d(X')|$, where $X'= \{x \in X, v_x \in S'\}$ and $d(X')$ is the set of edges connecting nodes in $X'$. In particular, the set of edges that change the vote in favor of $c_0$ is the set of nodes $v_n$, where $n=(x,x')$ such that both $v_x$ and $v_{x'}$ are seeds. This is exactly the number of edges in the subgraph $X'$. Since each new vote for $c_0$ decreases by one the votes for $c_1$, it follows that $\Delta_\MoV^S(S',M^*)=2\,d(X')$. 
	
	Let $A$ be an algorithm for a poly-time  $\rho$-approximation algorithm for ECS that return a solution $S'$. Construct the solution $S''$ that includes only nodes in $v_x$. Let $X'=\{x \in X, v_x \in S''\}$ be the solution to \textsc{DkS} obtained from $S''$. $\frac{d(X')}{d(X^*)}= \frac{\Delta_{\MoV}^S(S'',M'')/2}{\Delta_{\MoV}^S(S^*,M^*)/2} \ge \rho$. Thus from $A$, we can construct a polynomial time $\rho$-approximation algorithm for \textsc{DkS}.\end{proof}


\end{document}